\documentclass[11pt, a4paper]{article}

\usepackage[utf8]{inputenc}
\usepackage[T1]{fontenc}
\usepackage{amsmath, amssymb, amsthm, bm}
\usepackage{algorithm}
\usepackage{algpseudocode}
\usepackage[font={footnotesize}]{caption}

\usepackage[utf8]{inputenc}
\usepackage[T1]{fontenc}
\usepackage{lmodern}

\usepackage[a4paper, margin=1in]{geometry}
\usepackage[mathcal]{euscript}
\usepackage{amsmath}    
\usepackage{amssymb}    
\usepackage{amsthm}     
\usepackage{bbm}        
\usepackage{mathtools}  
\usepackage{physics}    
\usepackage{braket}     
\usepackage{dsfont}     
\usepackage{tikz}
\usepackage{tcolorbox}
\usepackage{relsize}	
\usepackage{xparse}  	
\usepackage{MnSymbol}	
\usepackage{authblk}    
\usepackage{graphicx}   
\usepackage{float}      
\usepackage{subcaption} 
\usepackage{xpatch}
\usepackage{setspace}
\usepackage{enumitem}
\usepackage{float}
\usepackage{etoolbox}
\usepackage{mdframed} 	
\usepackage{multicol}
\usepackage[dvipsnames]{xcolor} 
\usepackage{thmtools}
\usepackage{doi}
\usepackage[backend=biber, sorting=nty, style=alphabetic, maxalphanames=4, maxbibnames=4]{biblatex}

\usepackage{hyperref}
\usepackage[capitalize, nameinlink]{cleveref} 
\hypersetup{
	colorlinks=true,
	linkcolor=MidnightBlue,
	citecolor=ForestGreen,
	urlcolor=RoyalBlue
}

\newtheoremstyle{plainroman} 
{}                          
{}                          
{\normalfont}               
{}                          
{\bfseries}                 
{.}                         
{.5em}                      
{}                          

\declaretheorem[name=Theorem, style=plainroman]{theorem}
\declaretheorem[name=Lemma, sibling=theorem, style=plainroman]{lemma}
\declaretheorem[name=Corollary, sibling=theorem, style=plainroman]{cor}
\declaretheorem[name=Proposition, sibling=theorem, style=plainroman]{prop}

\declaretheorem[name=Remark, sibling=theorem, style=plainroman]{remark}

\declaretheorem[name=Problem, sibling=theorem, style=plainroman]{problem}

\theoremstyle{definition}

\newtcolorbox{dbox}[1][]{colback=green!5!white, colframe=white, boxrule=0pt, 
    left=0pt, right=0pt, boxsep=3pt,
    fonttitle=\bfseries, #1}

\newtcolorbox{tbox}[1][]{colback=blue!5!white, colframe=white, boxrule=0pt, 
    left=0mm, right=0mm, boxsep=3pt,
    fonttitle=\bfseries, #1}

\DeclareMathOperator*{\argmin}{argmin}

\renewcommand{\eqref}[1]{Eq.~(\ref{#1})}
\renewcommand{\tr}{\mathrm{Tr}}

\newcommand{\sumin}{\sum_{i=1}^n}
\newcommand{\sumid}{\sum_{i=1}^d}
\newcommand{\qaq}{\quad \text{and} \quad}
\newcommand{\inn}{{i \in [n]}}

\newcommand{\iv}{^{-1}}
\newcommand{\ihf}{^{-\frac12}}
\newcommand{\hf}{^{\frac12}}
\newcommand{\dg}{^{\dagger}}
\newcommand{\ra}{\rangle}
\newcommand{\la}{\langle}

\newcommand{\Lt}{\left}
\newcommand{\Rt}{\right}

\newcommand{\cle}{\mathcal{E}}

\newcommand{\clp}{\mathcal{P}}

\newcommand{\eps}{\varepsilon}

\newcommand{\rb}{\mathrm{B}}
\newcommand{\rf}{\mathrm{F}}

\newcommand{\rmk}{\mathrm{K}}

\newcommand{\bbr}{\mathbb{R}}

\newcommand{\mfg}{\mathfrak{g}}

\newcommand{\bpq}{\mathrm{B}(P,Q)}
\newcommand{\fpq}{\mathrm{F}(P,Q)}

\newcommand{\pdd}{\mathrm{PD}(d)}
\newcommand{\gld}{\mathrm{GL}(d)}
\newcommand{\ld}{\mathrm{L}(d)}

\newcommand{\hrd}{\mathrm{\mathrm{Hr}}(d)}
\newcommand{\tp}{\mathrm{T}_P }
\newcommand{\tpp}{\mathrm{T}_P \mathrm{Pd}(d)}
\newcommand{\lp}{\mathrm{L}_P }

\newcommand{\abi}{[\alpha I, \beta I]}
\newcommand{\abid}{[\alpha' I, \beta' I]}
\newcommand{\clab}{\mathrm{clip}_{\alpha, \beta}}

\newcommand{\isleq}{\overset{?}{\leq}}
\newcommand{\isgeq}{\overset{?}{\geq}}

\newcommand{\lmax}{\lambda_{\max}}
\newcommand{\lmin}{\lambda_{\min}}

\newcommand{\bris}{\mathrm{B}(R_i, S)}

\renewcommand{\grad}{\overline{\nabla}}

\newcommand{\cmu}{\operatorname{comm}(\mathcal{U})}

\usepackage{booktabs}
\usepackage{pifont}
\newcommand{\cmark}{\ding{51}}
\newcommand{\xmark}{\ding{55}}

\declaretheorem[name=Example, sibling=theorem, style=plainroman]{example}

\let\oldtableofcontents\tableofcontents
\renewcommand{\tableofcontents}{{\hypersetup{linkcolor=black}
\oldtableofcontents}}

\title{\vspace*{-2cm}
Projected Riemannian Gradient Descent for the Bures--Wasserstein Barycenter: Dimension-Independent Linear Convergence at Unit Step Size}
\date{}
\author{A. Afham\thanks{Email: \texttt{afham@nus.edu.sg}}}

\affil{Centre for Quantum Technologies,
National University of Singapore, Singapore}

\begin{document}
\emergencystretch 3em

\maketitle
\vspace*{-0.5cm}


\begin{abstract}
The computation of the Bures--Wasserstein (BW) barycenter of an ensemble of positive definite matrices arises throughout machine learning, optimal transport, and quantum information.
Riemannian gradient descent (RGD) at unit step size---the fixed-point iteration used in practice---converges rapidly, yet existing analyses present a dichotomy: unit-step guarantees carry worst-case exponential dependence on the dimension, while dimension-independent guarantees require small step sizes that forfeit the empirical speed.
We resolve this dichotomy, not by improving the guarantees for unit-step RGD, but by proposing a \textit{Projected} RGD algorithm that achieves dimension-independent linear convergence at unit step size.
The achieved rate, $(1 - \kappa^{-3/2})$, where $\kappa$ is the condition number of the ensemble, also polynomially improves on the best small-step guarantee ($\kappa^{3/2}$ versus $\kappa^{5/2}$ iteration complexity).
The crux is a novel \textit{Projection Lemma}: clipping the eigenvalues of a positive matrix to an interval $[\alpha, \beta]$ is the closed-form, non-expansive (1-Lipschitz) BW-metric projection onto the set $\{S : \alpha I \leq S \leq \beta I\}$---a statement which, unlike its known one-sided counterpart, does not follow from convexity.
The projection is moreover free: it reuses an eigendecomposition the next iteration must perform in any case, so the projected and unprojected iterations cost the same per step.
The same analysis covers the invariant matrix projection problem of Brahmachari et al.\ (2025), whose fixed-point algorithm we identify as unit-step RGD on a totally geodesic submanifold, thereby extending the dimension-independent guarantee to that setting verbatim.
\end{abstract}

\section{Introduction}
We consider two optimization problems on the Bures--Wasserstein (BW) manifold of positive definite matrices.
The first is the computation of the BW barycenter of a probability distribution supported on the positive definite cone~\cite{agueh2011barycenters}.
The problem is prevalent in machine learning and optimal transport---averaging covariance matrices, interpolating Gaussian distributions, and processing positive-definite features in computer vision~\cite{agueh2011barycenters, chewi2025statistical, chewi2020gradient, bhatia2019bures}---and features in quantum information through Bayesian quantum tomography~\cite{Afham2022Quantum}, quantum cryptography \& Shannon theory~\cite{Konig2009,tomamichel2015quantum, beigi2013sandwiched, gupta2015multiplicativity}, and quantum resource theories~\cite{winter2016operational, liu2017new, wei2003geometric}.
We first define the \textit{squared} BW distance between positive semidefinite matrices $P, Q \in \mathrm{PSD}(d)$:
\begin{equation}
    \mathrm{B}(P,Q) \equiv \mathrm{d}^2_{\mathrm{BW}}(P, Q) := \Tr[P + Q] - 2 \mathrm{F}(P, Q), \quad \fpq := \Tr[\sqrt{P \hf Q P \hf}] = \Lt\|P \hf Q \hf \Rt\|_1,
\end{equation}
where $\fpq$ is the \textit{quantum fidelity}~\cite{uhlmann1976transition,jozsa1994fidelity} between $P$ and $Q$, and $\|A\|_1 := \Tr[\sqrt{A^\dagger A}]$ is the trace norm; see~\cref{Sec:Preliminaries} for notation and preliminaries.

\begin{problem} \label{Problem1}
    Let $(R_i)_\inn \subseteq \pdd$ be a collection of $d \times d$ positive definite matrices and $w := (w_1, \ldots, w_n)$ an associated probability vector.
    Compute the \textit{BW barycenter} of the distribution $(R_i, w_i)_\inn$:
    \begin{equation}
        S_\star := \argmin_{S \in \pdd} \frac12 \sumin w_i \bris \equiv \argmin_{S \in \pdd} f_1(S).
    \end{equation}
    Throughout, the ensemble is contained in a \textit{spectral interval}: $\alpha I \leq R_i \leq \beta I$ for all $\inn$, for some $0 < \alpha \leq \beta < \infty$, and the ratio $\kappa := \beta/\alpha$ is the ensemble's condition number. For a finite ensemble one may take $\alpha = \min_{\inn} \lmin(R_i)$ and $\beta = \max_{\inn} \lmax(R_i)$, and the same containment covers distributions of infinite support.
\end{problem}
No closed form for the solution is known outside special cases (e.g., pairwise commuting $R_i$), but the optimum satisfies the fixed-point equation $S_\star = \sumin w_i \sqrt{S\hf_\star R_i S\hf_\star}$~\cite{agueh2011barycenters,bhatia2019bures}.

\begin{figure}[t]
    \centering
    \includegraphics[width=\textwidth]{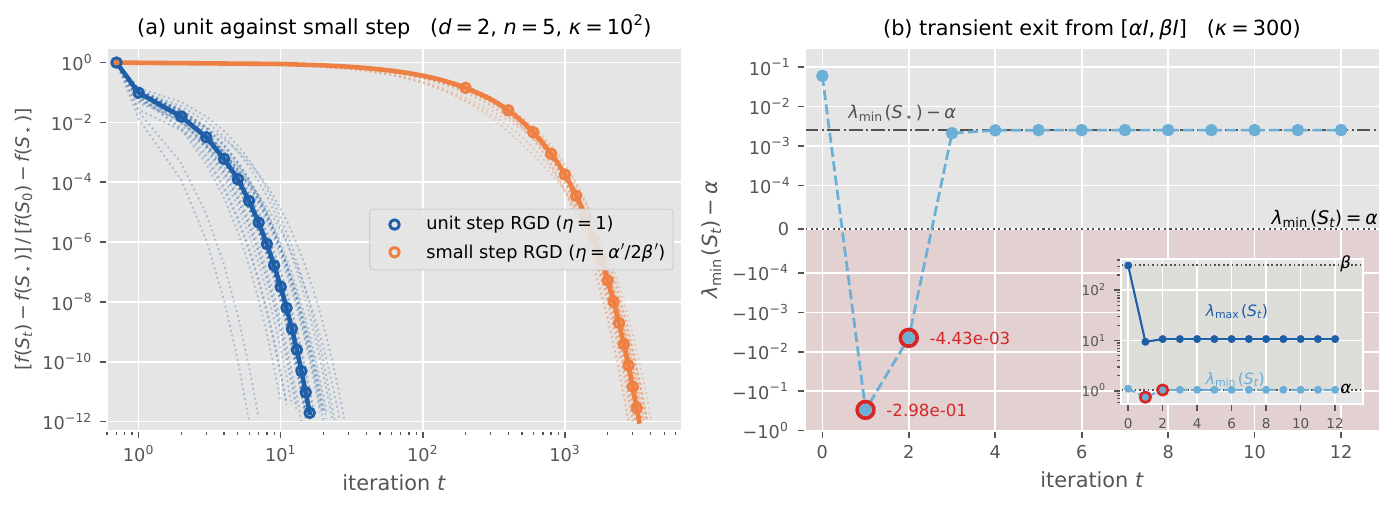}
    \caption{(a) Unit-step against small-step RGD on $30$ instances of the pinned family at $d = 2$, $n = 5$, $\kappa = 100$ (every member with spectrum $\{\alpha, \beta\}$ in a Haar-random eigenbasis), from the initialization $S_0 = \frac{\alpha + \beta}{2} I$: the relative optimality gap per iteration, translucent per instance with the medians opaque.
    At the median, the unit step converges to machine precision in $16$ iterations, the small step $\eta = \alpha'/(2\beta')$ of~\cite{altschuler2021averaging} (marked every $200$th iterate) in about $3300$.
    (b) The transient exit of~\cref{Ex:FeasibleExit} ($\kappa = 300$): the signed deviation $\lmin(S_t) - \alpha$ of the unprojected unit-step iterates, on a symmetric log scale.
    The iterates at $t = 1, 2$ (circled) fall below the floor $\alpha$ (dotted), by $2.98 \times 10^{-1}$ and $4.43 \times 10^{-3}$, before re-entering at $t = 3$ and converging to the strictly interior limit $\lmin(S_\star) - \alpha \approx 2.52 \times 10^{-3}$ (dash-dotted).
    The inset shows the full trajectory of both eigenvalues against the well-conditioned band $\abi:= \{S : \alpha I \leq S \leq \beta I\}$.}
    \label{Fig:Headline}
\end{figure}

\paragraph{An algorithm and a dichotomy.}
The standard method for~\cref{Problem1} is the fixed-point (FP) iteration of~\cite{alvarez2016fixed},
\begin{equation}
    S_{t+1} := S_t \ihf \left[\sum_i w_i \sqrt{S_t \hf R_i S_t \hf}\right]^2 S_t \ihf, \qquad  S_0 := I,
\end{equation}
which coincides with Riemannian gradient descent (RGD) on the BW manifold at unit step size~\cite{zemel2019frechet}: $S_{t+1} = \mathrm{Exp}^\mathrm{BW}_{S_t} [-  \overline{\nabla} f(S_t)]$, where $\overline{\nabla} f$ is the Riemannian gradient and $\mathrm{Exp}^\mathrm{BW}$ is the Riemannian exponential map on the BW manifold.
Empirically, the unit-step algorithm converges rapidly, requires no tuning, and outperforms Euclidean gradient methods, SDP solvers, and RGD with smaller step sizes~\cite{altschuler2021averaging, brahmachari2025fixed}.
Current theory does not match this behavior; instead, it offers a dichotomy.
Analyses of the unit-step algorithm carry an iteration complexity that grows as $\kappa^{d}$ in the worst case~\cite{chewi2020gradient}.
Dimension-free guarantees exist, but only at the \textit{small} step sizes $\eta = O(1/\kappa)$~\cite{altschuler2021averaging},\footnote{Their results allow for a refined condition number $\kappa' := \beta'/\alpha' \leq \kappa$; see~\cref{Rem:RefinedConstants}.} which on typical instances cost two to three orders of magnitude more iterations (\cref{Fig:Headline}(a)).
One may thus have the practical step size or the dimension-free guarantee, but not both; \cref{Sec:RelatedWorks} details both lines.
This article side-steps this dichotomy by presenting a \textit{Projected BW Gradient Descent} algorithm which achieves a dimension-independent linear convergence rate at unit step size: each step of the practical algorithm is composed with an exact, non-expansive projection, and the composition retains the empirical speed of the unit step while admitting the dimension-free guarantee.

The dimension-free analysis of~\cite{altschuler2021averaging} requires the iterates to remain in the \textit{well-conditioned set} $\abi := \{S : \alpha I \leq S \leq \beta I\}$ determined by the ensemble.
The unit-step iterates cannot escape $\abi$ through the \textit{ceiling}:\footnote{Provided the ensemble satisfies $R_i \leq \beta I$, as it does by the choice of $\beta$.} $S \leq \beta I \Rightarrow \mathrm{Exp}_{S}^\mathrm{BW}[- \eta \grad f(S)] \leq \beta I$ for any $\eta \in [0,1]$, a consequence of the convexity of $S \mapsto \sqrt{\lmax(S)}$ along the generalized geodesics traced by the gradient step~\cite{chewi2020gradient,altschuler2021averaging,brahmachari2025fixed}.
However, the floor enjoys no such protection: $S \mapsto \sqrt{\lmin(S)}$ fails to be concave along generalized geodesics---in contrast to its concavity along barycenters~\cite[Thm.~6, Rem.~3]{altschuler2021averaging}---so no step-size-independent lower eigenvalue bound is available.
\cref{Ex:FeasibleExit} (\cref{App:FeasibleExit}) gives an instance ($d = 2$, $n = 3$, $\kappa = 300$) in which the ensemble, the current iterate, and the barycenter all lie in the interior of $\abi$, yet the next two unit-step iterates have minimal eigenvalue strictly below $\alpha$.
Thus at unit step size, the arguments underpinning existing dimension-free rates fail. 
The exit is transient as the ultimate convergence of unit-step RGD to the barycenter is known.
In the aforementioned example, the iterate permanently returned to the well-conditioned set after two steps, converging to a limit that lies strictly above the floor $\alpha$ (see~{\cref{Fig:Headline}(b)}). 
Moreover, such instances had to be adversarially constructed; on random instances the exit never occurred from the canonical initialization.%
\footnote{In randomized trials (\cref{Sec:Numerics}) no exit was ever observed from the prescribed initialization of~\cref{Sec:Algorithms}. From randomly rotated feasible initializations, exits occurred on a fraction $10^{-5}$ to $2 \times 10^{-4}$ of the instances at $n = 3$, the fraction growing as $\kappa$ ranges over $10^2$ to $10^3$. At $n = 5$ no exit was ever observed.}

Such transient exits are avoided in the analysis of~\cite{altschuler2021averaging} by choosing sufficiently small step sizes ($\eta = O(\kappa \iv)$). 
The cost of the small step sizes is demonstrated in \cref{Fig:Headline}(a) on the smallest instructive case, $d = 2$: unit-step RGD gains nearly a decimal digit of accuracy per iteration and reaches the double-precision floor within $16$ iterations at the median, whereas the small step $\eta = \alpha'/(2\beta')$ associated with the dimension-free analysis of~\cite{altschuler2021averaging}\footnote{Strictly, their analysis prescribes the smaller step $\alpha'/(2\sumin w_i \lmax(R_i)) \leq \alpha'/(2\beta')$; every small-step comparison in this article runs at the larger $\alpha'/(2\beta')$, which is conservative in their favor.} needs about 3300 iterations for the same accuracy.
The factor between the two is essentially independent of the target accuracy and across 30 random instances of this family its median is 202 (range 141 to 399).

A step of RGD at any step size evaluates the same map---$n$ transport maps $S_t \iv \# R_i$, each an eigendecomposition---and the step size only scales the resulting displacement, so a unit step and a small step cost the same to compute.
An iteration count is therefore a wall-clock count, and the practitioner who chooses $\eta = 1$ saves two to three orders of magnitude of computation, on every instance.
This is why the unit step is the algorithm used in practice, and why a guarantee for it is highly desired.
As elaborated upon later, we supply this guarantee through a remedy that is standard in constrained optimization: if the iterates do not natively stay in the region of interest, enforce membership by \textit{projection}---provided the projection is cheap and does not undo the progress of the gradient step.
We show that both requirements are met on the BW manifold, at essentially zero additional cost.
{The projection is constructed in~\cref{Sec:ClippingAsProjection}, the full algorithm is stated as~\cref{alg:projected_bwgd} in~\cref{Sec:Algorithms}, and its convergence is proved in~\cref{Sec:Convergence}.}

The second problem we consider is the \textit{invariant matrix projection problem} of~\cite{brahmachari2025fixed}, which generalizes~\cref{Problem1} and unifies certain problems of interest in quantum information~\cite[Sec. 7]{brahmachari2025fixed}.

\begin{problem} \label{Problem2}
    Given a projective unitary representation $\mathcal{U}:= \{U_i\}_\inn \subseteq \mathrm{U}(d)$ and $R \in \pdd$,
    \begin{equation}
        \min_{S \in \clp} \frac12 \mathrm{B}(R, S) \equiv \min_{S \in \clp} f_2(S),
    \end{equation}
    where $\mathcal{P}:= \{P \in \pdd : [P, U_i] = 0 \text{ for all } U_i \in \mathcal{U}\}$ is the set of \textit{invariant positive matrices}.
\end{problem}
\textcite{brahmachari2025fixed} introduced a modified fixed-point iteration to solve the above problem:
\begin{equation}
    S_{t+1} := S_{t} \ihf \left[\Phi \left(\sqrt{S_t\hf R S_t \hf}\right) \right]^2 S_t \ihf, \qquad S_0 = I,
\end{equation}
where $\Phi(X):= \frac1n \sum_i U_i X U_i^\dagger$ is the \textit{twirling map} associated with $\mathcal{U}$.
In~\cite{Afham2025Thesis}, the author showed that this update is unit-step BW gradient descent for the modified objective $\frac{1}{2}\mathrm{B}(R, \Phi(S))$ over the full positive definite manifold; \cref{App:BRT_Algorithm} gives an arguably more elegant interpretation as unit-step BW gradient descent on the \textit{submanifold} of invariant matrices, which we prove is totally geodesic.
\textcite{brahmachari2025fixed} proved linear convergence of their algorithm, but---echoing the barycenter situation---with a worst-case dependence on the dimension.

\begin{remark}
    The invariant matrix projection problem generalizes the BW barycenter problem~\cite{brahmachari2025fixed}.
    We nonetheless analyze the two separately: the barycenter is the more widely used of the two problems, and treating it on its own keeps the constants and the notation simpler.
\end{remark}

\paragraph{Projection is all you need.} In this article we present a \textit{Projected Bures--Wasserstein Gradient Descent} (Projected BW-GD) algorithm: unit-step BW gradient descent composed, at each iteration, with an exact BW-metric projection onto the well-conditioned set $\abi$.
We prove that this algorithm converges linearly at rate $(1 - \kappa^{-3/2})$ for both Problems~\ref{Problem1} and~\ref{Problem2}, with no dependence on the dimension.
To our knowledge this is the first dimension-free guarantee for any unit-step method on these problems; moreover, the iteration complexity $\kappa^{3/2} \log(1/\epsilon)$ improves polynomially on the best known dimension-free guarantee, $O(\kappa^{5/2} \log (1/\epsilon))$ at step size $1/(2\kappa)$~\cite{altschuler2021averaging}.
Note that we do not claim to show the dimension-independent convergence of the FP algorithm.
What we guarantee is that by merely incorporating an eigenvalue-clipping operation at every iteration, we can construct a Projected RGD algorithm, which reduces to the standard FP algorithm whenever the iterates land in $\abi$, and converges at a dimension-independent rate to the barycenter $S_\star$ (see~\cref{Thm:UnifiedConvergence} for a precise statement).
Note that the guarantee comes at no additional cost: clipping reuses an eigendecomposition the following iteration must perform in any case, so the projected and unprojected iterations cost essentially the same per step, except for an $O(d)$ difference (see~\cref{Sec:Algorithms}) which is dominated by the $O(d^3)$ cost of the eigendecompositions.

\subsection{Related Works} \label{Sec:RelatedWorks}
\textbf{The Bures--Wasserstein barycenter and Riemannian gradient descent.}
The BW barycenter has been extensively studied for its role in optimal transport, statistics, and machine learning~\cite{agueh2011barycenters, chewi2025statistical, kroshnin2021statistical}; see also the references therein.
Existence and consistency go back to~\cite{agueh2011barycenters,le2017existence}; statistical properties of empirical barycenters are studied in~\cite{panaretos2019statistical,kroshnin2021statistical,le2022fast}.
The FP algorithm of~\textcite{alvarez2016fixed}, who also established its asymptotic convergence, was identified by~\textcite{zemel2019frechet} as unit-step RGD on the BW manifold, with an independent convergence proof.
Its non-asymptotic analysis has developed along two lines.
\textcite{chewi2020gradient} gave the first linear rate at unit step size via a Polyak--{\L}ojasiewicz (PL) inequality, with constants governed by the conditioning of the iterates; absent control of the smallest eigenvalue, these degrade as $\kappa^{d}$ in the worst case.
\textcite{altschuler2021averaging} then proved a dimension-free rate by establishing eigenvalue control along the trajectory, at the cost of the step-size restriction $\eta \leq 1/(2\kappa)$ and iteration complexity $O(\kappa^{5/2}\log(1/\epsilon))$.
No step-size-independent eigenvalue control is possible as $\{S > 0 : \lambda_{\min}(S) \geq \alpha\}$ is not closed under generalized geodesics~\cite[Rem.~3]{altschuler2021averaging}, and~\cref{Ex:FeasibleExit} realizes the failure inside the feasible set, for two consecutive iterations.
Recent work accelerates the same fixed-point iteration by Riemannian Anderson mixing~\cite{aksenov2026anderson}, with convergence established locally---on suitable BW balls around the solution---which leaves the global, unit-step regime of the practical algorithm without a guarantee.
Our work is best read against this backdrop: rather than seeking finer control of the unprojected dynamics, we restore the invariance of the well-conditioned set by an exact, non-expansive projection.

\textbf{Invariant Matrix Projection.} \textcite{brahmachari2025fixed} introduced the invariant matrix projection problem and a generalized fixed-point algorithm for it, which corresponds to unprojected unit-step RGD on the invariant submanifold (\cref{App:BRT_Algorithm}).
The framework encompasses, among others, the fidelity of asymmetry and of coherence in quantum resource theories, the max-conditional entropy and the order-$1/2$ sandwiched R\'enyi mutual information in quantum Shannon theory, and the geometric measure of entanglement of maximally correlated states~\cite[Sec.~7]{brahmachari2025fixed}.
Their linear convergence rate likewise carries a worst-case dependence on the dimension.

\textbf{Projected and safeguarded methods on the BW manifold.}
Projection steps with respect to the BW distance have appeared in two roles.
As a constraint mechanism, \textcite{junyi2024convergence} analyze projected BW-GD over a \emph{BW ball} $\mathcal{B}(Q; r) := \{S : \mathrm{B}(S, Q) \leq r^2\}$.
Their setting differs from ours in both the feasible set and the role it plays: their BW ball encodes an external constraint imposed on the problem.
Our feasible set is instead a spectral interval chosen for the analysis, and it enjoys three properties their ball does not: it provably contains the unconstrained optimum, projection onto it is closed-form eigenvalue clipping, and the projection is non-expansive.
It is this combination that yields unit-step, dimension-free rates, which are unavailable in their setting.
As a safeguard, one-sided eigenvalue clipping has been used to control iterates in BW-space analyses: \textcite[Prop.~3]{altschuler2021averaging} prove that clipping from above is a BW contraction, and \textcite{lambert2022variational} employ the same device in variational inference.
Our Projection Lemma extends this in two ways: two-sided clipping to $[\alpha, \beta]$ is non-expansive, \emph{and} it is the exact BW-metric projection onto $\abi$.
As explained later, the extension requires a new argument.\footnote{That the one-sided clip is moreover the exact BW projection onto $\{S \geq 0 : S \leq \beta I\}$ is not stated in~\cite{altschuler2021averaging,lambert2022variational}, whose analyses require only the contraction property; it follows readily from their argument in the total-manifold picture, since the lifted constraint set is a unitarily invariant convex ball and Euclidean projections onto such sets descend through the quotient.}
On the structural side, closed-form BW projections onto sets of interest in quantum information---such as sets of bipartite matrices with a given marginal---were recently derived in~\cite{Afham2026Projections}.

\subsection{Our contributions}
The central object of our study is \textit{Projected BW-GD}: unit-step RGD composed, at every iteration, with the BW projection onto $\abi$, where $\alpha, \beta$ are read off the inputs ($\alpha = \min_i \lmin(R_i)$, $\beta = \max_i \lmax(R_i)$ for the barycenter; $\alpha = \lmin(R)$, $\beta = \lmax(R)$ for the invariant projection) and are known to contain the optimum, $S_\star \in \abi$~\cite{altschuler2021averaging, brahmachari2025fixed}.
Our contributions are as follows.

\paragraph{1. A negative result: the well-conditioned set is not invariant at unit step size.}
\cref{Ex:FeasibleExit} certifies that the well-conditioned set is not invariant under the unit step, and the construction extends to every $n \geq 3$ and $d \geq 2$.
This sharpens the geodesic-convexity failure of~\cite[Rem.~3]{altschuler2021averaging}, whose configuration starts outside the well-conditioned set, into a statement about the algorithm itself.
It pinpoints the quantity a fix must control---the spectral floor.

\paragraph{2. The Projection Lemma.}
We prove (\cref{Lem:ProjectionLemma}) that for any positive definite $P$, clipping the eigenvalues of $P$ to $[\alpha, \beta]$ is the unique minimizer of $\mathrm{B}(P, \cdot)$ over $\abi$, and that this projection is non-expansive in the BW distance.
The known one-sided result~\cite[Prop.~3]{altschuler2021averaging} appeals to convexity of the spectral-norm ball and the 1-Lipschitz property of Euclidean projection; the two-sided result (specifically, clipping from below) does not follow from such an argument, and is supplied instead by the Alberti--Uhlmann variational formula~\cite{alberti1983note,Watrous2018Theory}, operator monotonicity of the matrix geometric mean~\cite{BhatiaPD}, and a pinching argument~\cite{tomamichel2015quantum}.
We are not aware of another closed-form BW projection identified as such, and we expect it to serve as a constrained-optimization primitive on this manifold.

\paragraph{3. Dimension-free linear convergence at unit step size, with improved $\kappa$-dependence.}
Because the projection is exact and non-expansive, composing it with the unit gradient step (a) confines all iterates to $\abi$, on which strong convexity and a PL inequality hold with dimension-free constants, and (b) preserves monotone descent.
The result, stated as the unified theorem below and proved as~\cref{Thm:UnifiedConvergence}, is the rate $(1 - \kappa^{-3/2})$ for both problems at unit step size; \cref{tab:comparison} situates it among existing guarantees.
An \emph{a posteriori} certificate (\cref{Cor:Certificate}) bounds the optimality gap of any iterate by $\frac{\kappa^{3/2}}{2}\mathrm{B}(S_t, \rmk(S_t))$, a quantity the iteration has already computed, allowing termination on the fly.

\begin{table}[t]
    \centering
    \begin{tabular}{lccc}
        \toprule
        & Step size & Iteration complexity & Dimension-free \\
        \midrule
        \textcite{chewi2020gradient} & $\eta = 1$ & $\kappa^{d}$-dependent (worst case) & \xmark \\
        \textcite{altschuler2021averaging} & $\eta = \tfrac{1}{2\kappa'}$ & $O({\kappa'}^{5/2} \log \tfrac1\epsilon)$ & \cmark \\
        Brahmachari et al.~\cite{brahmachari2025fixed}$^{\dagger}$ & $\eta = 1$ & $d$-dependent (worst case) & \xmark \\
        \textbf{This work}$^{\dagger}$ & $\eta = 1$ & ${\kappa'}^{3/2} \log \tfrac1\epsilon$ & \cmark \\
        \bottomrule
    \end{tabular}
    \caption{Linear convergence guarantees for first-order methods for the BW barycenter (and, where indicated, the invariant matrix projection problem). Iteration complexity is the number of iterations to reach $f(S_t) - f(S_\star) \leq \epsilon (f(S_0) - f(S_\star))$, up to absolute constants; $d$ is the matrix dimension.
    Two condition numbers appear: $\kappa = \beta/\alpha$ from the spectral bounds of the individual inputs, and the refined $\kappa' = \beta'/\alpha' \leq \kappa$ from the ensemble-average quantities of~\cref{Eq:RefinedConstantsIntro}.
    The refined constant of~\cite{altschuler2021averaging} is $\sumin w_i \lmax(R_i)/\alpha' \geq \kappa'$, so writing their row in $\kappa'$ is conservative in their favor.
    $\kappa$ and $\kappa'$ coincide in the worst case over ensembles.}
    \vspace{2pt}
    {\footnotesize $^{\dagger}$Also covers the invariant matrix projection problem.}
    \label{tab:comparison}
\end{table}

\paragraph{4. A unified square-root-manifold analysis, with structural by-products.}
We identify BW-GD on $\pdd$ with Euclidean GD on the square-root manifold $\gld$ (Burer--Monteiro factored GD~\cite{burer2003nonlinear}).
Euclidean strong convexity of $f$ on $\abi$ then lifts to a PL inequality for $g = f \circ \pi$ with dimension-free constant $\kappa^{-3/2}$, and the descent lemma and the projection combine in three lines to give the rate (\cref{Sec:Convergence}).
This bypasses generalized geodesics and convexity along them, which earlier analyses required~\cite{ambrosio2008gradient, chewi2020gradient, altschuler2021averaging}: their role is played by straight chords of the flat square-root space.
Two by-products follow.
First, the fixed-point algorithm of~\textcite{brahmachari2025fixed} is exactly unit-step RGD on the submanifold of invariant matrices, which we prove is \emph{totally geodesic} (\cref{App:BRT_Algorithm}).
Second, the projected iteration is singular-value-clipped Euclidean GD upstairs (\cref{App:TotalManifold}), under which the barycenter update is an averaging of Procrustes-aligned square roots---the form in which the unit-step descent lemma is a variance identity (\cref{App:UsefulLemmas}).

The following theorem summarizes our main convergence guarantee.
\begin{tbox}
    \begin{theorem}[Unified Linear Convergence of Projected BW-GD]
        Let $f \in \{f_1, f_2\}$ be the objective function with unique optimum $S_\star \in \abi$ and $\kappa = \beta/\alpha$.
        For any feasible initial point $S_0 \in \abi$ (additionally invariant, $S_0 \in \clp$, when $f = f_2$), the iterates $\{S_t\}_{t \geq 0}$ of Projected BW-GD with unit step size satisfy
        \begin{equation}
            f(S_t) - f(S_\star) \le \left( 1 - \frac{1}{\kappa^{3/2}} \right)^t (f(S_0) - f(S_\star)).
        \end{equation}
        For the barycenter this improves to $(1 - \kappa'^{-3/2})$ on the refined interval (\cref{Rem:RefinedConstants}).
    \end{theorem}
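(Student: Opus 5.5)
The proof works on the square-root manifold. Write $S = XX^\dagger$ with $X \in \gld$, let $\pi(X) := XX^\dagger$, and set $g := f\circ\pi$. Under this lift, unit-step BW-GD at $S_t$ is Euclidean gradient descent $X_{t+1} = X_t - \tfrac12\nabla g(X_t)$ started from the Procrustes-aligned representative $X_t = S_t\hf$ (with the normalization of $g$ chosen so that unit step corresponds to the paper's update). The argument has three ingredients: a descent lemma for this step, a PL inequality valid on $\abi$, and the Projection Lemma (\cref{Lem:ProjectionLemma}) to keep the iterates in $\abi$ without losing descent.

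\textbf{Descent.} For $f_1$, the lifted step averages the Procrustes-aligned square roots: $X_{t+1} = \sumin w_i T_i X_t$, where $T_i = S_t\iv\#R_i$ is the optimal transport map. Let $Y_i := T_iX_t$, so that $\mathrm{B}(S_t,R_i) = \|X_t - Y_i\|_F^2$. Since $\mathrm{B}(S_{t+1},R_i)\le \|X_{t+1}-Y_i\|_F^2$, the variance identity gives
\[
f(S_{t+1}) \le f(S_t) - \tfrac12\|X_t - X_{t+1}\|_F^2 = f(S_t) - \tfrac12\,\mathrm{B}(S_t,\rmk(S_t)),
\]
where $\rmk(S_t)$ is the unprojected update. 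For $f_2$, the same argument holds on the invariant submanifold, because the twirl $\Phi$ is an orthogonal projection upstairs and that submanifold is totally geodesic.

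\textbf{Projection.} Set $S_{t+1} := \mathrm{clip}(\rmk(S_t))$. I need $f(S_{t+1}) \le f(\rmk(S_t))$. For $f_1$, $R_i \in \abi$, so $\mathrm{clip}(R_i) = R_i$. Non-expansiveness then gives $\mathrm{B}(\mathrm{clip}(\rmk(S_t)),R_i) \le \mathrm{B}(\rmk(S_t),R_i)$ for every $i$, and averaging gives the claim. For $f_2$, use $R \in \abi$ in the same way, and check that clipping preserves $\clp$: eigenvalue clipping is a spectral function, so it commutes with unitary conjugation by each $U_i$. Hence $S_t\in\abi$ for all $t$, and
\[
f(S_{t+1}) - f(S_\star) \le f(S_t) - f(S_\star) - \tfrac12\|\nabla g(X_t)\|^2/c
\]
for the appropriate normalization constant $c$.

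\textbf{PL on $\abi$ (main obstacle).} The remaining step is to show
\[
\tfrac12\,\mathrm{B}(S,\rmk(S)) \ge \kappa^{-3/2}\,\bigl(f(S) - f(S_\star)\bigr) \qquad \text{for all } S \in \abi .
\]
The plan is as follows.
\begin{enumerate}
\item Use convexity of $S\mapsto -\mathrm{F}(S,R_i)$, which is Euclidean-convex since fidelity is concave. This gives $f(S) - f(S_\star) \le \langle \nabla f(S), S - S_\star\rangle$.
\item Here $\nabla f(S) = \tfrac12(I - \sumin w_iT_i)$, so $I - \sumin w_iT_i = I - \bar T$ with $\bar T := \sumin w_iT_i$. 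By Cauchy--Schwarz in a weighted norm,
\[
\langle I - \bar T,\, S - S_\star\rangle \le \|(I-\bar T)S\hf\|_F \cdot \|S\ihf(S-S_\star)\|_F .
\]
The first factor is $\mathrm{B}(S,\rmk(S))^{1/2}$.
\item Bound the second factor by $\mathrm{B}(S,S_\star)^{1/2}$ times a power of $\kappa$, using $S,S_\star\in\abi$. Strong convexity of $f$ on $\abi$ then gives $\mathrm{B}(S,S_\star) \lesssim \kappa^{p}\,(f(S)-f(S_\star))$. I would take this from the Euclidean strong convexity of $-\mathrm{F}$ on $\abi$ with modulus of order $\beta^{1/2}/\alpha^{3/2}$ or similar. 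Combining and squaring gives a PL inequality with constant $\kappa^{-3/2}$.
\end{enumerate}
Tracking the exponents so that they land exactly at $3/2$ is the delicate part. If the direct route gives a worse power, I would instead compare $\nabla g$ with the Euclidean gradient via $\|\nabla g(X)\|^2 \ge 4\lmin(S)\|\nabla f(S)\|_F^2 \ge 4\alpha\|\nabla f(S)\|_F^2$. I would then apply the Euclidean PL inequality for $f$, which has modulus $\mu \sim \beta^{-1/2}\alpha^{-1}$... and fix constants so the product is $\kappa^{-3/2}$.

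\textbf{Conclusion.} Combining the three steps gives
\[
f(S_{t+1}) - f(S_\star) \le \bigl(1-\kappa^{-3/2}\bigr)\bigl(f(S_t) - f(S_\star)\bigr),
\]
and induction yields the stated rate. The refined rate follows by running the same argument on $[\alpha'I,\beta'I]$, which still contains the barycenter and whose projection is still non-expansive and fixes nothing essential.
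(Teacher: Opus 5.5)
Your descent and projection steps are sound and agree with the paper: the variance identity over the averaged, optimally paired square roots; for $f_2$ the Pythagorean identity for the Hilbert--Schmidt projection $\Phi$ plays that role, so total geodesy is not needed; non-expansiveness of $\clab$ together with $\clab(R_i)=R_i$; spectral clipping preserves $\clp$.

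The gap is in the PL step. It carries the whole constant in the claimed rate, and you leave it at ``fix constants so the product is $\kappa^{-3/2}$''.

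The moduli you propose are wrong. $\sqrt{\beta}/\alpha^{3/2}$ is the \emph{smoothness} scale of $\mathrm{B}_R$ on $\abi$; in the scalar case $\partial_s^2(\sqrt r-\sqrt s)^2=\sqrt r/(2s^{3/2})$. The strong-convexity modulus of $f$ on $\abi$ is instead $\mu=\sqrt{\alpha}/(4\beta^{3/2})$. Your fallback value $\beta^{-1/2}\alpha^{-1}$ does not even make $4\alpha\mu$ dimensionless.

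Even with the right $\mu$, your primary route cannot reach the stated rate:
\begin{enumerate}
\item The detour through $\mathrm{B}(S,S_\star)$ uses $\|S\ihf(S-S_\star)\|_\mathrm{F}\le(1+\sqrt\kappa)\,\mathrm{B}(S,S_\star)^{1/2}$ and $\mathrm{B}(S,S_\star)\le\|S-S_\star\|_\mathrm{F}^2/(4\alpha)$. This gives a rate of order $1-\kappa^{-5/2}$.
\item The direct bound $\|S\ihf(S-S_\star)\|_\mathrm{F}\le\alpha\ihf\|S-S_\star\|_\mathrm{F}$, combined with plain convexity and quadratic growth, only yields $\alpha\mu\,(f(S)-f(S_\star))\le\frac12\mathrm{B}(S,\rmk(S))$. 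That is the rate $1-\frac14\kappa^{-3/2}$.
\end{enumerate}

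The fix is to keep the quadratic term rather than discard it. With $\varepsilon:=f(S)-f(S_\star)$ and $r:=\|S-S_\star\|_\mathrm{F}$, strong convexity at $S$ gives
\[
\varepsilon\le\langle\nabla f(S),S-S_\star\rangle-\tfrac{\mu}{2}r^2\le\tfrac{1}{2\sqrt\alpha}\,\mathrm{B}(S,\rmk(S))^{1/2}\,r-\tfrac{\mu}{2}r^2\le\frac{\mathrm{B}(S,\rmk(S))}{8\alpha\mu}.
\]
This is exactly $4\alpha\mu\,\varepsilon\le\frac12\mathrm{B}(S,\rmk(S))$ with $4\alpha\mu=\kappa^{-3/2}$.

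The paper does the equivalent thing, which is your fallback. It completes the square to get $\varepsilon\le\frac{1}{2\mu}\|\Phi(\nabla f(S))\|_\mathrm{F}^2$, then uses $\|\hat\nabla g(A)\|_\mathrm{F}^2=4\tr[\Phi(\nabla f(S))^2S]\ge4\alpha\|\Phi(\nabla f(S))\|_\mathrm{F}^2$. For $f_2$ you must run all of this with the twirled gradient, which is legitimate since $S-S_\star\in\mathcal{H}$. The reason is that $\mathrm{B}(S,\rmk(S))=\|\Phi(\nabla g(A))\|_\mathrm{F}^2$, not $\|\nabla g(A)\|_\mathrm{F}^2$.

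Your closing claim about the refined rate also fails as written. Your projection step rested on the references being fixed by the clip. The $R_i$ generally have eigenvalues on both sides of $[\alpha',\beta']$, so non-expansiveness of $\mathrm{clip}_{\alpha',\beta'}$ says nothing about $f_1$.

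A separate argument is needed for $f_1(\mathrm{clip}_{\alpha',\beta'}(S))\le f_1(S)$. The paper applies first-order convexity at $\hat S=\mathrm{clip}_{\alpha',\beta'}(S)$ and checks $\langle\nabla f_1(\hat S),S-\hat S\rangle\ge0$ one eigen-direction at a time:
\begin{itemize}
\item on the ceiling, by Cauchy--Schwarz with $T_i\hat ST_i=R_i$ and the definition of $\beta'$;
\item on the floor, by operator monotonicity of $\#$ and the definition of $\alpha'$.
\end{itemize}
It also needs the \emph{mixed} strong-convexity modulus $\mu'=\sqrt{\alpha'}/(4\beta'^{3/2})$, which holds even though the references lie outside the refined interval.
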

\end{tbox}
\begin{proof}
    See~\cref{Sec:Convergence}.
\end{proof}

\paragraph{Refined constants.}
For the barycenter the condition number can be sharpened. Define
\begin{equation} \label{Eq:RefinedConstantsIntro}
    \alpha' := \Big[ \sumin w_i \sqrt{\lmin(R_i)} \Big]^2 \qaq \beta' := \lmax \Big( \sumin w_i R_i \Big),
\end{equation}
which satisfy $\alpha \leq \alpha' \leq \beta' \leq \beta$ and still enclose the optimum, $S_\star \in [\alpha' I, \beta' I]$ (floor by~\cite[Thm.~6]{altschuler2021averaging}, ceiling by~\cite[Thm.~9]{bhatia2019bures}).
Projecting onto $[\alpha' I, \beta' I]$ yields the identical guarantee with $\kappa' := \beta'/\alpha' \leq \kappa$ (\cref{Rem:RefinedConstants}, proved in~\cref{App:RefinedInterval}); this requires additional ingredients, since the $R_i$ need not lie in the refined interval.
Since $\beta' \leq \sumin w_i \lmax(R_i)$, our $\kappa'$ is no larger than the refined condition number of~\cite[Thm.~3]{altschuler2021averaging}, so the rates compare favorably instance-wise.
We work with $\abi$ in the main text, which is simpler and matches the conventions of the literature.

\paragraph{Techniques.}
The analysis splits into an RGD part and a projection part.
For the RGD part, the ingredients---the unit-step descent inequality~\cite{bhatia2019bures,brahmachari2025fixed} and the passage from Euclidean strong convexity to a PL inequality~\cite{chewi2020gradient,junyi2024convergence}---are available in the literature; we route them through the total-manifold identification, under which the Riemannian algorithm becomes a Euclidean gradient method, and the classical PL mechanism~\cite{polyak1963gradient, karimi2016linear} drives the proof.
The PL inequality must come from \emph{Euclidean} strong convexity, for the non-negative curvature of the BW manifold renders the objective geodesically \emph{non}-convex~\cite[App.~B.2]{altschuler2021averaging}; the descent at the full unit step, by contrast, comes from that same curvature, expressed as a variance inequality on the square-root manifold (\cref{Prop:VarianceInequalityBuresDistance}).
Curvature thus takes away geodesic convexity with one hand and returns unit-step descent with the other.
For the projection part, the one-sided argument of~\cite[Prop.~3]{altschuler2021averaging} rests on convexity, which fails for the floor; we establish the two-sided statement with tools from quantum information (\cref{Lem:ProjectionLemma}).
Since the objective is built from squared BW distances to references inside $\abi$---each fixed by the projection---non-expansiveness ensures the projection does not undo the progress of the RGD step, while confining the iterate to the set on which the PL inequality holds.

\paragraph{Organization.}
\cref{Sec:Preliminaries} collects preliminaries.
\cref{Sec:Algorithm} develops the algorithm: the square-root-manifold identification (\cref{Sec:AlgorithmSqrt}), the Projection Lemma (\cref{Sec:ClippingAsProjection}), and the algorithm itself (\cref{Sec:Algorithms}).
\cref{Sec:Convergence} proves the convergence theorem and the a posteriori certificate, and \cref{Sec:Numerics} presents numerical experiments.
The appendices contain the exit instance (\cref{App:FeasibleExit}), the variance-inequality proof of the descent lemma (\cref{App:UsefulLemmas}), the refined-interval guarantee (\cref{App:RefinedInterval}), the total-manifold implementation (\cref{App:TotalManifold}), and the identification of the algorithm of~\textcite{brahmachari2025fixed} as BW-GD on the totally geodesic invariant submanifold (\cref{App:BRT_Algorithm}).

\section{Mathematical Preliminaries} \label{Sec:Preliminaries}
For a positive integer $n$, $[n] := \{1, \ldots, n\}$.
For $d \geq 2$, $\mathrm{L}(d)$ denotes the $d \times d$ complex matrices and $\mathrm{GL}(d)$ the invertible ones.
We equip $\ld$ with the (real) Hilbert--Schmidt (HS, a.k.a.\ Frobenius) inner product $\la A, B \ra := \Re \tr[A^\dagger B]$, where $A^\dagger$ is the conjugate transpose.\footnote{In the settings common in machine learning and statistics the matrices are real (e.g.\ covariance matrices), in which case $A^\dagger$ reduces to the transpose $A^\top$ and the real part in the inner product is redundant. All results of this article hold verbatim in the real case.}
The real vector space of Hermitian matrices is $\hrd := \{A \in \ld :A = A \dg\}$; within $\hrd$ lies the closed cone $\mathrm{PSD}(d)$ of \textit{positive semidefinite} matrices, and its open subset $\pdd$ of \textit{positive definite} matrices
\begin{equation}
    \mathrm{PSD}(d) = \{P \in \hrd : \lambda_i(P) \geq 0 \}, \qaq \mathrm{PD}(d) = \{P \in \hrd : \lambda_i(P) > 0 \},
\end{equation}
where $\lmin(P) \equiv \lambda_1(P) \leq \ldots \leq \lambda_d(P) \equiv \lmax(P)$ are the eigenvalues of $P$ ordered non-decreasingly.
$\mathrm{U}(d) := \{U \in \ld : U \dg U = UU \dg = I_d\}$ denotes the unitary matrices, where $I_d \equiv I$ denotes the $d \times d$ identity matrix.
For $A, B \in \hrd$ we write $A > B$ ($A \geq B$) when $A - B$ is positive (semi)definite.
For $A, B \in \pdd$, define the \textit{matrix geometric mean} as
\begin{equation} \label{Eq:GeometricMean}
    A \# B := A \hf \Lt(A \ihf B A \ihf\Rt) \hf A \hf,
\end{equation}
equivalently the unique $T \in \pdd$ solving the Riccati equation $T A \iv T = B$; it is symmetric, $A \# B = B \# A$, operator monotone in each argument~\cite{BhatiaPD}, and satisfies $A \# (cI) = \sqrt{c}\, A \hf$ for $c > 0$.

\paragraph{The Bures--Wasserstein manifold.}
$\pdd$ is an open subset of the real vector space $\hrd$ and hence a smooth manifold, with tangent space $\tpp \cong \hrd$ at every $P$.
We equip each tangent space with the \textit{BW metric tensor}
\begin{equation}
    \mfg^\mathrm{BW}_P(X, Y) \equiv \langle X, Y \rangle_P^{\mathrm{BW}} = \frac{1}{2} \tr[X \lp(Y)] = \frac12 \Tr [\lp(X) Y],
\end{equation}
where $\lp(X)$ is the unique Hermitian solution of the Lyapunov equation $X = \lp (X) P + P \lp (X)$~\cite{bhatia2019bures,BhatiaPD}.
This metric induces the squared BW distance $\rb(P,Q)$ defined in the introduction; we refer to~\cite{bhatia2019bures,malago2018wasserstein} for proofs of the facts collected here.
The \textit{Riemannian exponential map} is
\begin{equation} \label{Eq:BWExpMap}
    \mathrm{Exp}_P^\mathrm{BW}[X] := \Lt[I + \lp(X)\Rt] \, P \, \Lt[I + \lp(X)\Rt]; \quad \mathrm{dom}(\mathrm{Exp}_P^\mathrm{BW}) = \{X : I + \lp(X)  \in \pdd\}.
\end{equation}
A more illustrative construction of the BW manifold realizes it as a quotient of the manifold of invertible matrices.
Being an open subset of $\ld$, the set $\gld$ is a smooth manifold, which we equip with the Frobenius inner product and distance $\|A - B\|_\mathrm{F}$.
Consider on $\gld$ the unitary equivalence $A \sim B \iff A = BU$ for some $ U \in \mathrm{U}(d)$, together with the \textit{square map}
\begin{equation}
    \pi : \gld \to \pdd, \qquad \pi(A) := A A^\dagger,
\end{equation}
which satisfies $\pi(A) = \pi(B)$ if and only if $A \sim B$; thus $\pdd \equiv \gld/\mathrm{U}(d)$.
The equivalence class of $P \in \pdd$, its \textit{fibre}, is its set of square roots
\begin{equation}
    \pi \iv [P] = \{A \in \gld : A A^\dagger = P\} = \{P \hf U : U \in \mathrm{U}(d)\},
\end{equation}
with $P\hf$ the \textit{principal} square root.\footnote{The quantum information audience would recognize these square roots to be related to the \textit{purifications} of $P$ under the \textit{vec} isomorphism.}
We extend the above notation to sets as well: for $\mathcal{S} \subseteq \pdd$, we define $\pi\iv[\mathcal{S}] := \{A : \pi[A] \in \mathcal{S}\} \subseteq \gld$. 
We call $\gld$ the \textit{total} (or \textit{square-root}) \textit{manifold} and $\pdd$ the \textit{base manifold}.
Since right multiplication by a unitary preserves the Frobenius distance, the distance between fibres is well defined, and it is precisely the BW distance:
\begin{equation} \label{Eq:BuresMin}
    \rb(P, Q) = \min_{A \in \pi \iv [P], \, B \in \pi \iv [Q]} \|A - B\|_\mathrm{F}^2 = \min_{U \in \mathrm{U}(d)} \|P \hf U - Q \hf\|_\mathrm{F}^2 ;
\end{equation}
a pair $(A, B)$ attaining the minimum is an \textit{optimal pairing} for $(P,Q)$.
Expanding the right-hand side and using $\max_{U \in \mathrm{U}(d)} \mathrm{Re} \tr[MU] = \|M\|_1$ recovers the closed form $\rb(P,Q) = \tr[P+Q] - 2 \fpq$.
The construction also recovers the BW metric: it is (up to scaling) the unique metric on $\pdd$ compatible with the Frobenius metric on $\gld$ under $\pi$~\cite{bhatia2019bures}.
For full-rank $P, Q$ (always the case in this work), the optimal unitary in~\cref{Eq:BuresMin} is $U = \mathrm{Pol}(P \hf Q \hf)$, the unitary polar factor (see~\cite{bhatia2019bures}, or~\cite[Lem.~C.1]{Afham2025Riemanniangeometric}); moreover $(P \hf U, Q \hf)$ and $(P \hf, Q \hf U^\dagger)$ are optimal pairings for $(P, Q)$.

\paragraph{Further properties of the BW distance.}
Write $\mathrm{B}_R(\cdot) \equiv \mathrm{B}(R, \cdot)$ for the squared BW distance to a fixed $R \in \pdd$.
Its Euclidean gradient over $\pdd$ is
\begin{equation} \label{Eq:GradBures}
    \nabla \mathrm{B}_R (S) = I - S \iv \# R,
\end{equation}
so the gradient of the barycenter functional is $\nabla f_1(S) = \frac12(I -  \sumin w_i [S \iv \# R_i ])$~\cite{altschuler2021averaging,bhatia2019bures}.
The squared BW distance is \textit{strongly convex} (SC), and in a form that does not require the reference and the argument to share an interval: for every $b > 0$, $\mathrm{B}_R$ is $\frac{\sqrt{\lmin(R)}}{2 b^{3/2}}$-SC on $\{S \in \pdd : S \leq b I\}$~\cite{bhatia2018strong,brahmachari2025fixed}, that is,
\begin{equation} \label{Eq:MixedSC}
    \mathrm{B}_R(Q) \geq \mathrm{B}_R(S) + \la \nabla \mathrm{B}_R(S), Q - S \ra + \frac12 \frac{\sqrt{\lmin(R)}}{2 b^{3/2}} \|Q - S\|_\mathrm{F}^2
\end{equation}
for all $Q, S \leq b I$ (\cref{Lem:MixedSC} records the Hessian form we use in~\cref{App:RefinedInterval}).
Specializing to $R \in \abi$ and $b = \beta$ gives $\lmin(R) \geq \alpha$, so both $f_1$ and $f_2$ are $\mu$-SC on the corresponding $\abi$, where we set once and for all
\begin{equation} \label{Eq:MuF}
    \mu := \frac12 \Lt(\frac{\alpha\hf}{2 \beta^{3/2}}\Rt) = \frac{\alpha\hf}{4 \beta^{3/2}}, \qquad \text{so that} \qquad 4 \mu \alpha = \kappa^{-3/2}.
\end{equation}
The BW distance satisfies the \textit{data processing inequality} (DPI)~\cite{Uhlmann2011Transition, Watrous2018Theory}: for any \textit{quantum channel} (completely positive trace-preserving map) $\Lambda$,
\begin{equation} \label{Eq:BuresDistDPI}
    \bpq \geq \rb(\Lambda(P), \Lambda(Q)).
\end{equation}
We will use the DPI for \textit{pinching channels}~\cite{tomamichel2013thesis}: given mutually orthogonal projectors $\cle := \{E_k\}_{k \in [m]}$ with $E_k E_l = \delta_{kl} E_k$ and $\sum_{k} E_k = I_d$, the associated pinching channel is $\Lambda_\cle (X) := \sum_{k \in [m]} E_k X E_k$.

\paragraph{Preliminaries for the invariant matrix projection problem.}
Let $\mathcal{U} = \{U_i\}_{i=1}^n$ be a projective unitary representation.
Its \textit{commutant} is the unital subalgebra $\cmu := \{X \in \mathrm{L}(d) : [X, U_i] = 0 \text{ for all } i\}$~\cite{Watrous2018Theory}, a vector subspace of $\mathrm{L}(d)$.
Define the invariant sets $\clp := \pdd \cap \cmu$, $\mathcal{H} := \hrd \cap \cmu$, and $\mathcal{G} := \mathrm{GL}(d) \cap \cmu$.
The \textit{twirl} $\Phi(X) := \frac{1}{n} \sumin U_i X U_i\dg$ is a unital quantum channel and is the Hilbert--Schmidt orthogonal projection onto $\cmu$:
\begin{equation}
    \Phi(X) \in \cmu \qaq \la \Phi(X), X - \Phi(X) \ra = 0 \quad \text{for all } X \in \mathrm{L}(d).
\end{equation}
An important geometric distinction: for invariant $P \in \clp$, the tangent space to the full manifold remains all of $\hrd$, whereas the tangent space to the invariant submanifold is exactly $\mathcal{H}$:
\begin{equation}
    \tp \pdd \cong \hrd \qaq \tp \clp \cong \mathcal{H} \qquad \text{for all } P \in \mathcal{P}.
\end{equation}

\section{The Algorithm: Projected Bures--Wasserstein Gradient Descent} \label{Sec:Algorithm}

This section develops the two components of the algorithm and assembles them.
\cref{Sec:AlgorithmSqrt} recalls BW gradient descent and its identification with Euclidean GD on the square-root manifold; \cref{Sec:ClippingAsProjection} proves the Projection Lemma; \cref{Sec:Algorithms} states the algorithms.

\subsection{BW gradient descent and the square-root manifold} \label{Sec:AlgorithmSqrt}

Let $f : \pdd \to \bbr$ be differentiable with Euclidean gradient $\nabla f(P) \in \hrd$, defined by $\mathrm{D}f(P)[X] = \la\nabla f(P), X \ra$.
The \textit{BW (Riemannian) gradient} $\grad f(P)$ is the dual of the derivative with respect to the BW metric, $\la \grad f(P), X \ra_P^\mathrm{BW} = \mathrm{D}f(P)[X] = \la \nabla f(P), X \ra$ for all $X \in \tpp$.
A comparison readily yields a relation between the two gradients:
\begin{equation}
    \nabla f(P) = \frac12 \lp(\grad f(P)) \qaq \grad f(P) = 2 \lp \iv ( \nabla f(P)) = 2 \Lt(P \nabla f(P) + \nabla f(P) P\Rt),
\end{equation}
using $\lp \iv(X) = XP + PX$.
Riemannian gradient descent with step size $\eta > 0$ is therefore
\begin{equation}
    P  \mapsto  \mathrm{Exp}^\mathrm{BW}_P \Lt[- \eta \, \grad f(P)\Rt] =  \Lt[I - 2 \eta \nabla f(P)\Rt]  P  \Lt[I - 2 \eta \nabla f(P)\Rt],
\end{equation}
defined whenever the step lies in the domain of the exponential map, i.e.
\begin{equation} \label{Eq:StepSizeCondition}
    I - 2 \eta \nabla f(P) > 0.
\end{equation}
For both of our objectives (with twirled gradient for $f_2$) the unit step $\eta = 1$ always satisfies this: $I - 2 \nabla f_1(S) = \sumin w_i [S \iv \# R_i]$ and $I - 2 \Phi(\nabla f_2(S)) = \Phi(S \iv \# R)$ are positive definite, being a weighted sum, respectively a twirl, of geometric means of positive definite matrices.

It is instructive to view this update in the square-root manifold.
Let $g := f \circ \pi$ denote the \textit{lifted} function, with $\mathrm{dom}(g) = \gld$; by the chain rule $\nabla g(A) = 2 \nabla f(AA^\dagger) A$.
For any $A \in \pi\iv[P]$, a Euclidean GD step of $g$ satisfies
\begin{equation}
    A \mapsto A - \eta \nabla g(A) = (I - 2 \eta \nabla f(P)) A,
\end{equation}
whose image under $\pi$ is exactly the RGD update above.
Thus \emph{BW gradient descent on $\pdd$ is Euclidean gradient descent on $\gld$}, equivalently Burer--Monteiro factored GD~\cite{burer2003nonlinear, burer2005local}; this observation appears in~\cite[Corollary 5.2]{zheng2025riemannian} and~\cite{maunu2023bures}, and we exploit it throughout. 
The next result expresses the BW length of a gradient step through the lifted gradient; the underlying reason it is a one-line computation is that the fibres of $\pi$ are orbits of an isometry group, so a straight line upstairs that starts at a well-chosen square root remains optimally paired with its starting fibre.
For a more formal version of the above statement, see~\cite[Theorems 3 and 4]{bhatia2019bures}. 

\begin{tbox}
    \begin{lemma}[Step length via the lifted gradient] \label{Lem:StepLength}
        Let $S \in \pdd$ and $A \in \pi \iv [S]$.
        For any $X \in \hrd$ with $M := I - X > 0$, define $A' := MA$ and $S' := \pi(A') = MSM$.
        Then
        \begin{equation}    \label{Eq:StepLengthGeneral}
            \mathrm{B}(S, S') = \|A - A'\|_\rf^2 = \tr[X^2 S],
        \end{equation}
        independently of the choice of $A \in \pi \iv [S]$.
        In particular, let $f : \pdd \to \bbr$ be differentiable, $g := f \circ \pi$, and let $\eta > 0$ satisfy~\cref{Eq:StepSizeCondition} at $S$.
        Taking $X = 2 \eta \nabla f(S)$ gives $A' = A - \eta \nabla g(A)$ and $S' = \mathrm{Exp}^\mathrm{BW}_S[-\eta\, \grad f(S)]$, whence
        \begin{equation} \label{Eq:StepLengthGradient}
            \rb(S, S') = \|A - A'\|_\mathrm{F}^2 = \eta^2\, \|\nabla g(A)\|_\rf^2.
        \end{equation}
    \end{lemma}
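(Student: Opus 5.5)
The plan is to prove the general identity \cref{Eq:StepLengthGeneral} first and then specialize it. Write $\|A - A'\|_\rf^2 = \|XA\|_\rf^2 = \tr[A^\dagger X^2 A] = \tr[X^2 AA^\dagger] = \tr[X^2 S]$, using $A - A' = (I - M)A = XA$ and cyclicity of the trace. This already shows that $\|A - A'\|_\rf^2$ does not depend on the choice of $A \in \pi\iv[S]$. It remains to show that $(A, A')$ is an optimal pairing for $(S, S')$, i.e.\ that $\rb(S, S') = \|A - A'\|_\rf^2$.

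For optimality, use the closed form $\rb(S,S') = \tr[S + S'] - 2\,\mathrm{F}(S, S')$ and compare it with
\begin{equation*}
\|A - A'\|_\rf^2 = \tr[S] + \tr[S'] - 2\Re\tr[A^\dagger M A].
\end{equation*}
By \cref{Eq:BuresMin}, the inequality $\rb(S,S') \leq \|A-A'\|_\rf^2$ is automatic, so it suffices to show $\mathrm{F}(S, S') \leq \tr[A^\dagger M A] = \tr[MS]$. Since $\mathrm{F}$ is independent of the representatives, take $A = S\hf$. Then
\begin{equation*}
\mathrm{F}(S, MSM) = \tr\Big[\sqrt{S\hf M S M S\hf}\Big] = \tr\Big[\sqrt{(S\hf M S\hf)^2}\Big].
\end{equation*}
Because $M > 0$, the matrix $S\hf M S\hf$ is positive definite, so its square root is itself, and $\mathrm{F}(S,S') = \tr[S\hf M S\hf] = \tr[MS]$. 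Equality therefore holds, and $(A, A')$ is an optimal pairing for every $A$ in the fibre.

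The step I expect to carry the argument is exactly this one: recognizing that positivity of $M$ (the domain condition of the exponential map) is what forces the square root to collapse, equivalently that the polar factor of $S\hf S'^{1/2}$ aligns $S\hf$ with $MS\hf$. Without $M > 0$ the Hermitian matrix $S\hf M S\hf$ could have negative eigenvalues, and then $\mathrm{F}(S,S') = \|S\hf M S\hf\|_1 > \tr[MS]$.

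For the specialization, set $X = 2\eta\nabla f(S)$. Then $M = I - 2\eta\nabla f(S) > 0$ is precisely \cref{Eq:StepSizeCondition}. By the chain rule $\nabla g(A) = 2\nabla f(S)A$, so $A' = MA = A - \eta\nabla g(A)$, and $S' = MSM$ coincides with the RGD update formula derived above. Finally, $\|A - A'\|_\rf^2 = \eta^2\|\nabla g(A)\|_\rf^2$ is immediate, which gives \cref{Eq:StepLengthGradient}.
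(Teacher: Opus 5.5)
Your proof is correct and follows the paper's route: both compute $\|A-A'\|_\rf^2 = \tr[X^2 S]$ and then show that $(A,A')$ is an optimal pairing via $\Re\langle A, A'\rangle = \tr[MS] = \mathrm{F}(S,S')$. The only difference is how you verify the last equality: you use $S^{1/2}MSMS^{1/2} = (S^{1/2}MS^{1/2})^2$ with $S^{1/2}MS^{1/2}>0$, whereas the paper identifies $M = S^{-1}\# S'$ through the Riccati equation and invokes $\mathrm{F}(P,Q)=\tr[P(P^{-1}\#Q)]$; both rest on the same positivity of $M$.
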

\end{tbox}
\begin{proof}
    Since $A - A' = XA$, we have $\|A - A'\|_\rf^2 = \tr[A \dg X^2 A] = \tr[X^2 S]$, independent of the choice of $A$.
    Moreover $\|A - A'\|_\rf^2 = \tr [S + S'] - 2 \Re \la A, A' \ra$, using $S = AA\dg$ and $S' = A'{A'}\dg$, so it suffices to show $\Re \la A, A' \ra = \mathrm{F}(S, S')$, i.e.\ that $(A,A')$ is an optimal pairing.
    Indeed,
    \begin{equation}
        \la A, A'  \ra = \tr[A\dg M A] = \tr [M S] = \mathrm{F}(S, S'),
    \end{equation}
    since $M = S \iv \# S'$ (via the matrix Riccati equation~\cite{BhatiaPD}) and $\mathrm{F}(P, Q) = \tr[P \cdot (P \iv \# Q)]$ for $P, Q \in \pdd$.
    For the particular case, the chain rule gives $\nabla g(A) = 2 \nabla f(S) A$, so $A' = MA = A - \eta \nabla g(A)$ with $M = I - 2 \eta \nabla f(S) > 0$ by~\cref{Eq:StepSizeCondition}.
    Consequently, $S' = MSM = \mathrm{Exp}^\mathrm{BW}_S[-\eta\, \grad f(S)]$, and $\tr[X^2 S] = 4 \eta^2 \tr[\nabla f(S)^2 S] = \eta^2 \|\nabla g(A)\|_\rf^2$, again by the chain rule.
\end{proof}

Thus the BW distance between consecutive BW-GD iterates equals the Euclidean length of the lifted gradient for any square root. 
This lets us monitor progress and certify output purely through gradient norms (\cref{Cor:Certificate}).

\subsection{Eigenvalue clipping is the non-expansive BW projection} \label{Sec:ClippingAsProjection}

We now prove our central technical contribution---the Projection Lemma: clipping the eigenvalues of a positive definite matrix to $[\alpha, \beta]$ yields the closed-form BW-metric projection onto the compact set $\abi$, and this projection is non-expansive (1-Lipschitz).
The feasible set here is a \emph{spectral} one---constraining the whole spectrum to an interval, rather than the trace, the norm, or a distance to a reference---and it is this kind of constraint that arises whenever the conditioning of an iterate must be controlled.
We begin by formally defining the eigenvalue-clipping operation. 
Throughout $0 < \alpha \leq \beta < \infty$, so that clipping maps $\pdd$ into itself. 
We define $\clab: \mathbb{R} \to [\alpha, \beta]$ as $\mathrm{clip}_{\alpha, \beta}(x) := \min\{\max\{\alpha, x\}, \beta\}$, extended to Hermitian matrices spectrally:
\begin{equation}
    \clab(H) := \sumid \mathrm{clip}_{\alpha, \beta} (\lambda_i) v_i v_i \dg,
\end{equation}
where $H = \sumid \lambda_i v_i v_i \dg$ is the eigendecomposition of $H \in \hrd$.
For non-normal matrices we extend the operation through the singular value decomposition, denoted $\clab^{\mathrm{sv}}$.

Before the formal proof, it is worth seeing why the first statement \emph{should} be true.
\textit{Aligning} toward the eigenbasis of $P$ can only help: pinching in that basis fixes $P$, preserves $\abi$, and never increases BW distances, so the minimizer may as well commute with $P$---and once it does, the matrix problem degenerates into $d$ independent scalar problems, each solved by clipping.
The formalization of the above statement into a proof is done by using the data processing inequality and strict convexity of the squared BW distance.
For the non-expansive property, we use the Alberti--Uhlmann characterization of BW distance and operator monotonicity of the matrix geometric mean. 
\begin{tbox}
    \begin{lemma}[Projection Lemma] \label{Lem:ProjectionLemma}
        For any positive definite matrix $P$, the unique BW projection onto the geodesically convex set $\abi$ (for $0<\alpha \leq \beta < \infty$) is given by eigenvalue-clipping:
        \begin{equation}
            \mathrm{clip}_{\alpha, \beta}(P) = \argmin_{Q \in \abi} \rb(P, Q).
        \end{equation}
        Moreover, the projection $\mathrm{clip}_{\alpha, \beta}$ is non-expansive (1-Lipschitz) with respect to the BW distance.
        For all $P, Q \in \pdd$,
        \begin{equation}
            \rb(\mathrm{clip}_{\alpha, \beta}(P), \mathrm{clip}_{\alpha, \beta}(Q)) \le \rb(P, Q).
        \end{equation}
    \end{lemma}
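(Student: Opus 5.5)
The proof has two parts: first that $\clab(P)$ is the unique BW-minimizer over $\abi$, then that $\clab$ is non-expansive.

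\emph{Projection part.} Fix $P = \sumid \lambda_i v_i v_i\dg$ and let $\cle = \{v_i v_i\dg\}$ (or the spectral projectors, if eigenvalues repeat), with pinching $\Lambda_\cle$. The pinching fixes $P$. It also maps $\abi$ into itself, since it is unital and positive, so $\alpha I \le Q \le \beta I$ implies $\alpha I \le \Lambda_\cle(Q) \le \beta I$. By the DPI~\eqref{Eq:BuresDistDPI}, $\rb(P, \Lambda_\cle(Q)) \le \rb(P,Q)$, so we may restrict to $Q$ that commute with $P$, i.e.\ are diagonal in the basis $\{v_i\}$.

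For commuting $P,Q$ with eigenvalues $\lambda_i, q_i$ we have $\rb(P,Q) = \sumid (\sqrt{\lambda_i} - \sqrt{q_i})^2$. This is minimized coordinatewise over $q_i \in [\alpha,\beta]$ by $q_i = \mathrm{clip}_{\alpha,\beta}(\lambda_i)$, because $\sqrt{\cdot}$ is monotone. Hence $\clab(P)$ is a minimizer.

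For uniqueness, $\rb(P,\cdot)$ is strictly convex on $\abi$ in the Euclidean sense (strong convexity~\eqref{Eq:MixedSC}), and $\abi$ is convex. So the minimizer is unique and equals $\clab(P)$.

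\emph{Non-expansiveness.} Clipping factors as $\clab = c_\beta \circ c^\alpha$, where $c^\alpha(P) := \max(P,\alpha)$ spectrally and $c_\beta$ clips from above. The upper clip $c_\beta$ is already known to be non-expansive~\cite[Prop.~3]{altschuler2021averaging}, and so it suffices to show the lower clip $c^\alpha$ is a BW contraction.

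I would use the Alberti--Uhlmann variational formula
\[
\mathrm{F}(P,Q) = \tfrac12 \inf_{Y>0} \bigl(\tr[PY] + \tr[QY\iv]\bigr),
\]
so that $\rb(P,Q) = \sup_{Y>0} \bigl(\tr[P(I-Y)] + \tr[Q(I-Y\iv)]\bigr)$, with the supremum attained at $Y_\star = P\iv \# Q$. Let $P' = c^\alpha(P)$ and $Q' = c^\alpha(Q)$, and write $P' = P + D_P$, $Q' = Q + D_Q$ with $D_P, D_Q \ge 0$. Here $D_P$ lives on the spectral subspace of $P$ below $\alpha$, on which $P' = \alpha$; similarly for $D_Q$. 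Evaluating the variational expression for $(P',Q')$ at the optimizer $Y' = P'^{-1} \# Q'$ gives
\[
\rb(P',Q') = \rb\text{-expression}(P,Q;Y') + \tr[D_P(I-Y')] + \tr[D_Q(I - Y'^{-1})] \le \rb(P,Q) + \tr[D_P(I-Y')] + \tr[D_Q(I - Y'^{-1})].
\]
So it suffices to show $\tr[D_P(I-Y')] \le 0$ and $\tr[D_Q(I-Y'^{-1})] \le 0$.

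For the first, on the support of $D_P$ we need $Y' \ge I$ in the compressed sense. Operator monotonicity of $\#$ gives $Y' = P'^{-1}\#Q' \ge P'^{-1}\#(\alpha I) = \sqrt{\alpha}\, P'^{-1/2}$, since $Q' \ge \alpha I$. On the spectral subspace where $P'=\alpha$, this lower bound is $\sqrt\alpha/\sqrt\alpha = 1$. Because $D_P$ commutes with $P'$ and is supported on that eigenspace $E$, we get $\tr[D_P Y'] = \tr[D_P\, E Y' E] \ge \tr[D_P]$, using $EY'E \ge \sqrt\alpha\, E P'^{-1/2}E = E$. This is where the pinching/compression enters. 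The second inequality follows symmetrically, since $Y'^{-1} = Q'^{-1}\#P'$.

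\emph{Main obstacle.} The delicate step is the lower-clip contraction. Convexity of the constraint set does not give it, so everything hinges on the variational formula together with the monotonicity bound $Y' \ge \sqrt\alpha\, P'^{-1/2}$ holding on exactly the subspace where mass was added.
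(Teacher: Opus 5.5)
Your proposal is correct and follows essentially the paper's proof: pinching with the DPI plus strict convexity for the projection part, and, for non-expansiveness, the Alberti--Uhlmann formula evaluated at the optimizer $Y = \hat P^{-1} \# \hat Q$ of the clipped pair combined with operator monotonicity of $\#$. The one difference is that you factor $\clab = c_\beta \circ c^\alpha$ and cite the known one-sided result of Altschuler et al.\ for the ceiling, whereas the paper treats both sides at once: the ceiling terms follow from the mirror bound $Y \leq \hat P^{-1} \# (\beta I) = \sqrt{\beta}\, \hat P^{-1/2}$, so the citation is not actually needed.
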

\end{tbox}
\begin{proof}
    We first prove that the projection is given by clipping.
    To this end, observe that the squared BW distance $\rb(P, \cdot)$ is strictly convex over positive definite matrices~\cite{bhatia2018strong}, and $\abi$ is convex and compact, so the projection exists and is unique.
    We claim the projection $Q_\star$ commutes with $P$.
    Let $\Lambda$ be the pinching channel defined by the eigenprojectors of $P$, so $\Lambda(P) = P$.
    For any $Q \in \abi$ we have $\Lambda(Q) \in \abi$ ($\Lambda$ is positive and unital, hence order-preserving), and the DPI gives $\rb(P, \Lambda(Q)) = \rb(\Lambda(P), \Lambda(Q)) \leq \rb(P, Q)$.
    Applied to $Q_\star$, this shows $\Lambda(Q_\star)$ is also a minimizer; uniqueness forces $\Lambda(Q_\star) = Q_\star$, i.e.\ $[Q_\star, P] = 0$.
    The problem therefore reduces, in a common eigenbasis, to the scalar problem
    \begin{equation}
        \min_{Q \in \abi,\, [Q,P]=0} \bpq  =  \min_{\omega_i \in [\alpha, \beta]} \sumid \left(\lambda_i \hf - \omega_i \hf\right)^2,
    \end{equation}
    with $(\lambda_i)_{i \in [d]}$ the eigenvalues of $P$, which is minimized by clipping each $\lambda_i$ to $[\alpha, \beta]$, resulting in $\clab(P)$.
    
    We now prove the projection is 1-Lipschitz. 
    Begin with the Alberti--Uhlmann variational characterization~\cite{alberti1983note, bhatia2019bures}: for $P, Q > 0$,
    \begin{equation} \label{Eq:AlbertiCharacterization}
        \rb(P, Q) = \max_{X > 0} \Lt( \tr[P+Q] - \tr[P X] - \tr[Q X\iv] \Rt) = \max_{X > 0} \Lt(\la P, I - X \ra + \la Q, I - X \iv \ra \Rt),
    \end{equation}
    with optimizer $X = P \iv \# Q$.
    Write $\hat{P} \equiv \mathrm{clip}_{\alpha, \beta}(P)$, $\hat{Q} \equiv \mathrm{clip}_{\alpha, \beta}(Q)$, and let $Y := \hat{P}\iv \# \hat{Q}$ be the optimizer for the pair $(\hat P, \hat Q)$.
    Since $Y > 0$ is a feasible point, we have $\tr[P + Q] - \tr[P Y] - \tr[Q Y\iv] \le \rb(P, Q).$
    It therefore suffices to prove the following:
    \begin{equation} \label{eq:target_inequality} 
        \rb(\hat P, \hat Q) = \la \hat P, I - Y \ra + \la \hat Q, I - Y \iv \ra \isleq  \la P, I - Y \ra + \la Q, I - Y \iv \ra \leq \bpq,
    \end{equation}
    or equivalently we want to show that $\la \hat P - P, Y - I \ra + \la \hat Q - Q, Y \iv - I\ra \isgeq 0$. 
    Via the spectral decomposition $P = \sum_i \lambda_i v_i v_i \dg$, define the gap matrices
    \begin{equation}
        \Delta_P^{\alpha} = \sum_{\lambda_i < \alpha} (\alpha - \lambda_i) v_i v_i \dg \geq 0, \quad \Delta_P^{\beta} = \sum_{\lambda_i > \beta} (\lambda_i - \beta) v_i v_i \dg \geq 0,
    \end{equation}
    so that $\hat{P} - P = \Delta_P^{\alpha} - \Delta_P^{\beta}$, and define $\Delta_Q^{\alpha}, \Delta_Q^{\beta}$ analogously.
    Then the inequality in question (of \cref{eq:target_inequality}) becomes
    \begin{equation} \label{eq:four_terms}
        \tr[\Delta_P^{\alpha} (Y - I)] + \tr[\Delta_P^{\beta} (I - Y)] + \tr[\Delta_Q^{\alpha} (Y\iv - I)] + \tr[\Delta_Q^{\beta} (I - Y\iv)] \isgeq 0.
    \end{equation}
    We now show each term is nonnegative via operator monotonicity of the geometric mean: $R_1 \geq S_1 \text{ and } R_2 \geq S_2 \Rightarrow R_1 \# R_2 \geq S_1 \# S_2$~\cite{BhatiaPD}.
    For the first term: $\hat Q \geq \alpha I$ implies $Y = \hat P \iv \# \hat Q \geq \hat P \iv \# (\alpha I) = \alpha \hf \hat P \ihf$.
    If $v_i$ is an eigenvector of $P$ with eigenvalue $\lambda_i \leq \alpha$, then $v_i$ is an eigenvector of $\hat P$ with eigenvalue $\alpha$, whence
    \begin{equation}
        \la v_i , (Y - I) v_i \ra \geq \alpha \hf \la v_i , \hat P \ihf v_i \ra  - 1 = \sqrt{\alpha/\alpha} - 1  = 0,
    \end{equation}
    and therefore $\tr[\Delta_P^\alpha (Y - I)] = \sum_{i : \lambda_i < \alpha} (\alpha - \lambda_i) \la v_i , (Y - I) v_i \ra \geq 0$.
    The second term is nonnegative by the same argument starting from $\hat Q \leq \beta I$.
    For the last two, observe $Y \iv = (\hat P \iv \# \hat Q) \iv = \hat P \# \hat Q \iv$, and apply the same argument with the roles of $(P, Q)$ exchanged.
\end{proof}
The same operation lifts into the exact Frobenius projection in the total manifold:
\begin{equation}
    \mathrm{clip}^{\mathrm{sv}}_{\sqrt{\alpha}, \sqrt{\beta}}[A] = \argmin_{B \in \pi \iv [\abi]} \| A - B \|_\mathrm{F},
\end{equation}
see \cref{Cor:FrobeniusProjection} in~\cref{App:TotalManifold}.
Unlike its analog in the base BW manifold, the total-manifold Frobenius projection---whose feasible set $\pi\iv[\abi]$ is non-convex---can be \emph{expansive} (see \cref{Rem:NoFrobeniusNonExpansive}).

\begin{remark}[Extension to semidefinite matrices] \label{Rem:PSDExtension}
    \cref{Lem:ProjectionLemma} is stated for positive definite matrices; both assertions extend to all $P, Q \in \mathrm{PSD}(d)$ by continuity.
    The map $\clab$ and the squared BW distance are jointly continuous, so the non-expansiveness inequality passes to the limit directly.
    For the projection property, take $P_\eps := P + \eps I$ for $\eps \searrow 0$: minimality of $\clab(P_\eps)$, i.e.\ $\rb(P_\eps, \clab(P_\eps)) \leq \rb(P_\eps, Q)$ for every $Q \in \abi$, survives the limit $\eps \searrow 0$, so $\clab(P)$---which lifts the zero eigenvalues of $P$ to $\alpha$---remains a BW projection of $P$ onto $\abi$.
    We caution that strict convexity of $\rb(P, \cdot)$ can fail for singular $P$ along directions supported on $\ker P$, so uniqueness of the projection can fail in the semidefinite case; we never need it, as the algorithm only projects positive definite iterates.
\end{remark}

\subsection{Projected Bures--Wasserstein gradient descent} \label{Sec:Algorithms}
A single iteration of Projected BW-GD composes the unconstrained gradient step with the projection.
Throughout we take $\eta = 1$; the unconstrained unit step is then exactly the fixed-point map of~\cite{alvarez2016fixed} (respectively~\cite{brahmachari2025fixed}), which we denote by $\rmk$:
\begin{equation} \label{Eq:ProjectedUpdate}
    S_{t+1} := \clab[\rmk(S_t)], \qquad \rmk(S_t) := \mathrm{Exp}^\mathrm{BW}_{S_t}[- \overline{\nabla} f(S_t)] = [I - 2 \nabla f(S_t)]\, S_t\, [I - 2 \nabla f(S_t)].
\end{equation}
Both instantiations are collected in~\cref{alg:projected_bwgd}; the descent and trapping properties that make the composition work are established in~\cref{Sec:Convergence}.

\noindent
\begin{algorithm}[H]
    \caption{Projected BW Gradient Descent (barycenter and invariant matrix projection)}
    \label{alg:projected_bwgd}
    \begin{algorithmic}[1]
        \Require Barycenter ($f = f_1$): ensemble $\mathcal{R} = (R_i)_{i=1}^n$ and weight vector $w$; invariant projection ($f = f_2$): $R \in \pdd$ and unitary representation $\mathcal{U} = \{U_i\}_{i=1}^n$; iteration count $T$.
        \State $(\alpha, \beta) \gets \begin{cases} \big(\min_i \lmin(R_i),\ \max_i \lmax(R_i)\big) & : f = f_1 \\ \big(\lmin(R),\ \lmax(R)\big) & : f = f_2 \end{cases}$
        \State Initialize $S_0 \in \abi$, additionally $S_0 \in \clp$ when $f = f_2$ (e.g., $S_0 \gets \frac{\alpha+\beta}2 I$)
        \For{$t = 0, 1, \ldots, T-1$}
        \State $\rmk(S_t) \gets \begin{cases}
            S_t\ihf \left( \sum_{i=1}^n w_i \sqrt{S_t\hf R_i S_t\hf} \right)^2 S_t\ihf & : f = f_1 \\[4pt]
            S_t\ihf \left( \Phi\left( \sqrt{S_t\hf R S_t\hf} \right) \right)^2 S_t\ihf & : f = f_2
        \end{cases}$ \Comment{\textit{Step 1: Unit-sized BW-GD Step.}}
        \State $S_{t+1} \gets \clab[\rmk(S_t)]$ \Comment{\textit{Step 2: Projection onto $\abi$ via Clipping.}}
        \EndFor
        \State \Return $S_T$
    \end{algorithmic}
\end{algorithm}
\cref{Sec:Convergence} says how large the iteration count $T$ must be, and~\cref{Cor:Certificate} gives an a posteriori stopping rule that dispenses with fixing $T$ in advance.

\paragraph{The projection is free.}
We note that the projection, via eigenvalue clipping, is essentially free for both problems---it requires nothing more than what the unprojected variant already computes.
Indeed the clipping operation, which constructs $S_{t+1}$ from $\rmk(S_t)$, requires an eigendecomposition of $\rmk(S_t)$.
However, the very next iteration opens by forming $S_{t+1}^{\pm 1/2}$ in order to evaluate $\rmk(S_{t+1})$, which also requires the eigendecomposition of $S_{t+1}$.
Since $S_{t+1} = \clab[\rmk(S_t)]$ is obtained from $\rmk(S_t)$ by acting on its eigenvalues alone, the two share an eigenbasis, and the single decomposition serves both: the projection merely does, one step early, what the unprojected iteration would perform at the start of the very next step.
The clip itself then costs $O(d)$, a comparison per eigenvalue.
Counting spectral decompositions, each of cost $O(d^3)$: an iteration performs $n$ of them for the geometric means $\sqrt{S_t^{1/2} R_i S_t^{1/2}}$ and one for the new iterate, whether or not it is projected---so both variants cost $n + 1$ per step, and the clip contributes only the $O(d)$ comparisons.
Hence the iteration counts reported in~\cref{Sec:Numerics} may be read as running times.
For clarity of exposition, \cref{alg:projected_bwgd} is stated without this reuse; the implementation accompanying the article realizes it.

\paragraph{The iteration on the total manifold.}
The base--total equivalence also lets the whole iteration run upstairs: from any $A_0 \in \pi \iv [S_0]$, singular-value-clipped Euclidean GD of $g$ reproduces the projected iteration exactly, returning to $\pdd$ only upon convergence (\cref{App:TotalManifold}).
We record this not for its computational content but for its structure: upstairs, the update is an averaging of Procrustes-aligned square roots---the form in which the unit-step descent lemma is a variance identity and Euclidean strong convexity becomes the PL inequality that drives~\cref{Sec:Convergence}.

\section{Dimension-Independent Linear Convergence at Unit Step-Size} \label{Sec:Convergence}
We may now prove the unified convergence theorem, and to this end, we clarify the notation.
Throughout this section, $f \in \{f_1, f_2\}$, with $(\alpha, \beta)$ read off the inputs as in~\cref{alg:projected_bwgd} and
\begin{equation}
    \hat \nabla g :=
    \begin{cases}
        \nabla g &  : f = f_1  \\
        \Phi \circ \nabla g &  : f = f_2.
    \end{cases}
\end{equation}
Moreover, the domain of interest is $\abi$ (and $\pi \iv [\abi]$) when $f = f_1$ and $\abi \cap \mathcal{P}$ (and $\pi \iv [\abi] \cap \mathcal{G}$) when $f = f_2$. 
In both cases, $f$ is $\mu$-SC on $\abi$ with $\mu$ as in~\cref{Eq:MuF}.
The optimum $S_\star$ exists, is unique, and lies in the corresponding domain of interest: existence and uniqueness are classical for the barycenter~\cite{agueh2011barycenters} and follow from strong convexity on the compact feasible region for the invariant projection, while the spectral inclusion $S_\star \in \abi$ is established in~\cite{altschuler2021averaging,brahmachari2025fixed}.

The first ingredient is a PL inequality, obtained by transferring strong convexity through the lift of~\cref{Sec:AlgorithmSqrt}, a fact previously noted in~\cite[Lemma 3.4]{junyi2024convergence}; we state it in a form covering both of our problems at once, with the twirl $\Phi$ taken to be the identity map for the barycenter problem ($\mathcal{U} = \{I\}$, so $\cmu = \mathrm{L}(d)$, $\clp = \pdd$, and $\mathcal{G} = \gld$).
\begin{tbox}
    \begin{lemma}[PL inequality for the lifted problem] \label{Lem:PLIneqfromSC}
        Let $0 < a \leq b < \infty$, $\mathcal{U}$ be a projective unitary representation with twirl $\Phi$, and $f$ be $\mu_0$-strongly convex on $[a I, b I]$ with $S_\star = \argmin_{S \in \clp \cap [a I, b I]} f(S)$.
        Then $g := f \circ \pi$ satisfies, for every $A \in \pi \iv \big[[a I, b I]\big] \cap \mathcal{G}$,
        \begin{equation}
            g(A) - g(A_\star) \le \frac{1}{2(4 \mu_0 a)} \| \Phi(\nabla g(A)) \|_{\mathrm{F}}^2,
        \end{equation}
        where $A_\star \in \pi \iv [S_\star] \cap \mathcal{G}$.
        In particular, for the barycenter functional $f_1$ the twirl is trivial ($\mathcal{U} = \{I\}$), and $g_1 := f_1 \circ \pi$ satisfies $g_1(A) - g_1(A_\star) \le \frac{1}{2(4\mu_0 a)} \|\nabla g_1(A)\|_\rf^2$ on $\pi\iv[[a I, b I]]$.
    \end{lemma}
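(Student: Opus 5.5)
We plan to push the classical Polyak--{\L}ojasiewicz argument through the lift. Fix $A \in \pi\iv[[aI,bI]] \cap \mathcal{G}$ and set $S := AA\dg \in \clp \cap [aI, bI]$. Since $g(A) - g(A_\star) = f(S) - f(S_\star)$, the claim is a statement on the base manifold. We will bound the optimality gap by a Euclidean gradient norm downstairs, and then relate that norm to the lifted gradient.

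\textbf{Step 1 (strong convexity gives a gap bound).} Both $S$ and $S_\star$ lie in the convex set $\clp \cap [aI,bI]$. Apply $\mu_0$-strong convexity with $Q = S_\star$:
\[
f(S_\star) \ge f(S) + \la \nabla f(S), S_\star - S\ra + \tfrac{\mu_0}{2}\|S_\star - S\|_\rf^2 .
\]
The difference $S_\star - S$ lies in $\mathcal{H} \subseteq \cmu$, and $\Phi$ is the Hilbert--Schmidt orthogonal projection onto $\cmu$. Hence we may replace $\nabla f(S)$ by $G := \Phi(\nabla f(S))$. We do not minimize over the whole space, since that would lose the factor $a$. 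Instead we use the weighted Cauchy--Schwarz inequality
\[
|\la G, Z\ra| = |\Re\tr[(S\hf G)(S\ihf Z)]| \le \|S\hf G\|_\rf\,\|S\ihf Z\|_\rf .
\]
It is combined with $\|S\ihf Z\|_\rf^2 \le a\iv \|Z\|_\rf^2$. This is a loose bound, because $\|S\ihf Z\|_\rf^2 = \tr[Z S\iv Z] \le \lmax(S\iv)\|Z\|_\rf^2$. Writing $x = \|Z\|_\rf$ with $Z = S_\star - S$, we get
\[
f(S) - f(S_\star) \le \|S\hf G\|_\rf\, a\ihf\, x - \tfrac{\mu_0}{2}x^2 \le \frac{\tr[G S G]}{2\mu_0 a}.
\]

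\textbf{Step 2 (relate to the lifted gradient).} Since $A \in \mathcal{G}$ commutes with every $U_i$, the twirl commutes with right multiplication by $A$: $\Phi(XA) = \frac1n\sum_i U_i X A U_i\dg = \Phi(X)A$. Therefore
\[
\Phi(\nabla g(A)) = 2\Phi(\nabla f(S))A = 2GA, \qquad \|\Phi(\nabla g(A))\|_\rf^2 = 4\tr[GSG].
\]
Substituting gives $f(S) - f(S_\star) \le \frac{1}{2(4\mu_0 a)}\|\Phi(\nabla g(A))\|_\rf^2$, which is the claim. The barycenter case is $\Phi = \mathrm{id}$.

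\textbf{Main obstacle.} The delicate point is the twirl. We must ensure that replacing $\nabla f$ by $\Phi(\nabla f)$ is legitimate: it is, because the strong-convexity chord direction stays in the commutant. We also need that $\Phi$ passes through right multiplication by $A$, which requires $A$ itself to be invariant. Both facts follow from the choice of domain $\pi\iv[\cdot] \cap \mathcal{G}$.
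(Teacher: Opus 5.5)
Your proof is correct and follows the paper's route: strong convexity at $S_\star$, replacing $\nabla f(S)$ by its twirl $G$ because $S_\star - S \in \mathcal{H}$, and lifting via $\Phi(\nabla g(A)) = 2GA$ with $\|2GA\|_\rf^2 = 4\tr[GSG]$; the only difference is that you use $S \geq aI$ inside a weighted Cauchy--Schwarz before optimizing the quadratic, whereas the paper completes the square in the Frobenius norm to get $\frac{1}{2\mu_0}\|G\|_\rf^2$ and uses $S \geq aI$ afterwards via $\tr[G^2 S] \geq a\|G\|_\rf^2$, which yields the identical constant. Consequently your remark that minimizing over the whole space ``would lose the factor $a$'' is inaccurate, and the pairing should read $\la S\hf G, S\ihf Z \ra = \Re\tr[(S\hf G)\dg S\ihf Z]$ (a dagger is missing in your formula), but neither point affects the validity of the argument.
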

\end{tbox}
\begin{proof}
    Let $S := \pi(A) \in \clp \cap [a I, b I]$.
    For any $P \in \clp \cap [a I, b I]$, strong convexity gives
    \begin{equation} \label{Eq:54}
        f(P) \geq f(S) + \la \nabla f(S), P - S \ra + \frac{\mu_0}{2} \|P - S\|_\mathrm{F}^2.
    \end{equation}
    Since $P - S \in \mathcal{H}$ and $\Phi$ is the self-adjoint idempotent HS-projection onto $\cmu$, we may replace $\nabla f(S)$ by $G := \Phi(\nabla f(S))$ in the inner product.
    Writing $Z := P - S$ and completing the square,
    \begin{equation}
        \la G, Z \ra + \frac{\mu_0}{2}\|Z\|_\mathrm{F}^2 = \frac{\mu_0}{2}\Big\| Z + \frac{G}{\mu_0}\Big\|_\mathrm{F}^2 - \frac{1}{2\mu_0}\|G\|_\mathrm{F}^2  \geq  - \frac{1}{2\mu_0}\|G\|_\mathrm{F}^2 .
    \end{equation}
    Substituting into~\cref{Eq:54} and choosing $P = S_\star$,
    \begin{equation} \label{Eq:InvariantPLBase}
        f(S) - f(S_\star) \le \frac{1}{2 \mu_0} \|\Phi(\nabla f(S))\|_\mathrm{F}^2.
    \end{equation}
    To lift, note $\Phi(\nabla g(A)) = 2 \Phi(\nabla f(S)) A$, using $\Phi(XA) = \Phi(X)A$ for $A \in \mathcal{G}$; hence
    \begin{equation}
        \|\Phi(\nabla g(A))\|_\mathrm{F}^2 = 4 \tr\Big( \Phi(\nabla f(S))^2 S \Big) \geq 4 a \|\Phi(\nabla f(S))\|_\mathrm{F}^2,
    \end{equation}
    since $S \geq a I$.
    Combining with~\cref{Eq:InvariantPLBase} and $g(A) = f(S)$, $g(A_\star) = f(S_\star)$ gives the claim.
\end{proof}

The special case of~\cref{Lem:PLIneqfromSC} for $f = f_2$ appears in~\cite[Eq.\ 58]{brahmachari2025fixed}.
We broadly follow the proof idea used there and in~\cite{karimi2016linear}.
The lemma as stated is strictly more general: it covers both problems at once and applies to any strongly convex $f$, with the setting of~\cite{brahmachari2025fixed} recovered on choosing $f = f_2$.
In~\cref{App:BRT_Algorithm} we explain that the \textit{twirled} gradient, rather than the gradient, appears because (i) in the base-manifold perspective, it is the BW-orthogonal projection of $\nabla f_2(P)$ onto the invariant tangent subspace $\mathcal{H} \equiv \tp \mathcal{P} \subseteq \tpp$ and (ii) in the total-manifold perspective, it is the (Hilbert--Schmidt) orthogonal projection of the gradient $\nabla g(A)$ to the invariant subspace $\cmu$ (of $\mathrm{L}(d)$).

The second ingredient is the descent half of the argument, supplied by the following proposition; combined with the PL inequality above, it yields the linear rate.

\begin{tbox}
    \begin{prop}[Projected descent lemma] \label{Prop:ProjectedDescent}
        Let $f \in \{f_1, f_2\}$ and $g := f \circ \pi$ be the lifted functional. 
        Let the current \textit{square-root} iterate be
        \begin{equation} \label{Eq:ProjLemmaAt}
            A_t \in 
            \begin{cases}
                \pi \iv [\abi] & : f = f_1 \\
                \pi \iv [\abi] \cap \mathcal{G} & : f = f_2 \\
            \end{cases}, 
        \end{equation} 
        $S_t = \pi (A_t)$, and set $A_{t+1} = \mathrm{clip}^{\mathrm{sv}}_{\sqrt\alpha, \sqrt\beta}[A_t - \hat \nabla g(A_t)]$ and $S_{t+1} = \pi(A_{t+1}) = \clab[\rmk(S_t)]$.
        Then:
        \begin{enumerate}[label=(\roman*), leftmargin=*]
            \item {(Feasibility.)} $S_{t+1} \in \abi$ and $A_{t+1} \in \pi \iv [\abi]$; when $f = f_2$, additionally $S_{t+1} \in \clp$ and $A_{t+1} \in \mathcal{G}$.
            \item {(Descent.)} $f(S_{t+1}) \leq f(S_t) - \frac12 \mathrm{B}(S_t, \rmk(S_t))$, equivalently, upstairs,
            \begin{equation}
                g(A_{t+1}) \leq g(A_t) - \frac12 \|\hat \nabla g(A_t)\|_{\mathrm{F}}^2.
            \end{equation}
        \end{enumerate}
    \end{prop}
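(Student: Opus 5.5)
Feasibility comes first. The clip $\clab$ maps into $\abi$ by construction, and $\pi(\mathrm{clip}^{\mathrm{sv}}_{\sqrt\alpha,\sqrt\beta}[B]) = \clab[\pi(B)]$ since the left singular vectors of $B$ diagonalize $BB\dg$ with eigenvalues the squared singular values. For $f = f_2$ we need invariance to be preserved.
\begin{itemize}
\item The unconstrained step gives $A_t - \Phi(\nabla g(A_t)) = \Phi(I - 2\nabla f(S_t))A_t = \Phi(S_t\iv \# R)A_t$, a product of elements of $\mathcal{G}$. It therefore lies in $\mathcal{G}$, and so $\rmk(S_t) \in \clp$.
\item Each $U_i$ commutes with $\rmk(S_t)$, hence with its spectral projectors, hence with $\clab[\rmk(S_t)]$. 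So $S_{t+1} \in \clp$.
\item For the upstairs statement, write $B = H V$ in polar form with $H = (BB\dg)\hf$. Then $\mathrm{clip}^{\mathrm{sv}}(B) = \mathrm{clip}_{\sqrt\alpha,\sqrt\beta}(H)V$. Here $H$ is invariant because $BB\dg$ is, and $V = H\iv B$ is invariant, so $A_{t+1} \in \mathcal{G}$.
\end{itemize}

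For descent, I split the step into the gradient part and the projection part. The gradient part is the unit-step descent inequality $f(\rmk(S_t)) \le f(S_t) - \tfrac12 \rb(S_t,\rmk(S_t))$, which the paper attributes to the variance inequality (\cref{Prop:VarianceInequalityBuresDistance}) and to~\cite{bhatia2019bures,brahmachari2025fixed}.

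For $f_1$ I would prove it upstairs as follows.
\begin{itemize}
\item Set $T_i := S_t\iv \# R_i$, $M := \sum_i w_i T_i$ and $B_i := T_i A_t$. Each pair $(A_t, B_i)$ is an optimal pairing for $(S_t, R_i)$ by \cref{Lem:StepLength}, so $2f_1(S_t) = \sum_i w_i\|A_t - B_i\|_\rf^2$.
\item The updated root is $MA_t = \sum_i w_i B_i$, the mean of the $B_i$.
\item The variance identity then gives
\[
\sum_i w_i\|A_t - B_i\|_\rf^2 = \|A_t - \bar B\|_\rf^2 + \sum_i w_i\|\bar B - B_i\|_\rf^2, \qquad \bar B := \sum_i w_i B_i .
\]
\item By \cref{Eq:BuresMin}, the last sum dominates $2f_1(\rmk(S_t))$.
\item Finally, $\|A_t - \bar B\|_\rf^2 = \rb(S_t,\rmk(S_t)) = \|\nabla g(A_t)\|_\rf^2$ by \cref{Lem:StepLength}.
\end{itemize}

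For $f_2$ the same scheme applies with one reference $B = (S_t\iv\#R)A_t$ and the mean replaced by the HS projection $\Phi(B) = \Phi(S_t\iv \# R)A_t$, where we use $A_t \in \mathcal{G}$. Pythagoras for the orthogonal projection $\Phi$ gives
\[
\|A_t - B\|_\rf^2 = \|A_t - \Phi(B)\|_\rf^2 + \|\Phi(B) - B\|_\rf^2 .
\]
The second term bounds $\rb(\rmk(S_t), R)$ from above, by \cref{Eq:BuresMin}.

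The projection part is where \cref{Lem:ProjectionLemma} enters: the objective cannot increase under clipping. Every reference is fixed by the clip ($R_i \in \abi$, resp.\ $R \in \abi$), so non-expansiveness gives
\[
\rb(R_i, \clab[\rmk(S_t)]) = \rb(\clab R_i, \clab \rmk(S_t)) \le \rb(R_i, \rmk(S_t)).
\]
Summing with weights gives $f(S_{t+1}) \le f(\rmk(S_t))$. Chaining this with the gradient-step inequality yields (ii). The upstairs form follows from $g = f\circ\pi$ and \cref{Lem:StepLength}, using $\rb(S_t,\rmk(S_t)) = \|\hat\nabla g(A_t)\|_\rf^2$.

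The main obstacle is the unit-step descent inequality, specifically checking that the chosen lifts $B_i$ are simultaneously optimally paired with $A_t$. This is what makes the variance identity exact on the left and only an upper bound on the right. I would verify it via $\Re\la A_t, T_iA_t\ra = \tr[T_iS_t] = \mathrm{F}(S_t,R_i)$, using $T_i S_t T_i = R_i$.
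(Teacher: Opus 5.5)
Your proof is correct and has the same architecture as the paper's: feasibility through the clip intertwining with $\pi$ and preserving the polar factor (you use the left polar decomposition where the paper uses the right one), and descent as the unit-step inequality chained with non-expansiveness of $\clab$ at the clip-fixed references, with the upstairs form from \cref{Lem:StepLength}. The one real difference is that you prove the unit-step inequality instead of citing it: for $f_1$ your lifts $T_iA_t = R_i\hf\,\mathrm{Pol}(R_i\hf A_t)$ reproduce the variance-identity proof of \cref{Prop:VarianceInequalityBuresDistance}, now for an arbitrary root $A_t$; for $f_2$ your Pythagoras argument, with $\Phi$ as the Hilbert--Schmidt projection onto $\cmu$, is a self-contained replacement for the paper's appeal to Brahmachari et al., and it shows both descent inequalities as an orthogonal projection upstairs followed by the quotient contraction.
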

\end{tbox}
\begin{proof}
    Assume $A_t$ satisfies~\cref{Eq:ProjLemmaAt} and $S_t := \pi(A_t)$.
    We first verify that feasibility is preserved.
    Membership $S_{t+1} \in \abi$ holds by construction of the clipping map.
    For $f_2$, invariance is also preserved.
    Upstairs,
    \begin{equation}
        A_t \in \mathcal{G} \Longrightarrow A_t - \Phi( \nabla g_2(A_t)) \in \mathcal{G}.
    \end{equation}
    Recall that $\mathcal{G} := \cmu \cap \gld$.
    The inclusion (of the unclipped next iterate) in $\cmu$ follows since both terms lie in $\cmu$ ($\Phi$ maps into $\cmu$) and $\cmu$ is a vector subspace of $\mathrm{L}(d)$.
    For invertibility, note that $\Phi(\nabla g_2(A_t)) = 2 \Phi(\nabla f_2(S_t)) A_t$, using $\Phi(XA) = \Phi(X) A$ for $A \in \mathcal{G}$; hence $A_t - \Phi(\nabla g_2(A_t)) = [I - 2 \Phi(\nabla f_2(S_t))] A_t$, and $I - 2\Phi(\nabla f_2(S_t)) = \Phi(S_t \iv \# R) > 0$ (see~\cref{Eq:SubmanifoldUpdateRule,Eq:BRTFixedPoint}) while $A_t$ is invertible by assumption.
    On the base manifold, the unconstrained step $\rmk(S_t) = \Phi(S_t\iv \# R)\, S_t\, \Phi(S_t\iv \# R)$ is a product of elements of $\cmu$, and $\cmu$ is an algebra; hence $\rmk(S_t) \in \clp$.
    Since $\clab$ acts spectrally, we conclude $S_{t+1} \in \abi \cap \clp$.
    Finally, to see that $A_{t+1} \in \mathcal{G}$, write $A'_t := A_t - \Phi(\nabla g_2(A_t)) \in \mathcal{G}$ and note that singular value clipping preserves the polar factor (it modifies only the singular values), so that
    \begin{equation}
    \begin{aligned}
        A_{t+1} &= \mathrm{Pol}(A'_t) \cdot  \mathrm{clip}_{\sqrt\alpha, \sqrt\beta}(|A'_t|) \\
        &=  (A'_t |A_t'| \iv)  \cdot \mathrm{clip}_{\sqrt\alpha, \sqrt\beta}(|A'_t|) \in \cmu,
    \end{aligned}
    \end{equation}
    where the last inclusion follows from the facts that every factor in the product is an element of $\cmu$---for $A'_t$ by the previous step; for $|A'_t| = (A_t'^\dagger A_t')\hf$ and its clipped version because, for Hermitian $Z \in \cmu$ and any function $h$ applied spectrally, $U_i\, h(Z)\, U_i\dg = h(U_i Z U_i\dg) = h(Z)$, so $h(Z) \in \cmu$; and for $|A'_t|\iv$ since the inverse of an invertible element of $\cmu$ lies in $\cmu$---and that $\cmu$ is an algebra and hence it is closed under products. 
    As the clipped singular values lie in $[\sqrt\alpha, \sqrt\beta]$, $A_{t+1}$ is invertible; hence $A_{t+1} \in \cmu \cap \gld = \mathcal{G}$.    

    Because the reference matrices---the ensemble $(R_i)_{i\in[n]}$ for $f_1$, the target $R$ for $f_2$---lie in $\abi$ and are fixed by $\clab$, the non-expansiveness of the BW projection guarantees that the objective does not increase upon projecting the unconstrained step $\rmk(S_t)$:
    \begin{equation}
        f(S_{t+1}) = f(\clab[\rmk(S_t)]) \leq f(\rmk(S_t)).
    \end{equation}
    Furthermore, the objective value at the unconstrained step satisfies the descent inequality $f(\rmk(S_t)) \leq f(S_t) - \frac12 \mathrm{B}(S_t, \rmk(S_t))$, established for $f_1$ in~\cite[Theorem 10]{bhatia2019bures} (an alternative proof of which is provided as~\cref{Prop:VarianceInequalityBuresDistance} in the Appendix) and for $f_2$ in~\cite[Eq.~37]{brahmachari2025fixed}.
    Combining these two inequalities yields the descent lemma for the projected sequence in the base manifold:
    \begin{equation}
        f(S_{t+1}) \leq f(S_t) - \frac12 \mathrm{B}(S_t, \rmk(S_t)).
    \end{equation}
    
    To establish the equivalence in the square-root manifold, recall that $f \circ \pi = g$, yielding $f(S_t) = g(A_t)$; moreover, since eigenvalue clipping on the base manifold is equivalent to singular value clipping on the square-root manifold, $\pi \circ \mathrm{clip}^{\mathrm{sv}}_{\sqrt\alpha, \sqrt\beta} = \clab \circ \pi$~(\cref{Cor:FrobeniusProjection}), we have $\pi(A_{t+1}) = S_{t+1}$ and hence $f(S_{t+1}) = g(A_{t+1})$.
    It remains to establish that $\mathrm{B}(S_t, \rmk(S_t)) = \| \hat \nabla g(A_t)\|^2_\rf$, which follows on applying~\cref{Lem:StepLength} with
    \begin{equation}
        X = \hat \nabla g(A_t) A_t \iv =
        \begin{cases}
            2 \nabla f_1(S_t) & :  f = f_1\\
            2 \Phi (\nabla f_2 (S_t))  &: f = f_2
        \end{cases}.
    \end{equation}
    The choice of $X$ is admissible for~\cref{Lem:StepLength}: it is Hermitian, and
    \begin{equation}
        I - X =
        \begin{cases}
            \sumin w_i \, [S_t\iv \# R_i] & : f = f_1 \\
            \Phi(S_t \iv \# R) & : f = f_2
        \end{cases}
    \end{equation}
    is positive definite in both cases, being a weighted sum (respectively, the twirl) of geometric means of positive definite matrices.
    Since $\hat \nabla g(A_t) = X A_t$, the unconstrained Euclidean step satisfies $A_t - \hat\nabla g(A_t) = (I - X) A_t \in \pi\iv[\rmk(S_t)]$, and~\cref{Eq:StepLengthGeneral} yields
    \begin{equation}
        \mathrm{B}(S_t, \rmk(S_t)) = \|X A_t\|_\rf^2 = \|\hat \nabla g(A_t)\|_\rf^2.
    \end{equation}
    Combining the above, the descent inequality is equivalently expressed as $g(A_{t+1}) \leq g(A_t) - \frac12 \|\hat \nabla g(A_t)\|_\rf^2$.
    This concludes the proof.
\end{proof}
We now prove that the projected BW-GD algorithm achieves dimension-independent linear convergence at unit step-size.
\begin{tbox}
    \begin{theorem}[Dimension-independent linear convergence] \label{Thm:UnifiedConvergence}
        Let $f \in \{f_1, f_2\}$ and $S_0 \in \abi$ (additionally, $S_0 \in \mathcal{P}$ if $f = f_2$).
        The iterates of~\cref{alg:projected_bwgd} satisfy
        \begin{equation}
            f(S_t) - f(S_\star) \leq \left(1 - \frac{1}{\kappa^{3/2}}\right)^t (f(S_0) - f(S_\star)).
        \end{equation}
    \end{theorem}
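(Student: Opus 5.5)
The plan is to combine the Projected descent lemma (\cref{Prop:ProjectedDescent}) with the PL inequality (\cref{Lem:PLIneqfromSC}) in the classical Polyak--{\L}ojasiewicz fashion. I will first lift everything to the square-root manifold. I pick any $A_0 \in \pi\iv[S_0]$; when $f = f_2$ I take $A_0 := S_0\hf$, which lies in $\mathcal{G}$ because $S_0 \in \clp$ and spectral functions of invariant Hermitian matrices stay in $\cmu$. I then define $A_{t+1} := \mathrm{clip}^{\mathrm{sv}}_{\sqrt\alpha,\sqrt\beta}[A_t - \hat\nabla g(A_t)]$. By \cref{Prop:ProjectedDescent}(i) and induction on $t$, every $A_t$ lies in the domain of interest $\pi\iv[\abi]$ (intersected with $\mathcal{G}$ for $f_2$), and $\pi(A_t) = S_t$ coincides with the iterates of \cref{alg:projected_bwgd}. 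Consequently $g(A_t) = f(S_t)$ for every $t$.

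The key step is a one-line contraction. \cref{Prop:ProjectedDescent}(ii) gives
\[
g(A_{t+1}) \le g(A_t) - \tfrac12 \|\hat\nabla g(A_t)\|_\rf^2 .
\]
I apply \cref{Lem:PLIneqfromSC} with $a = \alpha$, $b = \beta$ and $\mu_0 = \mu$ from \cref{Eq:MuF}. This is valid because $f$ is $\mu$-SC on $\abi$, $S_\star \in \abi$ (and $S_\star \in \clp$ for $f_2$), and $A_t$ lies in the required set. The lemma yields
\[
\|\hat\nabla g(A_t)\|_\rf^2 \ge 2(4\mu\alpha)\,(g(A_t) - g(A_\star)) = 2\kappa^{-3/2}(f(S_t) - f(S_\star)),
\]
using $4\mu\alpha = \kappa^{-3/2}$. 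Note that $\hat\nabla g = \Phi\circ\nabla g$ for $f_2$ matches the twirled gradient in the lemma, and that $\Phi$ is the identity for $f_1$. Substituting into the descent inequality and subtracting $f(S_\star)$ gives
\[
f(S_{t+1}) - f(S_\star) \le (1 - \kappa^{-3/2})\,(f(S_t) - f(S_\star)),
\]
and unrolling this recursion over $t$ steps yields the claim.

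The only delicate point is consistency bookkeeping rather than a new estimate. For $f_2$, the minimizer $S_\star$ in \cref{Lem:PLIneqfromSC} is the minimizer over $\clp \cap \abi$, and it must be checked that this agrees with the global invariant optimum of \cref{Problem2}. This holds because the global optimum lies in $\abi$ by the spectral inclusion cited at the start of \cref{Sec:Convergence}. I also need the factor $1-\kappa^{-3/2}$ to be nonnegative, which is immediate from $\kappa \ge 1$. I do not expect any step to be a real obstacle: the heavy lifting (non-expansiveness of clipping, the unit-step descent inequality, and the lifted PL inequality) has already been done in the preceding lemmas.
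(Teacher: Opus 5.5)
Your proposal is correct and follows the paper's proof essentially step for step. You lift to the square-root manifold (taking $A_0 = S_0\hf \in \mathcal{G}$ for $f_2$), propagate feasibility by induction through part (i) of the projected descent lemma, and combine the descent inequality with the lifted PL inequality at $a = \alpha$, $\mu_0 = \mu$, using $4\mu\alpha = \kappa^{-3/2}$ to obtain the one-step contraction; your check that the minimizer over $\clp \cap \abi$ is the global optimum is the same observation the paper makes at the start of its convergence section.
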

\end{tbox}
\begin{proof}
    The proof is obtained through a combination of~\cref{Prop:ProjectedDescent} (the projected descent lemma) and~\cref{Lem:PLIneqfromSC} (PL inequality from strong convexity). 
    Both are available at every step, as we now check by induction.
    Feasibility propagates along the iteration by~\cref{Prop:ProjectedDescent}(i):
    \begin{equation}
        A_t \in \pi \iv [S_t] \text{ and } S_t \in \abi \quad \Rightarrow \quad A_{t+1} \in \pi \iv [S_{t+1}] \text{ and } S_{t+1} \in \abi,
    \end{equation}
    and, when $f = f_2$, so does invariance: $A_t, S_t \in \cmu \Rightarrow A_{t+1}, S_{t+1} \in \cmu$.
    The hypotheses hold at $t = 0$, since $S_0 \in \abi$ and $A_0 \in \pi \iv [S_0]$ by assumption; and for $f = f_2$ we may take $A_0 = S_0 \hf$, which lies in $\cmu$ because $S_0 \in \clp$ and $\cmu$ is closed under the functional calculus of its elements (see the proof of~\cref{Prop:ProjectedDescent}).
    Induction therefore gives $S_t \in \abi$ and $A_t \in \pi \iv [\abi]$ for every $t \geq 0$, together with $A_t, S_t \in \cmu$ when $f = f_2$.
    Thus we may invoke both the projected descent lemma and the PL inequality at every step.
    The descent lemma gives
    \begin{equation}
        g(A_{t+1}) \leq g(A_t) - \frac12 \| \hat \nabla g(A_t)\|^2_\rf \quad \Longleftrightarrow \quad     \eps_{t+1} \leq \eps_t -  \frac12 \| \hat \nabla g(A_t)\|^2_\rf,
    \end{equation}
    where the equivalence is obtained by subtracting $g(A_\star) = f(S_\star)$ from both sides and defining $\eps_{t} := g(A_t) - g(A_\star)$. 
    Since $f$ is $\mu$-SC over $\abi$, we invoke the PL inequality (\cref{Lem:PLIneqfromSC}, with $[a I, b I] = \abi$, $\mu_0 = \mu$ and $a = \alpha$)
    \begin{equation}
        4 \mu \alpha \eps_t \leq \frac12 \| \hat \nabla g(A_t)\|_\rf^2.  
    \end{equation}
    Combining the above two expressions, and using the relation $4 \mu \alpha = \Lt(\frac{\alpha}{\beta}\Rt)^{3/2} = \kappa^{-3/2}$, yields $\eps_{t+1} \leq (1 - \kappa^{-3/2}) \eps_t$.
    Unrolling the inequality over $t$ iterations we get $\eps_{t} \leq (1 - \kappa^{-3/2})^t \eps_0$. 
    Substituting $\varepsilon_t = f(S_t) - f(S_\star)$ and $\varepsilon_0 = f(S_0) - f(S_\star)$, we get the required result, which is manifestly dimension-independent:
    \begin{equation}
        f(S_t) - f(S_\star) \leq \Lt(1 - \frac{1}{\kappa^{\frac32}}\Rt)^t (f(S_0) - f(S_\star)).
    \end{equation}
    This concludes the proof.
\end{proof}

\begin{remark}[Sharper constants on the refined interval] \label{Rem:RefinedConstants}
    For the barycenter problem the constants can be tightened: projecting onto $[\alpha' I, \beta' I]$---the refined bounds of~\cref{Eq:RefinedConstantsIntro}, which also enclose the optimum~\cite{altschuler2021averaging,bhatia2019bures}---yields the verbatim analogue of~\cref{Thm:UnifiedConvergence} in the refined condition number, improving the rate to $(1 - \kappa'^{-3/2})$ with $\kappa' := \beta'/\alpha' \leq \kappa$ and the iteration count of~\cref{Eq:IterationCount} to $\kappa'^{3/2} \log(1/\epsilon)$.
\end{remark}

The proof of convergence with projection onto the refined interval does not carry over automatically for the following reasons. 
The projection step of~\cref{Prop:ProjectedDescent} relied on each $R_i$ being contained in $\abi$, which fails on the refined interval $\abid$. 
What is needed instead is that clipping to the refined interval does not increase $f_1$, which we prove as~\cref{Prop:RefinedMonotone}. 
We nonetheless state our results on $\abi$, whose constants are read off the individual inputs and match the conventions of the literature (the two coincide in the worst case); since our $\kappa'$ is no larger than the refined condition number of~\cite{altschuler2021averaging}, the improvement in~\cref{tab:comparison} is meaningful instance-wise: $\kappa'^{3/2}$ against $O(\kappa'^{5/2})$.
The analysis of the refined bounds is deferred to~\cref{App:RefinedInterval}.
\subsection{Choosing the iteration count} \label{Sec:Certificate}
From $f(S_t) - f(S_\star) \leq (1 - \kappa^{-3/2})^t (f(S_0) - f(S_\star)) \leq e^{-t \kappa^{-3/2}} (f(S_0) - f(S_\star))$, it suffices to take
\begin{equation} \label{Eq:IterationCount}
    T  \geq \kappa^{3/2} \log({1/\epsilon})
\end{equation}
to reach relative $\epsilon$ accuracy in function value, $f(S_T) - f(S_\star) \leq \epsilon \, (f(S_0) - f(S_\star))$, matching the convention of~\cref{tab:comparison}.
This a priori count is conservative in practice; the following a posteriori alternative is sharper.

\begin{tbox}
    \begin{cor}[A posteriori certificate] \label{Cor:Certificate}
        Let $f \in \{f_1, f_2\}$ and let $(S_t)_{t \geq 0}$ be generated by~\cref{alg:projected_bwgd} from $S_0 \in \abi$ (additionally $S_0 \in \clp$ when $f = f_2$).
        For every $t \geq 0$,
        \begin{equation} \label{Eq:Certificate}
            f(S_t) - f(S_\star) \leq \frac{\kappa^{3/2}}{2} \, \mathrm{B}\Lt(S_t, \rmk(S_t)\Rt).
        \end{equation}
        Consequently, for an absolute tolerance $\delta > 0$ one may terminate once $\mathrm{B}(S_t, \rmk(S_t)) \leq 2 \kappa^{-3/2} \delta$, at which point $f(S_t) - f(S_\star) \leq \delta$; taking $\delta = \epsilon\,(f(S_0) - f(S_\star))$ recovers the relative accuracy of~\cref{Eq:IterationCount}.
    \end{cor}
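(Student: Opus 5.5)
The plan is to combine the PL inequality of \cref{Lem:PLIneqfromSC} with the step-length identity of \cref{Lem:StepLength}, both evaluated at the current iterate rather than across two iterates. First, by the induction in the proof of \cref{Thm:UnifiedConvergence}, every iterate satisfies $S_t \in \abi$ for all $t \geq 0$, with $S_t \in \clp$ as well when $f = f_2$. I will fix the square root $A_t := S_t\hf$, which lies in $\pi\iv[\abi]$ and, for $f = f_2$, in $\mathcal{G}$, because $\cmu$ is closed under the functional calculus. With this choice both lemmas are applicable at step $t$.

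Next I apply \cref{Lem:PLIneqfromSC} on $[aI, bI] = \abi$ with $\mu_0 = \mu$ from \cref{Eq:MuF}. This gives
\[
f(S_t) - f(S_\star) = g(A_t) - g(A_\star) \le \frac{1}{2(4\mu\alpha)}\|\hat\nabla g(A_t)\|_\rf^2 = \frac{\kappa^{3/2}}{2}\|\hat\nabla g(A_t)\|_\rf^2,
\]
using $4\mu\alpha = \kappa^{-3/2}$. I then convert the gradient norm into a BW distance exactly as in the last part of the proof of \cref{Prop:ProjectedDescent}. Set $X := \hat\nabla g(A_t)A_t\iv$, which equals $2\nabla f_1(S_t)$ or $2\Phi(\nabla f_2(S_t))$. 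This $X$ is Hermitian, and $I - X$ is positive definite. Moreover $(I-X)S_t(I-X) = \rmk(S_t)$, so \cref{Eq:StepLengthGeneral} gives $\mathrm{B}(S_t,\rmk(S_t)) = \|\hat\nabla g(A_t)\|_\rf^2$. Substituting this into the display yields \cref{Eq:Certificate}.

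The termination rule follows by arithmetic. If $\mathrm{B}(S_t,\rmk(S_t)) \le 2\kappa^{-3/2}\delta$, the certificate bounds the gap by $\frac{\kappa^{3/2}}{2}\cdot 2\kappa^{-3/2}\delta = \delta$. Taking $\delta = \epsilon(f(S_0)-f(S_\star))$ then gives the relative-accuracy statement.

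There is no genuine obstacle here. The only point that needs care is that the certificate concerns the \emph{unprojected} step $\rmk(S_t)$ while $S_t$ itself is feasible, so the PL inequality applies at $S_t$ with no reference to the clipping step. I also need the chosen square root $A_t$ to be admissible for both lemmas, which in the invariant case means checking that it lies in $\mathcal{G}$.
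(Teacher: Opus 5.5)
Your proposal is correct and is essentially the paper's proof: substitute the step-length identity $\mathrm{B}(S_t, \rmk(S_t)) = \|\hat\nabla g(A_t)\|_\rf^2$ into the PL inequality of \cref{Lem:PLIneqfromSC}, and use $4\mu\alpha = \kappa^{-3/2}$. Your explicit admissibility check ($S_t \in \abi$ by induction, and $A_t = S_t\hf \in \mathcal{G}$ in the invariant case) just spells out what the paper takes from the induction in the proof of \cref{Thm:UnifiedConvergence}.
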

\end{tbox}
\begin{proof}
    Substitute the step-length identity $\mathrm{B}(S_t, \rmk(S_t)) = \|\hat \nabla g(A_t)\|_\rf^2$ of~\cref{Prop:ProjectedDescent} into the PL inequality of~\cref{Lem:PLIneqfromSC} and use $4\mu\alpha = \kappa^{-3/2}$.
\end{proof}

The certificate quantity is the (squared) length of the step the iteration has just computed; in practice the resulting count is far smaller than~\cref{Eq:IterationCount}.\footnote{Across the experiments of~\cref{Sec:Numerics}, the certificate (\cref{Cor:Certificate}) overestimates the true gap $(f(S_t) - f(S_\star))$ by roughly its $\kappa^{3/2}/2$ prefactor.
However, since the convergence is linear, this multiplicative slack costs only a few additional iterations, and the certified stopping time improves on the a priori count of~\cref{Eq:IterationCount} by orders of magnitude.}

\section{Numerical Experiments} \label{Sec:Numerics}
\begin{figure}[h]
    \centering
    \includegraphics[width=\textwidth]{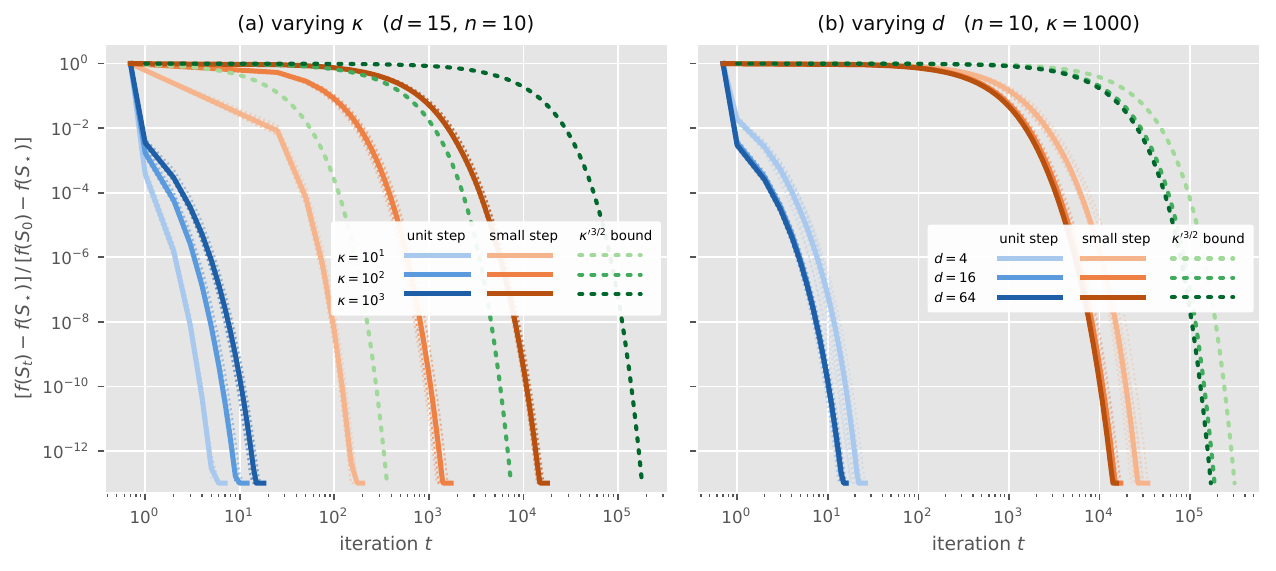}
    \caption{Unit-step (blue) against small-step (orange) RGD on \emph{pinned} ensembles (every $R_i$ of condition number exactly $\kappa$), $12$ instances per band (translucent) with opaque medians; darker shades for larger parameter values, gaps normalized by their initial value.
    (a) Varying $\kappa$ at $d = 15$: median iterations to the numerical floor range over $6$ to $16$ for the unit step, against $175$ to $16288$ for the small step.
    (b) Varying $d$ at $\kappa = 10^3$: the unit-step counts are essentially flat over a sixteenfold increase in $d$, as~\cref{Thm:UnifiedConvergence} asserts; the small-step counts track the conditioning $\kappa'$, not the dimension (see text).
    Dotted curves show the a priori guarantee of~\cref{Cor:RefinedRate} at each setting's median $\kappa'$; the clip never activates, so the projected and unprojected unit steps coincide.
    Note the logarithmic iteration axis.
    }
    \label{Fig:UnitVsSmall}
\end{figure}

We now discuss the numerical experiments accompanying our theoretical results. 
The scripts and notebook that generate every figure and number reported here are available at the accompanying repository~\cite{afham2026projectedrgdcode}.
First, we compare our Projected RGD algorithm against the standard (unprojected) RGD algorithm and the \textit{small-step} RGD algorithm of~\cite{altschuler2021averaging}, run at the step size $\eta = \alpha'/(2\beta')$, no smaller than its prescription $\alpha'/(2\sumin w_i \lmax(R_i))$ and hence conservative in its favor.
On the instance of~\cref{Ex:FeasibleExit} (shown in \cref{Fig:Headline}(b)), we ran projected against unprojected RGD: the projection is active on exactly the two exit steps, and the two trajectories are indistinguishable in convergence rate thereafter.
On the same instance the small step, $\eta = \alpha'/(2\beta') = 1.33 \times 10^{-2}$, remains feasible throughout but needs $544$ iterations to reach a $10^{-6}$ relative gap and $1108$ to reach $10^{-12}$, over a window in which the unit step converges to machine precision.
Unit-step RGD reaches the same relative gaps of $10^{-6}$ and $10^{-12}$ at iterations $3$ and $6$ respectively (projected and unprojected alike), a factor of about $180$ fewer iterations.
Numerics also suggest that unless the ensemble and the initial iterate are adversarially designed, all iterates remain in $\abi$: in every subsequent (non-adversarial) experiment the clipping never activates, so the projected and unprojected unit-step RGDs coincide, and the projection acts as a dormant safeguard that converts the numerically observed dimension-independent linear convergence into a guarantee.

The primary comparison, in terms of performance, is then between the (projected) unit-step and the small-step RGD.
We demonstrate this in \cref{Fig:UnitVsSmall}(a), which shows RGD exhibits a far superior performance at unit step than at small step: at $\kappa = 10^3$ the unit step reaches the numerical floor in $16$ iterations against 16288 for the small step, three orders of magnitude apart.
This empirical superiority has been observed previously~\cite{altschuler2021averaging,brahmachari2025fixed}, but a theoretical backing was lacking.
Our key contribution is to ground this empirical performance in theory, at the cost of a projection that is (i) essentially free and (ii) almost never needed in practice.
We then empirically study the rate of convergence for RGD at unit- and small-step sizes.
As seen in \cref{Fig:UnitVsSmall}(a), the observed contraction per iteration is far faster than the guaranteed rate $(1 - \kappa^{-3/2})$---and remains so under the refined rate $(1 - \kappa'^{-3/2})$ of~\cref{Rem:RefinedConstants}---so the iteration counts fall well short of the a priori count $\kappa^{3/2}\log(1/\epsilon)$ of~\cref{Eq:IterationCount}, suggesting room for improvement in the constants of~\cref{Thm:UnifiedConvergence}.

Fixing $\kappa$ and grading instead by the dimension, \cref{Fig:UnitVsSmall}(b) isolates the dimension-independence of the guarantee.
For the sweep to be meaningful the effective difficulty must not itself vary with $d$: with i.i.d.\ spectra, $\min_i \lmin(R_i)$ is a minimum of $d$ draws and collapses onto $\alpha$ as $d$ grows, so the refined constants---and with them the small-step size and the sharpened bound---would drift for reasons unrelated to the rate; the pinned ensembles of~\cref{Fig:UnitVsSmall} remove this drift, leaving only a mild, saturating decrease of $\beta'$ as the eigenbases decorrelate.
The unit-step bands for $d \in \{4, 16, 64\}$ then nearly coincide, as do the small-step bands and the sharpened bounds, the residual spread of all three tracking $\kappa'$ rather than $d$: the iteration count is governed by the conditioning, not the dimension, as~\cref{Thm:UnifiedConvergence} asserts.

\section{Conclusion}
We introduced and analyzed Projected BW-GD: unit-step RGD composed, at every iteration, with the BW projection onto the well-conditioned set, which we showed is eigenvalue clipping (\cref{Lem:ProjectionLemma}).
For both the BW barycenter and the invariant matrix projection problem, the algorithm converges linearly at the dimension-independent rate $(1 - \kappa^{-3/2})$---the first such guarantee at the unit step size used in practice, and a polynomial improvement, from $O(\kappa'^{5/2})$ to $\kappa'^{3/2}$, in the iteration complexity of the best dimension-free guarantee---with an a posteriori certificate permitting termination on the fly.
We do not establish the rate for the unprojected fixed-point algorithm; rather, an eigenvalue clip of negligible cost converts it into an algorithm with a worst-case guarantee, while coinciding with it on typical inputs.
The analysis runs on the flat square-root manifold, where Euclidean strong convexity becomes a PL inequality and the unit-step descent lemma is a variance identity.
This perspective greatly simplifies the technical analysis by avoiding notions such as convexity along generalized geodesics. 
Both ingredients require well-conditioned iterates---a requirement that \cref{Ex:FeasibleExit} shows can genuinely fail for the unprojected iteration---and the projection enforces it without undoing the progress of the gradient step.
The invariant matrix projection problem inherits this picture in full, since the invariant submanifold is totally geodesic and closed under the clip (\cref{App:BRT_Algorithm}); the rate thereby extends to the quantum-information problems that the formulation unifies~\cite[Sec.~7]{brahmachari2025fixed}.

Two directions are suggested by the experiments.
First, from $S_0 = \frac{\alpha + \beta}{2} I$ the unprojected trajectory remained in $\abi$ in all our numerics, and the exits of~\cref{Ex:FeasibleExit} required an adversarial initial point; proving that unprojected unit-step RGD preserves $\abi$ from this canonical initialization would render the projection unnecessary for a default $S_0$.
Second, the observed rates are far faster than the bound, even after the sharpening of~\cref{App:RefinedInterval}: the improvement from $\kappa^{5/2}$ to $\kappa^{3/2}$ is an improvement of the \emph{guarantee}, and both guarantees are loose against what the iteration actually does.
Whether the generic-case rate admits a sharper exponent, or the worst case is attained only on rare instances, remains open.

\section*{Acknowledgments}
The author thanks Marco Tomamichel and Roberto Rubboli for helpful discussions. 
This project is supported by the NRF Investigatorship award (NRF-NRFI10-2024-0006) and the CQT Young Researcher Career Development Grant. 

\section*{Use of Large Language Models}
The author acknowledges the use of Large Language Models (LLMs) in the preparation of this article.
This includes \textit{standard} usage such as proofreading, restructuring, basic questions, and coding.
In terms of \textit{meaningful} contribution, the latter half of~\cref{Lem:ProjectionLemma} was proven using Google Gemini 3.1 Pro.
The key prompt, paraphrased, was \textit{``Show that (two-sided) clipping of eigenvalues to an interval is a contraction with respect to the BW distance''}, which was inspired by the existing result~\cite[Prop. 3]{altschuler2021averaging}, which showed that clipping eigenvalues from above is a BW contraction. 
As mentioned in the main text of the article, this is important as existing proof methods were unable to guarantee dimension-independent convergence (at unit step-size) because the iterates could dip below $\alpha I$, which can now be remedied by clipping since it acts as a contraction.
The LLM-generated proof was then reworked into the current form for improved presentation. 
Given the proof for contraction, it was then straightforward to show that eigenvalue-clipping constituted a BW-projection onto $\abi$ (former half of~\cref{Lem:ProjectionLemma}). 
Other technical results obtained with the help of LLMs (primarily Claude Fable) include~\cref{Prop:RefinedMonotone}, which proved that clipping onto the refined interval $\abid$ does not increase the objective value,~\cref{Cor:FrobeniusProjection}, which showed that singular-value clipping corresponds to Frobenius projection onto the set of square-roots of the well-conditioned interval, and the counterexample instance of~\cref{Ex:FeasibleExit}.\footnote{See~\cref{App:FeasibleExit} for the search strategy.}

\printbibliography

@THESIS{Afham2025Thesis,
  AUTHOR = {Afham, A},
  URL = {https://opus.lib.uts.edu.au/handle/10453/190570},
  DATE = {2025},
  LANGID = {american},
  TITLE = {Optimizing and Generalizing Quantum Fidelities},
  TYPE = {Thesis},
}

@MISC{afham2026projectedrgdcode,
  AUTHOR = {Afham, A.},
  URL = {https://github.com/afhamash/projected-rgd},
  DATE = {2026},
  TITLE = {Companion code for ``Projected {R}iemannian Gradient Descent for the {B}ures--{W}asserstein Barycenter''},
}

@ARTICLE{Afham2025Riemanniangeometric,
  AUTHOR = {Afham, A. and Ferrie, Chris},
  PUBLISHER = {AIP Publishing},
  DATE = {2025},
  ISBN = {0022-2488},
  JOURNALTITLE = {Journal of Mathematical Physics},
  NUMBER = {8},
  TITLE = {Riemannian-Geometric Generalizations of Quantum Fidelities and {{Bures-Wasserstein}} Distance},
  VOLUME = {66},
}

@MISC{Afham2022Quantum,
  AUTHOR = {Afham, A. and Kueng, Richard and Ferrie, Chris},
  URL = {https://arxiv.org/abs/2206.08183},
  DATE = {2022-06},
  EPRINT = {2206.08183},
  EPRINTCLASS = {quant-ph},
  EPRINTTYPE = {arXiv},
  LANGID = {english},
  SHORTTITLE = {Quantum Mean States Are Nicer than You Think},
  TITLE = {Quantum Mean States Are Nicer than You Think: Fast Algorithms to Compute States Maximizing Average Fidelity},
}

@MISC{Afham2026Projections,
  AUTHOR = {Afham, A. and Tomamichel, Marco},
  URL = {https://arxiv.org/abs/2602.14732},
  DATE = {2026},
  EPRINT = {2602.14732},
  EPRINTCLASS = {quant-ph},
  EPRINTTYPE = {arXiv},
  TITLE = {Projections with Respect to Bures Distance and Fidelity: Closed-Forms and Applications},
}

@ARTICLE{agueh2011barycenters,
  AUTHOR = {Agueh, Martial and Carlier, Guillaume},
  PUBLISHER = {SIAM},
  DATE = {2011},
  JOURNALTITLE = {SIAM Journal on Mathematical Analysis},
  NUMBER = {2},
  PAGES = {904--924},
  TITLE = {Barycenters in the Wasserstein Space},
  VOLUME = {43},
}

@ARTICLE{aksenov2026anderson,
  AUTHOR = {Aksenov, Vitalii and Eigel, Martin and Oster, Mathias},
  DATE = {2026},
  JOURNALTITLE = {arXiv preprint arXiv:2601.22038},
  TITLE = {Anderson Mixing in Bures Wasserstein Space of Gaussian Measures},
}

@ARTICLE{alberti1983note,
  AUTHOR = {Alberti, Peter M},
  PUBLISHER = {Springer},
  DATE = {1983},
  JOURNALTITLE = {Letters in Mathematical Physics},
  NUMBER = {1},
  PAGES = {25--32},
  TITLE = {A note on the transition probability over C*-algebras},
  VOLUME = {7},
}

@ARTICLE{altschuler2021averaging,
  AUTHOR = {Altschuler, Jason and Chewi, Sinho and Gerber, Patrik R and Stromme, Austin},
  DATE = {2021},
  JOURNALTITLE = {Advances in Neural Information Processing Systems},
  PAGES = {22132--22145},
  TITLE = {Averaging on the {{Bures-Wasserstein}} Manifold: Dimension-Free Convergence of Gradient Descent},
  VOLUME = {34},
}

@ARTICLE{alvarez2016fixed,
  AUTHOR = {Álvarez-Esteban, Pedro C and Del Barrio, E and {Cuesta-Albertos}, {\relax JA} and Matrán, C},
  PUBLISHER = {Elsevier},
  DATE = {2016},
  JOURNALTITLE = {Journal of Mathematical Analysis and Applications},
  NUMBER = {2},
  PAGES = {744--762},
  TITLE = {A Fixed-Point Approach to Barycenters in {{Wasserstein}} Space},
  VOLUME = {441},
}

@BOOK{ambrosio2008gradient,
  AUTHOR = {Ambrosio, Luigi and Gigli, Nicola and Savaré, Giuseppe},
  PUBLISHER = {Springer Science \& Business Media},
  DATE = {2008},
  TITLE = {Gradient Flows: In Metric Spaces and in the Space of Probability Measures},
}

@ARTICLE{beigi2013sandwiched,
  AUTHOR = {Beigi, Salman},
  PUBLISHER = {AIP Publishing},
  DATE = {2013},
  JOURNALTITLE = {Journal of Mathematical Physics},
  NUMBER = {12},
  TITLE = {Sandwiched Rényi divergence satisfies data processing inequality},
  VOLUME = {54},
}

@INCOLLECTION{BhatiaPD,
  AUTHOR = {Bhatia, Rajendra},
  PUBLISHER = {Princeton university press},
  BOOKTITLE = {Positive Definite Matrices},
  DATE = {2009},
  DOI = {10.1515/9781400827787},
  TITLE = {Positive Definite Matrices},
}

@ARTICLE{bhatia2019bures,
  AUTHOR = {Bhatia, Rajendra and Jain, Tanvi and Lim, Yongdo},
  PUBLISHER = {Elsevier},
  DATE = {2019},
  JOURNALTITLE = {Expositiones Mathematicae},
  NUMBER = {2},
  PAGES = {165--191},
  TITLE = {On the {{Bures}}--{{Wasserstein}} Distance between Positive Definite Matrices},
  VOLUME = {37},
}

@ARTICLE{bhatia2018strong,
  AUTHOR = {Bhatia, Rajendra and Jain, Tanvi and Lim, Yongdo},
  PUBLISHER = {World Scientific},
  DATE = {2018},
  JOURNALTITLE = {Reviews in Mathematical Physics},
  NUMBER = {09},
  PAGES = {1850014},
  TITLE = {Strong Convexity of Sandwiched Entropies and Related Optimization Problems},
  VOLUME = {30},
}

@ARTICLE{brahmachari2025fixed,
  AUTHOR = {Brahmachari, Shrigyan and Rubboli, Roberto and Tomamichel, Marco},
  PUBLISHER = {Springer},
  DATE = {2025},
  JOURNALTITLE = {Mathematical Programming},
  PAGES = {1--31},
  TITLE = {A fixed-point algorithm for matrix projections with applications in quantum information: S. Brahmachari et al.},
}

@ARTICLE{burer2003nonlinear,
  AUTHOR = {Burer, Samuel and Monteiro, Renato DC},
  PUBLISHER = {Springer},
  DATE = {2003},
  JOURNALTITLE = {Mathematical programming},
  NUMBER = {2},
  PAGES = {329--357},
  TITLE = {A nonlinear programming algorithm for solving semidefinite programs via low-rank factorization},
  VOLUME = {95},
}

@ARTICLE{burer2005local,
  AUTHOR = {Burer, Samuel and Monteiro, Renato DC},
  PUBLISHER = {Springer},
  DATE = {2005},
  JOURNALTITLE = {Mathematical programming},
  NUMBER = {3},
  PAGES = {427--444},
  TITLE = {Local minima and convergence in low-rank semidefinite programming},
  VOLUME = {103},
}

@BOOK{chewi2025statistical,
  AUTHOR = {Chewi, Sinho and Niles-Weed, Jonathan and Rigollet, Philippe},
  PUBLISHER = {Springer},
  DATE = {2025},
  TITLE = {Statistical optimal transport},
}

@INPROCEEDINGS{chewi2020gradient,
  AUTHOR = {Chewi, Sinho and Maunu, Tyler and Rigollet, Philippe and Stromme, Austin J},
  PUBLISHER = {PMLR},
  BOOKTITLE = {Conference on Learning Theory},
  DATE = {2020},
  PAGES = {1276--1304},
  TITLE = {Gradient Descent Algorithms for {{Bures-Wasserstein}} Barycenters},
}

@ARTICLE{gupta2015multiplicativity,
  AUTHOR = {Gupta, Manish K and Wilde, Mark M},
  PUBLISHER = {Springer},
  DATE = {2015},
  JOURNALTITLE = {Communications in Mathematical Physics},
  NUMBER = {2},
  PAGES = {867--887},
  TITLE = {Multiplicativity of completely bounded p-norms implies a strong converse for entanglement-assisted capacity},
  VOLUME = {334},
}

@ARTICLE{jozsa1994fidelity,
  AUTHOR = {Jozsa, Richard},
  PUBLISHER = {Taylor \& Francis},
  DATE = {1994},
  JOURNALTITLE = {Journal of modern optics},
  NUMBER = {12},
  PAGES = {2315--2323},
  TITLE = {Fidelity for Mixed Quantum States},
  VOLUME = {41},
}

@InProceedings{junyi2024convergence,
  title = 	 {On the Convergence of Projected Bures-{W}asserstein Gradient Descent under {E}uclidean Strong Convexity},
  author =       {Fan, Junyi and Han, Yuxuan and Liu, Zijian and Cai, Jian-Feng and Wang, Yang and Zhou, Zhengyuan},
  booktitle = 	 {Proceedings of the 41st International Conference on Machine Learning},
  pages = 	 {12832--12857},
  year = 	 {2024},
  volume = 	 {235},
  series = 	 {Proceedings of Machine Learning Research},
  month = 	 {21--27 Jul},
  publisher =    {PMLR},
  url = 	 {https://proceedings.mlr.press/v235/fan24b.html},
}

@INPROCEEDINGS{karimi2016linear,
  AUTHOR = {Karimi, Hamed and Nutini, Julie and Schmidt, Mark},
  ORGANIZATION = {Springer},
  BOOKTITLE = {Joint European conference on machine learning and knowledge discovery in databases},
  DATE = {2016},
  PAGES = {795--811},
  TITLE = {Linear convergence of gradient and proximal-gradient methods under the polyak-{ł}ojasiewicz condition},
}

@ARTICLE{Konig2009,
  AUTHOR = {Konig, Robert and Renner, Renato and Schaffner, Christian},
  PUBLISHER = {Institute of Electrical and Electronics Engineers (IEEE)},
  URL = {https://doi.org/10.1109\%2Ftit.2009.2025545},
  DATE = {2009},
  DOI = {10.1109/tit.2009.2025545},
  JOURNALTITLE = {IEEE Transactions on Information Theory},
  NUMBER = {9},
  PAGES = {4337--4347},
  TITLE = {The Operational Meaning of Min- and Max-Entropy},
  VOLUME = {55},
}

@ARTICLE{kroshnin2021statistical,
  AUTHOR = {Kroshnin, Alexey and Spokoiny, Vladimir and Suvorikova, Alexandra},
  PUBLISHER = {Institute of Mathematical Statistics},
  DATE = {2021},
  JOURNALTITLE = {The Annals of Applied Probability},
  NUMBER = {3},
  PAGES = {1264--1298},
  TITLE = {Statistical Inference for {{Bures}}--{{Wasserstein}} Barycenters},
  VOLUME = {31},
}

@ARTICLE{lambert2022variational,
  AUTHOR = {Lambert, Marc and Chewi, Sinho and Bach, Francis and Bonnabel, Silvère and Rigollet, Philippe},
  DATE = {2022},
  JOURNALTITLE = {Advances in Neural Information Processing Systems},
  PAGES = {14434--14447},
  TITLE = {Variational inference via Wasserstein gradient flows},
  VOLUME = {35},
}

@ARTICLE{le2017existence,
  AUTHOR = {Le Gouic, Thibaut and Loubes, Jean-Michel},
  PUBLISHER = {Springer},
  DATE = {2017},
  JOURNALTITLE = {Probability Theory and Related Fields},
  NUMBER = {3},
  PAGES = {901--917},
  TITLE = {Existence and consistency of Wasserstein barycenters},
  VOLUME = {168},
}

@ARTICLE{le2022fast,
  AUTHOR = {Le Gouic, Thibaut and Paris, Quentin and Rigollet, Philippe and Stromme, Austin J},
  DATE = {2022},
  JOURNALTITLE = {Journal of the European Mathematical Society},
  NUMBER = {6},
  PAGES = {2229--2250},
  TITLE = {Fast convergence of empirical barycenters in Alexandrov spaces and the Wasserstein space},
  VOLUME = {25},
}

@ARTICLE{lewis2008alternating,
  AUTHOR = {Lewis, Adrian S and Malick, Jérôme},
  PUBLISHER = {INFORMS},
  DATE = {2008},
  JOURNALTITLE = {Mathematics of Operations Research},
  NUMBER = {1},
  PAGES = {216--234},
  TITLE = {Alternating projections on manifolds},
  VOLUME = {33},
}

@ARTICLE{liu2017new,
  AUTHOR = {Liu, CL and Zhang, Da-Jian and Yu, Xiao-Dong and Ding, Qi-Ming and Liu, Longjiang},
  PUBLISHER = {Springer},
  DATE = {2017},
  JOURNALTITLE = {Quantum Information Processing},
  NUMBER = {8},
  PAGES = {198},
  TITLE = {A new coherence measure based on fidelity},
  VOLUME = {16},
}

@ARTICLE{malago2018wasserstein,
  AUTHOR = {Malago, Luigi and Montrucchio, Luigi and Pistone, Giovanni},
  DATE = {2018},
  EPRINT = {1801.09269},
  EPRINTTYPE = {arXiv},
  JOURNALTITLE = {arXiv preprint arXiv:1801.09269},
  TITLE = {Wasserstein {{Riemannian}} Geometry of Positive Definite Matrices},
}

@INPROCEEDINGS{maunu2023bures,
  AUTHOR = {Maunu, Tyler and Le Gouic, Thibaut and Rigollet, Philippe},
  ORGANIZATION = {PMLR},
  BOOKTITLE = {International Conference on Artificial Intelligence and Statistics},
  DATE = {2023},
  PAGES = {8183--8210},
  TITLE = {Bures-wasserstein barycenters and low-rank matrix recovery},
}

@ARTICLE{oneill1966fundamental,
  AUTHOR = {O'Neill, Barrett},
  PUBLISHER = {University of Michigan, Department of Mathematics},
  DATE = {1966},
  JOURNALTITLE = {Michigan Mathematical Journal},
  NUMBER = {4},
  PAGES = {459--469},
  TITLE = {The fundamental equations of a submersion.},
  VOLUME = {13},
}

@ARTICLE{panaretos2019statistical,
  AUTHOR = {Panaretos, Victor M and Zemel, Yoav},
  PUBLISHER = {Annual Reviews},
  DATE = {2019},
  JOURNALTITLE = {Annual review of statistics and its application},
  NUMBER = {1},
  PAGES = {405--431},
  TITLE = {Statistical Aspects of {{Wasserstein}} Distances},
  VOLUME = {6},
}

@ARTICLE{polyak1963gradient,
  AUTHOR = {Polyak, Boris Teodorovich},
  PUBLISHER = {Russian Academy of Sciences, Branch of Mathematical Sciences},
  DATE = {1963},
  JOURNALTITLE = {Zhurnal vychislitel'noi matematiki i matematicheskoi fiziki},
  NUMBER = {4},
  PAGES = {643--653},
  TITLE = {Gradient methods for minimizing functionals},
  VOLUME = {3},
}

@MISC{tomamichel2013thesis,
  AUTHOR = {Tomamichel, Marco},
  URL = {https://arxiv.org/abs/1203.2142},
  DATE = {2013},
  EPRINT = {1203.2142},
  EPRINTCLASS = {quant-ph},
  EPRINTTYPE = {arXiv},
  TITLE = {A Framework for Non-Asymptotic Quantum Information Theory},
}

@BOOK{tomamichel2015quantum,
  AUTHOR = {Tomamichel, Marco},
  PUBLISHER = {Springer},
  DATE = {2015},
  TITLE = {Quantum Information Processing with Finite Resources: Mathematical Foundations},
  VOLUME = {5},
}

@ARTICLE{uhlmann1976transition,
  AUTHOR = {Uhlmann, Armin},
  PUBLISHER = {Elsevier},
  DATE = {1976},
  JOURNALTITLE = {Reports on Mathematical Physics},
  NUMBER = {2},
  PAGES = {273--279},
  TITLE = {The ``Transition Probability'' in the State Space of a Star-Algebra},
  VOLUME = {9},
}

@ARTICLE{Uhlmann2011Transition,
  AUTHOR = {Uhlmann, Armin},
  PUBLISHER = {Springer},
  DATE = {2011},
  ISBN = {0015-9018},
  JOURNALTITLE = {Foundations of physics},
  NUMBER = {3},
  PAGES = {288--298},
  TITLE = {Transition Probability (Fidelity) and Its Relatives},
  VOLUME = {41},
}

@BOOK{Watrous2018Theory,
  AUTHOR = {Watrous, John},
  LOCATION = {Cambridge},
  PUBLISHER = {Cambridge University Press},
  DATE = {2018},
  DOI = {10.1017/9781316848142},
  ISBN = {978-1-107-18056-7},
  TITLE = {The {{Theory}} of {{Quantum Information}}},
}

@ARTICLE{wei2003geometric,
  AUTHOR = {Wei, Tzu-Chieh and Goldbart, Paul M},
  PUBLISHER = {APS},
  DATE = {2003},
  JOURNALTITLE = {Physical Review A},
  NUMBER = {4},
  PAGES = {042307},
  TITLE = {Geometric measure of entanglement and applications to bipartite and multipartite quantum states},
  VOLUME = {68},
}

@ARTICLE{winter2016operational,
  AUTHOR = {Winter, Andreas and Yang, Dong},
  PUBLISHER = {APS},
  DATE = {2016},
  JOURNALTITLE = {Physical review letters},
  NUMBER = {12},
  PAGES = {120404},
  TITLE = {Operational resource theory of coherence},
  VOLUME = {116},
}

@ARTICLE{zemel2019frechet,
  AUTHOR = {Zemel, Yoav and Panaretos, Victor M.},
  PUBLISHER = {Bernoulli Society for Mathematical Statistics and Probability},
  URL = {https://doi.org/10.3150/17-BEJ1009},
  DATE = {2019},
  DOI = {10.3150/17-BEJ1009},
  JOURNALTITLE = {Bernoulli. Official Journal of the Bernoulli Society for Mathematical Statistics and Probability},
  NUMBER = {2},
  PAGES = {932--976},
  TITLE = {Fréchet Means and {{Procrustes}} Analysis in {{Wasserstein}} Space},
  VOLUME = {25},
}

@ARTICLE{zheng2025riemannian,
  AUTHOR = {Zheng, Shixin and Huang, Wen and Vandereycken, Bart and Zhang, Xiangxiong},
  PUBLISHER = {Springer},
  DATE = {2025},
  JOURNALTITLE = {Computational Optimization and Applications},
  NUMBER = {3},
  PAGES = {1135--1184},
  TITLE = {Riemannian optimization using three different metrics for Hermitian PSD fixed-rank constraints: S. Zheng et al.},
  VOLUME = {91},
}

\appendix
\crefalias{section}{appendix}

\section{Unit-Step RGD Exits the Well-Conditioned Set} \label{App:FeasibleExit}
We now discuss a barycenter instance, where all the matrices involved---the elements of the ensemble, the initial point, and the barycenter---lie strictly in the interior of a compact interval $\abi$, yet the trajectory exits the interval (transiently). 
\cref{Fig:FeasibleExit} displays the trajectory and the formal statement follows. 
This is the same instance, and the same trajectory, as panel (b) of the headline~\cref{Fig:Headline}; here it is shown in the complementary view---the full two-eigenvalue picture in the main panel, the signed deviation in the inset---alongside the formal statement.

\begin{figure}[h]
    \centering
    \includegraphics[width=0.82\textwidth]{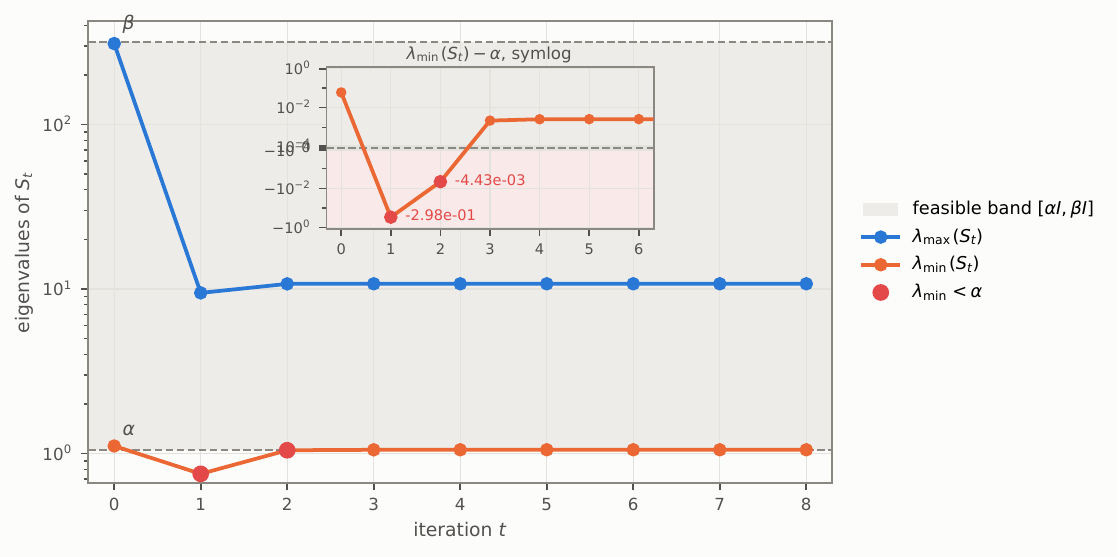}
    \caption{The trajectory of~\cref{Ex:FeasibleExit}.
    Main panel: both eigenvalues of the unprojected unit-step RGD iterates (log scale), with the feasible band $\abi$ shaded and the infeasible iterates marked.
    Inset: the signed deviation $\lambda_{\min}(S_t) - \alpha$ on a symmetric-log scale, resolving the two undershoots ($-2.98 \times 10^{-1}$ and $-4.43 \times 10^{-3}$) against the limit $\lambda_{\min}(S_\star) - \alpha \approx +2.52 \times 10^{-3}$: the iterate exits $\abi$ at $t = 1$, is still outside at $t = 2$, and re-enters at $t = 3$.}
    \label{Fig:FeasibleExit}
\end{figure}

\begin{example}[Transient exit of Unit-step RGD] \label{Ex:FeasibleExit}
    Let $d = 2$, $Q_\theta := \begin{pmatrix} \cos (\theta) &  \sin (\theta) \\ - \sin(\theta) & \cos (\theta) \end{pmatrix}$ be the rotation matrix by angle $\theta$, and consider the barycenter instance with weights $w = (0.12,\, 0.827,\, 0.053)$ and ensemble elements
    \begin{equation}
        R_1 = Q_{\theta_1} \begin{pmatrix}
            1.05 & 0 \\
            0 & 315 \\ 
        \end{pmatrix} Q_{\theta_1}^\top, \,\,
        R_2 = Q_{\theta_2} \begin{pmatrix}
            1.0501 & 0 \\
            0 & 1.575 \\ 
        \end{pmatrix} Q_{\theta_2}^\top, \,\,
        R_3 = Q_{\theta_3} \begin{pmatrix}
            1.0502 & 0 \\
            0 & 4.41 \\ 
        \end{pmatrix} Q_{\theta_3}^\top,
    \end{equation}
    where the angles are $(\theta_1, \theta_2, \theta_3) = (0.246, 0.338, 0.253)$.
    We choose $\alpha = 1.05$ and $\beta = 315$ ($\kappa = 300$), together with the strictly interior initialization
    \begin{equation}
        S_0 = Q_{\theta_0} \begin{pmatrix}
            1.11 & 0 \\
            0 & 310 \\ 
        \end{pmatrix} Q_{\theta_{0}}^\top \in \mathrm{interior}\,\abi;  \qquad \theta_0 = 1.473.
    \end{equation}
    Although the barycenter admits no closed form, its location is certified by the eigenvalue bounds of~\cite[Thm.~6]{altschuler2021averaging}:
    \begin{equation}
        \lambda_{\min}(S_\star) \geq  \Big(\sumin w_i \sqrt{\lmin(R_i)}\Big)^{2} =: \alpha',
        \quad
        \lambda_{\max}(S_\star) \leq \Lt(\sumin w_i \sqrt{\lmax(R_i)}\Rt)^{2} = 10.75,
    \end{equation}
    where $\alpha' = \alpha + 9.33 \times 10^{-5}$,
    so $S_\star$ lies \emph{strictly} in the interior of $\abi$.
    Running the unprojected unit-step RGD iteration from $S_0$ yields, in double precision,
    \begin{equation}
        \lambda_{\min}(S_t) - \alpha =
        \begin{cases}
            \, -2.984 \times 10^{-1} & t = 1,\\
            \, -4.434 \times 10^{-3} & t = 2,\\
            \, +2.117 \times 10^{-3} & t = 3.
        \end{cases}
    \end{equation}
    The trajectory therefore exits $\abi$ at $t = 1$ and \emph{remains outside for two consecutive iterations}---indeed, both excursion iterates lie below even the certified floor $\alpha'$ of the optimum, so the excursion cannot be attributed to the iterates approaching a near-boundary optimum---before re-entering at $t = 3$ and converging (numerically, $\lambda_{\min}(S_t) \to \alpha + 2.52 \times 10^{-3}$).
\end{example}

We choose $n = 3$ deliberately as for $n = 2$ the barycenter is a point on the geodesic joining the targets and admits a closed form.
The example extends to every problem size: to any $n \geq 3$ by splitting $R_3$ into identical copies whose weights sum to $w_3$, which leaves the iteration unchanged, and to any $d \geq 2$ by padding all matrices with a block $c\, I_{d-2}$, $c \in [\alpha, \beta]$, under which the dynamics act block-diagonally and the violations are unchanged.
The instance was found using an LLM-driven search (Claude Fable): a randomized sweep over small configurations ($d = 2$, $n = 3$) first located rare single-step exits, and an adversarial refinement of the angles, spectra, and weights then deepened these into the two-consecutive-step excursion above, while keeping the ensemble, the initial point, and the certified location of the optimum strictly interior.

\section{A Variance Inequality Proof of the Descent Lemma} \label{App:UsefulLemmas}
The descent lemma used in~\cref{Sec:Convergence} is~\cite[Theorem 10]{bhatia2019bures}, proved there with tools from optimal transport. 
We give a self-contained proof of the same result, phrased in the square-root manifold, where it reduces to the variance identity for a mean in a Euclidean space.
\begin{tbox}
    \begin{prop} \label{Prop:VarianceInequalityBuresDistance}
        Let $(R_i, w_i)_\inn$ be a weighted ensemble and $g_1 := f_1 \circ \pi$.
        For any $S \in \pdd$,
        \begin{equation}
            g_1(S \hf) \geq g_1(A') + \frac12 \|\nabla g_1(S \hf)\|_\mathrm{F}^2,
        \end{equation}
        where $A' = S \hf - \nabla g_1(S \hf)$; consequently
        \begin{equation}
            f_1(S) \geq f_1(\rmk(S)) + \frac12 \mathrm{B}(S, \rmk(S)).
        \end{equation}
    \end{prop}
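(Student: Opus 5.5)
Set $A := S\hf$ and, for each $i$, the Procrustes-aligned square root $B_i := R_i\hf U_i$ with $U_i := \mathrm{Pol}(R_i\hf S\hf)^\dagger$. The pair $(S\hf, B_i)$ is then an optimal pairing for $(S, R_i)$, so \cref{Eq:BuresMin} gives $\rb(S,R_i) = \|A - B_i\|_\rf^2$.

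The first step identifies the gradient step as the plain average of these aligned roots. Since $B_i = (S\iv \# R_i) S\hf$, we get
\begin{equation*}
\nabla g_1(A) = 2\nabla f_1(S) A = \Big(I - \sumin w_i\, S\iv\# R_i\Big)A = A - \sumin w_i B_i .
\end{equation*}
Hence $A' = A - \nabla g_1(A) = \bar B := \sumin w_i B_i$. The identity $B_i = (S\iv \# R_i) S\hf$ is the standard fact that the optimal transport map $S\iv\# R_i$ applied to $S\hf$ yields the optimally aligned square root of $R_i$. I would verify it via the Riccati equation: $T := S\iv\#R_i$ satisfies $TST = R_i$, so $TS\hf$ is a square root of $R_i$. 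Moreover $\langle S\hf, TS\hf\rangle = \tr[TS] = \fpq$ with $(P,Q) = (S,R_i)$, which is the fidelity form used in the proof of \cref{Lem:StepLength}; this is exactly optimality of the pairing.

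The second step is the Euclidean variance identity for a weighted mean:
\begin{equation*}
\sumin w_i \|A - B_i\|_\rf^2 = \|A - \bar B\|_\rf^2 + \sumin w_i\|\bar B - B_i\|_\rf^2 .
\end{equation*}
Multiplying by $\tfrac12$, the left side equals $g_1(A) = f_1(S)$. The first term on the right is $\tfrac12\|\nabla g_1(A)\|_\rf^2$. For the second term, $\pi(\bar B) = \rmk(S)$ because $\bar B = (I - 2\nabla f_1(S))A$. Since $\bar B\in\pi\iv[\rmk(S)]$ and $B_i\in\pi\iv[R_i]$, the variational characterization \cref{Eq:BuresMin} gives $\|\bar B - B_i\|_\rf^2 \ge \rb(\rmk(S), R_i)$. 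The pairing need not be optimal, which is why the result is an inequality. Summing, $\tfrac12\sumin w_i\|\bar B - B_i\|_\rf^2 \ge f_1(\rmk(S)) = g_1(A')$, and this gives the first claim.

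The consequence then follows from \cref{Lem:StepLength} with $X = 2\nabla f_1(S)$ and $I - X = \sumin w_i\, S\iv\#R_i > 0$, which yields $\|\nabla g_1(S\hf)\|_\rf^2 = \rb(S,\rmk(S))$. I expect the only delicate point to be the first step, namely confirming that $(S\iv\#R_i)S\hf$ is an optimal square root of $R_i$ paired with $S\hf$. The rest is the Pythagorean identity together with the fact that any pairing upper-bounds the BW distance.
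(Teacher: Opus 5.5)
Your argument is the paper's proof. You apply the weighted variance identity to $S^{1/2}$ and the optimally aligned square roots of the $R_i$, identify their mean as $A'$, and bound the spread term below by $\mathrm{B}(R_i,\rmk(S))$ via fibre minimization. Two differences are cosmetic. First, you identify the aligned roots as $(S^{-1}\#R_i)S^{1/2}$ through the Riccati equation and the fidelity formula (the computation in \cref{Lem:StepLength}), whereas the paper uses \cref{Prop:GradientPolarForm}. Second, for the consequence you get the equality $\|\nabla g_1(S^{1/2})\|_\mathrm{F}^2=\mathrm{B}(S,\rmk(S))$ from \cref{Lem:StepLength}, whereas the paper only needs the inequality $\|S^{1/2}-A'\|_\mathrm{F}^2\geq\mathrm{B}(S,\rmk(S))$ from \cref{Eq:BuresMin}.

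One slip must be fixed: the adjoint in your explicit unitary is in the wrong place. Since $\mathrm{Pol}(X^\dagger)=\mathrm{Pol}(X)^\dagger$, your $U_i=\mathrm{Pol}(R_i^{1/2}S^{1/2})^\dagger$ equals $\mathrm{Pol}(S^{1/2}R_i^{1/2})$. In general $R_i^{1/2}\mathrm{Pol}(S^{1/2}R_i^{1/2})$ is not optimally paired with $S^{1/2}$. For instance, with $S=\mathrm{diag}(1,4)$ and $R_i^{1/2}=\begin{pmatrix}2&1\\1&2\end{pmatrix}$ you get $\langle S^{1/2},B_i\rangle=35/\sqrt{37}<\sqrt{37}=\mathrm{F}(S,R_i)$. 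The optimal root is $R_i^{1/2}\mathrm{Pol}(R_i^{1/2}S^{1/2})$, which is the paper's $R_i^{1/2}V_i^\dagger$ with $V_i=\mathrm{Pol}(S^{1/2}R_i^{1/2})$. So, as written, your opening optimality claim and the asserted identity $B_i=(S^{-1}\#R_i)S^{1/2}$ are both false.

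The repair costs nothing: define $B_i:=(S^{-1}\#R_i)S^{1/2}$ outright. Your Riccati computation already shows $B_i\in\pi^{-1}[R_i]$ and $\mathrm{Re}\langle S^{1/2},B_i\rangle=\mathrm{F}(S,R_i)$. The gradient formula then gives $\sum_i w_iB_i=A'$. These three facts are all the rest of your proof uses.
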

\end{tbox}
\begin{proof}
    For any $(Y_i)_\inn \subseteq \mathrm{L}(d)$ with mean $\overline{Y} := \sumin w_i Y_i$ and any $X \in \mathrm{L}(d)$, expanding both sides verifies the variance identity
    \begin{equation} \label{Eq:VarianceIdentity}
        \sumin w_i \left\|X - Y_i\right\|_\mathrm{F}^2 = \sumin w_i \left\|\overline{Y} - Y_i\right\|_\mathrm{F}^2 + \left\|X - \overline{Y}\right\|_\mathrm{F}^2.
    \end{equation}
    Apply this with $X \equiv S \hf$ and $Y_i \equiv R_i \hf V_i^\dagger$, where $V_i = \mathrm{Pol}(S \hf R_i \hf)$, so that each $(S \hf , R_i \hf V_i^\dagger)$ is an optimal pairing: $\|S \hf - R_i \hf V_i^\dagger\|_\mathrm{F}^2 = \mathrm{B}(R_i, S)$.
    The left-hand side of~\cref{Eq:VarianceIdentity} is then $2 f_1(S) = 2 g_1(S\hf)$.
    The chain rule $\nabla g_1(S \hf) = 2 \nabla f_1(S) S \hf = \sumin w_i [S \hf - R_i \hf V_i^\dagger]$ (the second equality is~\cref{Prop:GradientPolarForm} applied termwise) can be rearranged as
    \begin{equation}
        \overline{Y} = \sumin w_i R_i \hf V_i^\dagger = S \hf - \nabla g_1(S \hf) = A',
    \end{equation}
    so $\|X - \overline{Y}\|_\mathrm{F}^2 = \|\nabla g_1(S \hf)\|_\mathrm{F}^2$.
    Finally, $\overline{Y} = A' \in \pi \iv [\rmk(S)]$ and $Y_i \in \pi \iv [R_i]$, so the variational characterization of the BW distance gives $\|\overline{Y} - Y_i\|_\mathrm{F}^2 \geq \mathrm{B}(R_i, \rmk(S))$, whence $\sumin w_i \|\overline{Y} - Y_i\|_\mathrm{F}^2 \geq 2 f_1(\rmk(S)) = 2 g_1(A')$.
    Substituting the three expressions into~\cref{Eq:VarianceIdentity} and dividing by two gives the first claim; the second follows since $\|\nabla g_1(S \hf)\|_\mathrm{F}^2 = \|S \hf - A'\|_\mathrm{F}^2 \geq \mathrm{B}(S, \rmk(S))$.
\end{proof}

We now see where the non-negative curvature of the manifold yields the descent lemma.  
The total manifold---equipped with the Frobenius metric---is flat, and there the statement is the Euclidean variance identity for the mean of the optimally paired square-roots. 
The inequality arises from descending to the base manifold, where the natural (BW) distance is obtained by \textit{minimizing} the Frobenius distance over the fibres (\cref{Eq:BuresMin}). 
Consequently, the quotient map contracts distances.
By O'Neill's formula for Riemannian submersions, sectional curvature can only increase under the quotient~\cite{oneill1966fundamental}, so the BW manifold inherits non-negative curvature from the flat total space; on Wasserstein space at large, the non-negative curvature holds in the Alexandrov sense~\cite[Thm.~7.3.2]{ambrosio2008gradient}.
In the proof, the contraction enters with the favorable sign: the averaged square roots project (down to the base manifold via $\pi$) to precisely the next RGD iterate, which is at least as close to each reference as the flat identity accounts for, so the gradient-norm term survives as a genuine per-step decrease.
This is an instance of a general phenomenon: non-negative curvature yields geodesic smoothness of the barycenter functional, and with it unit-step descent, on the whole Wasserstein space~\cite[Thm.~7]{chewi2020gradient}.

\section{Descent under Projection onto the Refined Interval} \label{App:RefinedInterval}
Throughout this appendix, $f_1$ is the barycenter functional and $\alpha' = \Lt[ \sumin w_i \sqrt{\lmin(R_i)} \Rt]^2$, $\beta':= \lmax \Lt( \sumin w_i R_i \Rt)$ are the \emph{refined} spectral bounds of~\cref{Eq:RefinedConstantsIntro}; they satisfy $\alpha \leq \alpha' \leq \beta' \leq \beta$, enclose the optimum ($\alpha' I \leq S_\star \leq \beta' I$; floor by~\cite[Thm.~6]{altschuler2021averaging}, ceiling by~\cite[Thm.~9]{bhatia2019bures}), and satisfy $\beta' \leq \sumin w_i \lmax(R_i)$, so $\kappa'$ is no larger than the refined condition number in which the small-step guarantee of~\cite[Thm.~3]{altschuler2021averaging} is stated.
Thus it is natural to ask if clipping onto the tighter interval $\abid$ will still lead to convergence of the Projected BW-GD algorithm, which does not follow from the arguments used in~\cref{Sec:Convergence}. 
This is because the descent argument of~\cref{Prop:ProjectedDescent} relied on the fact that each ensemble element $R_i$ lies in $\abi$ and is hence a \emph{fixed point} of $\clab$, which is what forced $f_1(\clab[\rmk(S)]) \leq f_1(\rmk(S))$.
The refined interval does not contain the ensemble---individual $R_i$ may have eigenvalues on either side of $[\alpha', \beta']$---so the fixed-point argument is unavailable.
By working with the objective function directly, we can show that clipping to this refined interval decreases the objective value, which allows us to use it in~\cref{alg:projected_bwgd}.
\begin{tbox}
    \begin{prop}[Refined clipping does not increase the objective] \label{Prop:RefinedMonotone}
        For every $S \in \pdd$, it holds that
        \begin{equation}
            f_1\Lt(\mathrm{clip}_{\alpha', \beta'}(S)\Rt) \leq f_1(S).
        \end{equation}
    \end{prop}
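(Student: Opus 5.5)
We prove the inequality by a monotone path argument rather than a fixed-point argument. Write $S = \sum_j \lambda_j v_j v_j\dg$ and $C := \mathrm{clip}_{\alpha',\beta'}(S) = \sum_j c_j v_j v_j\dg$, and consider the straight segment $S_s := S + s(C - S)$, $s \in [0,1]$. Every $S_s$ is positive definite and shares the eigenbasis $(v_j)$ of $S$. Its eigenvalues $\mu_j(s)$ move monotonically from $\lambda_j$ to $c_j$. Hence $\mu_j(s) \geq \beta'$ whenever $\lambda_j > \beta'$, and $\mu_j(s) \leq \alpha'$ whenever $\lambda_j < \alpha'$. We show that $s \mapsto f_1(S_s)$ is non-increasing, which gives the claim by integrating from $0$ to $1$. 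By~\cref{Eq:GradBures},
\begin{equation}
    \frac{\mathrm{d}}{\mathrm{d}s} f_1(S_s) = \la \nabla f_1(S_s), C - S \ra = \frac12 \sum_{j} (c_j - \lambda_j)\Big(1 - \sumin w_i \la v_j, (S_s\iv \# R_i) v_j \ra\Big).
\end{equation}
Only indices with $c_j \neq \lambda_j$ contribute, so it suffices to control the diagonal entries $\la v_j, (S_s\iv \# R_i) v_j\ra$ on the two clipped groups.

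\emph{Floor indices} ($\lambda_j < \alpha'$, so $c_j - \lambda_j > 0$ and $\mu := \mu_j(s) \leq \alpha'$). Operator monotonicity of the geometric mean, together with $R_i \geq \lmin(R_i) I$, gives $S_s\iv \# R_i \geq \sqrt{\lmin(R_i)}\, S_s\ihf$. Since $v_j$ is an eigenvector of $S_s$, this yields $\la v_j, (S_s\iv \# R_i) v_j\ra \geq \sqrt{\lmin(R_i)/\mu}$. Summing with weights,
\begin{equation}
    \sumin w_i \la v_j, (S_s\iv \# R_i) v_j\ra \geq \sqrt{\alpha'/\mu} \geq 1 .
\end{equation}
So the bracket is $\leq 0$ and the term is $\leq 0$. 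This is the same mechanism as in the proof of~\cref{Lem:ProjectionLemma}, and it explains why $\alpha'$ is exactly the square of the weighted mean of the $\sqrt{\lmin(R_i)}$.

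\emph{Ceiling indices} ($\lambda_j > \beta'$, so $c_j - \lambda_j < 0$ and $\mu \geq \beta'$). Here we need an upper bound. We use the Cauchy--Schwarz form of the geometric mean: for $A, B > 0$ the matrix $W := A\ihf (A\#B) B\ihf$ is unitary. Indeed $WW\dg = A\ihf (A\#B)B\iv(A\#B)A\ihf = I$ by the Riccati equation and symmetry of $\#$. Hence $\la v, (A\#B)v\ra = \la A\hf v, W B\hf v\ra \leq \sqrt{\la v, Av\ra \la v, Bv\ra}$. With $A = S_s\iv$ and $B = R_i$ this gives $\la v_j,(S_s\iv\#R_i)v_j\ra \leq \sqrt{\la v_j, R_i v_j\ra/\mu}$. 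Concavity of the square root (Jensen) then yields
\begin{equation}
    \sumin w_i \sqrt{\la v_j, R_i v_j\ra/\mu} \leq \sqrt{\la v_j, \textstyle\sumin w_i R_i\, v_j\ra/\mu} \leq \sqrt{\beta'/\mu} \leq 1 .
\end{equation}
So the bracket is $\geq 0$, and multiplied by $c_j - \lambda_j < 0$ the term is $\leq 0$. This is where $\beta' = \lmax(\sumin w_i R_i)$ enters.

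Both groups therefore contribute non-positively for every $s \in [0,1]$, so $f_1(C) \leq f_1(S)$. The main obstacle is the ceiling bound. The floor case follows from monotonicity exactly as in~\cref{Lem:ProjectionLemma}. Monotonicity alone, however, would only bound the ceiling terms by $\sqrt{\lmax(R_i)/\mu}$, which gives $\sumin w_i\sqrt{\lmax(R_i)}$ rather than the sharper $\beta'$. That is why we need the unitary-factor (Cauchy--Schwarz) representation of $A\#B$ combined with Jensen; verifying that $W$ is unitary is the one computation to check carefully.
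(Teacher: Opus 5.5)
Your proof is correct. The local estimates are the paper's. On floor indices you use operator monotonicity of $\#$ with $R_i \geq \lmin(R_i) I$. On ceiling indices you use the Cauchy--Schwarz bound $\la v_j, T_i v_j\ra \leq \sqrt{\la v_j, R_i v_j\ra/\mu}$, then Jensen and the definition of $\beta'$. The paper gets the same Cauchy--Schwarz bound by inserting $\hat S^{1/2}\hat S^{-1/2}$ and using $T_i \hat S T_i = R_i$ directly. That is your unitary $W$ in disguise, and it avoids the unitarity check. Also, $W\dg W = I$ already follows from $TA\iv T = B$, so symmetry of $\#$ is not needed.

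The genuine difference is the global wrapper.
\begin{itemize}
\item The paper invokes convexity of $f_1$ on $\pdd$, $f_1(S) \geq f_1(\hat S) + \la \nabla f_1(\hat S), S - \hat S\ra$. It checks the sign of the pairing only at the clipped point $\hat S$, where the clipped eigenvalues equal $\alpha'$ or $\beta'$ exactly.
\item You integrate the directional derivative along the segment from $S$ to $C$ and check the sign at every point. This works because your bounds need only $\mu \leq \alpha'$ on floor indices and $\mu \geq \beta'$ on ceiling indices, and both persist along the segment.
\end{itemize}
Your route uses no convexity of $f_1$, only differentiability and the gradient formula. It also gives the slightly stronger conclusion that $f_1$ is non-increasing along the whole segment. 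The paper's route needs the sign at a single point, but relies on the Euclidean convexity of the squared BW distance.

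One correction to your closing remark: $\beta'$ is not always sharper than $\beta'' := \big(\sumin w_i\sqrt{\lmax(R_i)}\big)^2$; the two are incomparable.
\begin{itemize}
\item With $R_1 = \mathrm{diag}(4,\epsilon)$, $R_2 = \epsilon I$ and equal weights, $\beta' \approx 2$ while $\beta'' \approx 1$.
\item With $R_1 = \mathrm{diag}(1,\epsilon)$, $R_2 = \mathrm{diag}(\epsilon,1)$ and equal weights, $\beta' \approx 1/2$ while $\beta'' \approx 1$.
\end{itemize}
The accurate statement is that monotonicity alone proves the proposition with ceiling $\beta''$. The stated ceiling $\beta'$ genuinely needs your Cauchy--Schwarz and Jensen step. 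Combining the two cases, your argument also shows that clipping to $[\alpha', \min\{\beta',\beta''\}]$ does not increase $f_1$.
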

\end{tbox}
\begin{proof}
    Let $\hat S := \mathrm{clip}_{\alpha', \beta'}(S)$, and let $S = \sum_{j=1}^d \lambda_j v_j v_j \dg$ be the eigendecomposition; then $\hat S = \sum_j \mathrm{clip}_{\alpha', \beta'}(\lambda_j) v_j v_j\dg$ shares the eigenbasis, and
    \begin{equation}
        \Delta \equiv S - \hat S = \sum_{j=1}^d \delta_j \, v_j v_j \dg, \qquad \delta_j := \lambda_j - \mathrm{clip}_{\alpha', \beta'}(\lambda_j).
    \end{equation}
    The convexity of $f_1$ on $\pdd$ implies the following first-order convexity condition:
    \begin{equation}
        f_1(S) \geq f_1(\hat S) + \la \nabla f_1(\hat S), S - \hat S \ra.
    \end{equation}
    Thus it suffices to show $\la \nabla f_1(\hat S), S - \hat S \ra \geq 0$.
    By~\cref{Eq:GradBures}, $\nabla f_1(\hat S) = \frac12 (I - \overline{T})$ with $\overline{T} := \sumin w_i T_i$ and $T_i := \hat S \iv \# R_i$, where each $T_i$ is the unique positive definite solution of the Riccati equation $T_i \hat S T_i = R_i$~\cite{BhatiaPD}.
    Expanding in the shared eigenbasis,
    \begin{equation} \label{Eq:RefinedTermwise}
        \la \nabla f_1(\hat S), S - \hat S \ra = \frac12 \sum_{j=1}^d \delta_j (1 - \la v_j, \overline{T} v_j \ra).
    \end{equation}
    We will show that each summand is non-negative.
    Since $\lambda_j \in [\alpha', \beta'] \Rightarrow \delta_j = 0$, we can restrict our attention to the \textit{ceiling} case: $\lambda_j \geq \beta'$, and the \textit{floor} case: $\lambda_j \leq \alpha'$. 
    Let us first consider the ceiling case. 
    For $j$ such that $\delta_j = \lambda_j - \beta' > 0$, we have $\hat S v_j = \beta' v_j$.
    For each $i$, inserting $\hat S \hf \hat S \ihf$ and applying Cauchy--Schwarz,
    \begin{equation} \label{Eq:CeilingEstimate}
    \begin{aligned}
        \la v_j,  T_i v_j \ra  = \la \hat S \hf T_i v_j, \hat S \ihf v_j \ra \leq \Lt\|\hat S \hf T_i v_j\Rt\| \Lt\|\hat S \ihf v_j\Rt\| &= \sqrt{\la v_j, T_i \hat S T_i v_j \ra} \cdot \sqrt{\la v_j, \hat S \iv v_j\ra} \\ &= \sqrt{\frac{\la v_j, R_i v_j \ra}{\beta'}},
    \end{aligned}
    \end{equation}
    where we used $T_i \hat S T_i = R_i$ and $\hat S \iv v_j = v_j / \beta'$.
    Averaging and using concavity of the square root,
    \begin{equation}
    \begin{aligned}
        \la v_j, \overline{T} v_j \ra  = \sumin w_i \la v_j, T_i v_j \ra  \leq \frac{1}{\sqrt{\beta'}} \sumin w_i \sqrt{\la v_j, R_i v_j \ra} &\leq \sqrt{\frac{\la v_j, \left(\sumin w_i R_i\right) v_j \ra}{\beta'}} \\ &\leq \sqrt{\frac{\lmax\left(\sumin w_i R_i\right)}{\beta'}} = 1,
    \end{aligned}
    \end{equation}
    the final equality by the definition of $\beta'$; hence $\delta_j (1 - v_j\dg \overline{T} v_j) \geq 0$ as required.
    
    We now consider the floor case.
    For $j$ such that $\delta_j = \lambda_j - \alpha' < 0$, we have $\hat S v_j = \alpha' v_j$.
    Here the required result follows from operator monotonicity of the geometric mean alone.
    Since $R_i \geq \lmin(R_i) I$, operator monotonicity of the geometric mean gives the operator inequality
    \begin{equation}
        T_i = \hat S \iv \# R_i  \geq  \hat S \iv \# \Lt(\lmin(R_i) I\Rt) = \sqrt{\lmin(R_i)}  \hat S \ihf.
    \end{equation}
    Evaluating the quadratic form at $v_j$, where $\hat S \ihf v_j = v_j/\sqrt{\alpha'}$, and averaging,
    \begin{equation}
        \la v_j, \overline{T} v_j \ra = \sumin w_i \la v_j, T_i v_j \ra  \geq \frac{1}{\sqrt{\alpha'}} \sumin w_i \sqrt{\lmin(R_i)} = \frac{\sqrt{\alpha'}}{\sqrt{\alpha'}} = 1,
    \end{equation}
    the final equality by the definition of $\alpha'$; hence $\delta_j(1 - \la v_j, \overline{T} v_j \ra) \geq 0$ in this case as well.
    Every term of~\cref{Eq:RefinedTermwise} is non-negative, so $\la \nabla f_1(\hat S), S - \hat S \ra \geq 0$, completing the proof.
\end{proof}
The second ingredient of the analysis of~\cref{Sec:Convergence} is the strong-convexity modulus feeding the PL inequality, and on the refined interval it cannot be borrowed as stated: the modulus of~\cref{Eq:MuF} is derived for references and iterates sharing the interval $\abi$, whereas here the iterates live in $[\alpha' I, \beta' I]$ while individual $R_i$ may lie outside it.
However, no new work is needed because the strong convexity of~\cref{Eq:MixedSC} is already \emph{mixed}: its constant couples $\lmin(R)$ of the reference with an upper bound $b$ on the argument alone, with no common interval required.
The following lemma records the specialization to $f_1$ and $b = \beta'$, producing the modulus $\mu'$ whose combination with the PL inequality yields the $\kappa'$-rate.
\begin{tbox}
    \begin{lemma}[{specialization of~\cref{Eq:MixedSC}; see~\cite[proof of Thm.~1]{bhatia2018strong}}] \label{Lem:MixedSC}
        The barycenter functional $f_1$ is $\mu'$-SC on $\{S \in \pdd : S \leq \beta' I\}$, where
        \begin{equation}
            \mu' := \frac{\sqrt{\alpha'}}{4 \beta'^{3/2}}, \qquad \text{so that} \qquad 4 \mu' \alpha' = \left(\frac{\alpha'}{\beta'}\right)^{3/2} = \kappa'^{-3/2}, \qquad \kappa' := \frac{\beta'}{\alpha'} \leq \kappa.
        \end{equation}
    \end{lemma}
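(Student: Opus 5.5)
The plan is to apply the mixed strong convexity bound \cref{Eq:MixedSC} term by term and average. For each $i \in [n]$, \cref{Eq:MixedSC} with reference $R = R_i$ and ceiling $b = \beta'$ says that $\mathrm{B}_{R_i}$ is $\frac{\sqrt{\lmin(R_i)}}{2\beta'^{3/2}}$-SC on $\{S \in \pdd : S \leq \beta' I\}$. This set is convex, since it is the intersection of the open cone with an operator interval, and so first-order strong convexity makes sense on it. No containment $R_i \leq \beta' I$ is needed: the constant depends only on $\lmin(R_i)$ and on the ceiling imposed on the argument. This is exactly the mixed feature emphasised before the lemma.

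Next I will form the weighted sum. Since $f_1 = \frac12 \sumin w_i \mathrm{B}_{R_i}$, and the first-order strong-convexity inequalities are linear in the function, multiplying the $i$-th inequality by $w_i/2$ and summing shows that $f_1$ is SC on the same set with modulus
\begin{equation}
    \frac12 \sumin w_i \frac{\sqrt{\lmin(R_i)}}{2\beta'^{3/2}} = \frac{\sumin w_i \sqrt{\lmin(R_i)}}{4\beta'^{3/2}} = \frac{\sqrt{\alpha'}}{4\beta'^{3/2}} = \mu'.
\end{equation}
The key observation is that the weighted mean of $\sqrt{\lmin(R_i)}$ is, by definition, exactly $\sqrt{\alpha'}$. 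Working with the average, rather than with the worst case $\min_i \lmin(R_i) = \alpha$, is what produces $\alpha'$ in place of $\alpha$. The algebra is then immediate: $4\mu'\alpha' = \alpha'^{3/2}/\beta'^{3/2} = \kappa'^{-3/2}$. Finally, $\kappa' \leq \kappa$ follows from $\alpha \leq \alpha'$ and $\beta' \leq \beta$, both recorded in \cref{Eq:RefinedConstantsIntro}.

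The only point that needs care is confirming that \cref{Eq:MixedSC} genuinely holds with the reference outside the interval and with no lower bound on the argument. I take this from \cite[proof of Thm.~1]{bhatia2018strong} as stated in the preliminaries. If I wanted to re-derive it, I would bound the Hessian of $S \mapsto -2\mathrm{F}(R,S)$ from below using $S \leq b I$ and $R \geq \lmin(R) I$, which yields the same constant. I also note that the factor-$\frac12$ convention matches \cref{Eq:MuF}.
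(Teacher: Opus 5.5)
Your proposal is correct and follows the paper's proof: both apply \cref{Eq:MixedSC} to each $\mathrm{B}_{R_i}$ with $b = \beta'$, average with weights $w_i/2$, and use the identity $\sumin w_i \sqrt{\lmin(R_i)} = \sqrt{\alpha'}$, which holds by definition. The only difference is cosmetic: the paper sums Hessian lower bounds to get $\nabla^2 f_1(S) \geq \mu' \cdot \mathrm{Id}$, whereas you sum the first-order inequalities, and the two are equivalent on the convex set $\{S \in \pdd : S \leq \beta' I\}$.
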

\end{tbox}
\begin{proof}
    Let $S \leq \beta' I$. Applying~\cref{Eq:MixedSC} to each $\mathrm{B}_{R_i}$ with $b = \beta'$,
    \begin{equation}
        \nabla^2 f_1 (S) = \frac12 \sumin w_i \nabla^2 \mathrm{B}_{R_i} (S)  \geq  \frac{1}{4 {\beta'}^{3/2}} \sumin w_i \sqrt{\lmin(R_i)} \cdot \mathrm{Id} = \frac{\sqrt{\alpha'}}{4 {\beta'}^{3/2}} \mathrm{Id} \equiv \mu' \cdot \mathrm{Id}, 
    \end{equation}
    where $\mathrm{Id} : \ld \to \ld$ is the identity map on the matrix space and the last equality by the definition of $\alpha'$. 
\end{proof}
With the two ingredients in place, the analysis of~\cref{Sec:Convergence} runs on the refined interval verbatim.
\begin{tbox}
    \begin{cor}[Convergence on the refined interval] \label{Cor:RefinedRate}
        Let $S_0 \in [\alpha' I, \beta' I]$ and let the iterates be generated by unit-step BW-GD projected onto the refined interval, $S_{t+1} := \mathrm{clip}_{\alpha', \beta'}[\rmk(S_t)]$.
        Then
        \begin{equation}
            f_1(S_t) - f_1(S_\star) \leq \left(1 - \frac{1}{\kappa'^{3/2}}\right)^t \Lt(f_1(S_0) - f_1(S_\star)\Rt),
        \end{equation}
        the verbatim analogue of~\cref{Thm:UnifiedConvergence} in the refined constants, and the iteration count of~\cref{Eq:IterationCount} improves accordingly to $\kappa'^{3/2} \log(1/\epsilon)$.
    \end{cor}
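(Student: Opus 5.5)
The plan is to rerun the proof of \cref{Thm:UnifiedConvergence} on the refined interval $[\alpha' I, \beta' I]$. Two ingredients change: the projection step now uses \cref{Prop:RefinedMonotone} in place of non-expansiveness, and the PL constant now comes from \cref{Lem:MixedSC}. The proof runs by induction on $t$, with hypothesis $S_t \in [\alpha' I, \beta' I]$ and $A_t \in \pi\iv[S_t]$; the base case is the assumption on $S_0$, and one may take $A_0 = S_0\hf$.

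\emph{Feasibility and descent.} Given $S_t \in [\alpha' I,\beta' I]$, the clip puts $S_{t+1} = \mathrm{clip}_{\alpha',\beta'}[\rmk(S_t)]$ in the refined interval by construction. The unprojected descent inequality $f_1(\rmk(S_t)) \leq f_1(S_t) - \frac12 \mathrm{B}(S_t,\rmk(S_t))$ (\cref{Prop:VarianceInequalityBuresDistance}) holds for every $S_t \in \pdd$. \cref{Prop:RefinedMonotone}, applied at $S = \rmk(S_t)$, gives $f_1(S_{t+1}) \leq f_1(\rmk(S_t))$. Chaining the two yields
\[
f_1(S_{t+1}) \leq f_1(S_t) - \tfrac12 \mathrm{B}(S_t,\rmk(S_t)).
\]
\cref{Lem:StepLength} with $X = 2\nabla f_1(S_t)$ then gives $\mathrm{B}(S_t,\rmk(S_t)) = \|\nabla g_1(A_t)\|_\rf^2$. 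Here $I - X = \sumin w_i [S_t\iv \# R_i] > 0$, exactly as in \cref{Prop:ProjectedDescent}. The induction step never uses $R_i \in [\alpha' I,\beta' I]$.

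\emph{PL inequality.} \cref{Lem:PLIneqfromSC} cannot be quoted verbatim, since its hypothesis is strong convexity on $[aI,bI]$. \cref{Lem:MixedSC} gives $\mu'$-SC on the larger set $\{S \leq \beta' I\} \supseteq [\alpha' I,\beta' I]$, so the hypothesis holds with $a=\alpha'$, $b=\beta'$, $\mu_0=\mu'$ and trivial twirl. Two points need checking here.
\begin{itemize}
\item The lemma's $S_\star$ must be the global barycenter. It is, because $S_\star \in [\alpha' I,\beta' I]$ by the cited floor and ceiling bounds, so the constrained and unconstrained minimizers coincide.
\item The convex combination argument evaluates strong convexity only at points of the convex set $[\alpha' I,\beta' I]$, namely $P = S_\star$ and $S = S_t$. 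The mixed modulus therefore suffices.
\end{itemize}
The lemma then gives $4\mu'\alpha'\,\eps_t \leq \frac12\|\nabla g_1(A_t)\|_\rf^2$, where $\eps_t := f_1(S_t) - f_1(S_\star)$ and $4\mu'\alpha' = \kappa'^{-3/2}$.

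\emph{Conclusion.} Combining with the descent gives $\eps_{t+1} \leq (1-\kappa'^{-3/2})\eps_t$, and unrolling yields the claimed bound. The iteration count $\kappa'^{3/2}\log(1/\epsilon)$ follows exactly as in \cref{Eq:IterationCount} via $1-x \leq e^{-x}$.

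The main obstacle is the PL step: one must confirm that \cref{Eq:MixedSC} really yields the Hessian bound of \cref{Lem:MixedSC} on the convex set containing both $S_t$ and $S_\star$. Once the modulus is in hand, the rest is the three-line argument of \cref{Thm:UnifiedConvergence}.
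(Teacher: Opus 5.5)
Your proposal is correct and follows the paper's proof essentially step for step. The descent step chains \cref{Prop:VarianceInequalityBuresDistance} with \cref{Prop:RefinedMonotone} and the step-length identity of \cref{Lem:StepLength}. The PL step applies \cref{Lem:PLIneqfromSC} with the mixed modulus $\mu'$ of \cref{Lem:MixedSC} on $[\alpha' I, \beta' I]$, and the two are then combined and unrolled. Your extra checks — that the constrained minimizer coincides with the global barycenter because $S_\star \in [\alpha' I, \beta' I]$, and that strong convexity is only invoked at points of that convex set — are ones the paper leaves implicit.
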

\end{tbox}
\begin{proof}
    Write $\eps_t := f_1(S_t) - f_1(S_\star)$ and fix $A_t \in \pi \iv [S_t]$; all iterates (by construction of the clipping step) and $S_\star$ lie in $[\alpha' I, \beta' I]$.
    Chaining the unit-step descent inequality (\cref{Prop:VarianceInequalityBuresDistance}) with~\cref{Prop:RefinedMonotone}, and using the step-length identity $\rb(S_t, \rmk(S_t)) = \|\nabla g_1(A_t)\|_\rf^2$ (\cref{Lem:StepLength}), gives $\eps_{t+1} \leq \eps_t - \frac12 \|\nabla g_1(A_t)\|_\rf^2$.
    By~\cref{Lem:MixedSC}, $f_1$ is $\mu'$-SC on $\{S \in \pdd : S \leq \beta' I\} \supseteq [\alpha' I, \beta' I]$, so~\cref{Lem:PLIneqfromSC} (applied with $[a I, b I] = [\alpha' I, \beta' I]$ and $\mu_0 = \mu'$) gives $\kappa'^{-3/2} \eps_t = 4 \mu' \alpha' \eps_t \leq \frac12 \|\nabla g_1(A_t)\|_\rf^2$.
    Combining the two and unrolling yields the rate; the iteration count follows as in~\cref{Sec:Certificate}.
\end{proof}
The canonical initialization $\frac{\alpha + \beta}{2} I$ of~\cref{alg:projected_bwgd} may lie outside the refined interval; one clips it onto $[\alpha' I, \beta' I]$ first, or initializes at $\frac{\alpha' + \beta'}{2} I$.
We note that the statement is specific to the barycenter problem: for the invariant projection problem (a single reference $R$), the refined constants reduce to $\alpha' = \lmin(R) = \alpha$ and $\beta' = \lmax(R) = \beta$, and nothing is gained.
Moreover, in the worst case over ensembles, $\alpha' = \alpha$ and $\beta' = \beta$ are attained, so the refinement improves the constants on favorable instances while leaving the worst-case rate of~\cref{Thm:UnifiedConvergence} unchanged; on every instance, the count $\kappa'^{3/2} \log(1/\epsilon)$ is expressed in a condition number no larger than that of the $O(\kappa'^{5/2} \log(1/\epsilon))$ small-step guarantee of~\cite{altschuler2021averaging}, which it therefore dominates.

\section{The Projected Iteration on the Total Manifold} \label{App:TotalManifold}

The base--total equivalence yields a closed-form implementation of both algorithms of~\cref{Sec:Algorithms}, which we now describe, beginning by transporting the entire iteration---projection included---to the total manifold.

\begin{tbox}
    \begin{cor}[Singular-value clipping is Frobenius projection] \label{Cor:FrobeniusProjection}
        For every $A \in \gld$, singular-value clipping $\mathrm{clip}^{\mathrm{sv}}_{\sqrt{\alpha}, \sqrt{\beta}}$ is the unique Frobenius projection onto the \textit{square-roots} of the well-conditioned set:
        \begin{equation}
            \mathrm{clip}^{\mathrm{sv}}_{\sqrt\alpha, \sqrt\beta}(A) = \argmin_{B \in \pi \iv [\abi]} \|A - B\|_\rf.
        \end{equation}
        Moreover, the two projections are intertwined by the quotient map:
        \begin{equation}
            \pi \circ \mathrm{clip}^{\mathrm{sv}}_{\sqrt\alpha, \sqrt\beta} = \clab \circ \pi.
        \end{equation}
    \end{cor}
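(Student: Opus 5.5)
The plan is to reduce the total-manifold projection problem to the base-manifold Projection Lemma (\cref{Lem:ProjectionLemma}) via the fibre-minimization formula \cref{Eq:BuresMin}. First, the intertwining identity, which is a direct computation. Write the polar decomposition $A = |A^\dagger| V$ with $V = \mathrm{Pol}(A)$ unitary and $|A^\dagger| = (AA^\dagger)\hf = \pi(A)\hf$. Equivalently, $A = W \Sigma Z^\dagger$ in SVD form. Singular-value clipping gives $\mathrm{clip}^{\mathrm{sv}}_{\sqrt\alpha,\sqrt\beta}(A) = W \mathrm{clip}_{\sqrt\alpha,\sqrt\beta}(\Sigma) Z^\dagger$. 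Hence
\begin{equation}
\pi\big(\mathrm{clip}^{\mathrm{sv}}_{\sqrt\alpha,\sqrt\beta}(A)\big) = W \mathrm{clip}_{\sqrt\alpha,\sqrt\beta}(\Sigma)^2 W^\dagger = W \clab(\Sigma^2) W^\dagger = \clab(AA^\dagger),
\end{equation}
using that $x \mapsto x^2$ is increasing on $[0,\infty)$, so $\mathrm{clip}_{\sqrt\alpha,\sqrt\beta}(\sigma)^2 = \clab(\sigma^2)$. In polar form this reads $\mathrm{clip}^{\mathrm{sv}}(A) = \clab(\pi(A))\hf V$. In particular $\mathrm{clip}^{\mathrm{sv}}(A) \in \pi\iv[\abi]$, so it is feasible.

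Next, optimality. For any $B \in \pi\iv[\abi]$, put $Q := \pi(B) \in \abi$ and $P := \pi(A)$. By \cref{Eq:BuresMin} and then \cref{Lem:ProjectionLemma},
\begin{equation}
\|A - B\|_\rf^2 \geq \rb(P, Q) \geq \rb(P, \clab(P)).
\end{equation}
It remains to show that $\hat B := \clab(P)\hf V$ attains this lower bound, i.e.\ that $(A, \hat B)$ is an optimal pairing for $(P, \clab(P))$. Since $P$ and $\clab(P)$ commute, we have $\la A, \hat B\ra = \Re\tr[V^\dagger P\hf \clab(P)\hf V] = \tr[P\hf \clab(P)\hf]$. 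Because the two matrices commute, this equals $\mathrm{F}(P, \clab(P))$. Therefore $\|A - \hat B\|_\rf^2 = \tr[P + \clab(P)] - 2\mathrm{F} = \rb(P, \clab(P))$. So $\hat B$ is a minimizer. (Alternatively, one can simply note that in the SVD basis the problem separates scalarwise, but the chain above is cleaner.)

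The main obstacle is uniqueness, since $\pi\iv[\abi]$ is non-convex. Suppose $B$ is any minimizer. Then both inequalities in the chain above are equalities. The second equality, together with the uniqueness part of \cref{Lem:ProjectionLemma}, forces $\pi(B) = \clab(P)$, so $B = \clab(P)\hf U$ for some unitary $U$. The first equality says $U$ maximizes $\Re\tr[A^\dagger \clab(P)\hf U] = \Re\tr[V^\dagger P\hf \clab(P)\hf U]$. Here $M := P\hf\clab(P)\hf$ is positive definite, since $A \in \gld$ and clipping keeps eigenvalues at least $\alpha > 0$. The maximizer of $\Re\tr[V^\dagger M U]$ over unitaries $U$ is unique whenever $V^\dagger M$ is invertible: it is the polar factor, namely $U = V$. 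This is the standard uniqueness of the optimal unitary for full-rank matrices cited after \cref{Eq:BuresMin}. Hence $B = \hat B$, which completes the proof.
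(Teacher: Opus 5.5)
Your proposal is correct and follows essentially the same route as the paper: intertwining and feasibility via the SVD, the lower-bound chain through \cref{Eq:BuresMin} and the Projection Lemma, attainment via the commuting optimal pairing, and uniqueness from uniqueness of the BW projection plus uniqueness of the nearest fibre point to a full-rank $A$. The only difference is that you verify that last uniqueness directly by maximizing $\Re\tr[V^\dagger M U]$ with $M > 0$, where the paper cites the polar-factor formula $\hat S\hf\,\mathrm{Pol}(\hat S\hf A)$.
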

\end{tbox}
\begin{proof}
    Results of this type are well known for spectral sets: the Frobenius projection onto a set defined by symmetric constraints on the eigenvalues or singular values reduces to projecting the spectrum itself onto the corresponding set of scalars~\cite{lewis2008alternating}.
    Rather than specializing the general theory, we give a short proof, deriving uniqueness from the Projection Lemma through the quotient structure.
    Recall that $\lambda_i(BB\dg) = \sigma_i(B)^2$ for any $B \in \ld$.
    Fix a singular value decomposition $A = U \Sigma V\dg$ with $\sigma_1(A) \geq \cdots \geq \sigma_d(A) > 0$, and write $\hat A \equiv \mathrm{clip}^{\mathrm{sv}}_{\sqrt\alpha, \sqrt\beta}(A) = U\, \mathrm{clip}_{\sqrt\alpha, \sqrt\beta}(\Sigma)\, V\dg$, which is feasible.

    We appeal to the fibre geometry of the total manifold.
    Let $S := \pi(A)$ and $\hat S := \pi(\hat A) = U\, \mathrm{clip}_{\sqrt\alpha, \sqrt\beta}(\Sigma)^2\, U\dg = \clab(S)$---which is precisely the claimed intertwining identity.
    Moreover, $(A, \hat A)$ is an optimal pairing for $(S, \hat S)$: the two commute, so $\mathrm{F}(S, \hat S) = \tr[S \hf \hat S \hf] = \sumid \sigma_i(A)\, \mathrm{clip}_{\sqrt\alpha, \sqrt\beta}(\sigma_i(A)) = \Re \la A, \hat A \ra$, whence $\|A - \hat A\|_\rf^2 = \rb(S, \hat S)$.
    Now let $B$ be any feasible point, with $Q := \pi(B) \in \abi$.
    Then
    \begin{equation} \label{Eq:FrobProjChain}
        \|A - B\|_\rf^2 \geq \min_{B' \in \pi \iv [Q]} \|A - B'\|_\rf^2 = \rb(S, Q) \geq \rb(S, \hat S) = \|A - \hat A\|_\rf^2.
    \end{equation}
    The first inequality holds simply because $B$ lies in the fibre of $Q$.
    The equality that follows is the quotient formula~\cref{Eq:BuresMin}: the Frobenius distance is invariant under right multiplication by unitaries, so the minimum over both fibres is attained already from the fixed representative $A$.
    The second inequality is the Projection Lemma (\cref{Lem:ProjectionLemma}), as $\hat S = \clab(S)$; the final equality is the optimal pairing established above.
    In particular, since $\hat A$ is feasible, the chain shows that it is a minimizer.
    If $B$ is a minimizer, then $\|A - B\|_\rf^2 = \|A - \hat A\|_\rf^2$, so every step in~\cref{Eq:FrobProjChain} holds with equality.
    Equality in the Projection Lemma step forces $Q = \hat S$, since the BW projection is unique (\cref{Lem:ProjectionLemma}); thus $B$ lies in the fibre $\pi \iv [\hat S]$, and equality in the first step says $B$ is a nearest point of this fibre to $A$.
    But the nearest point of a fibre to a full-rank $A$ is unique---it is $\hat S \hf\, \mathrm{Pol}(\hat S \hf A)$, from the optimal unitary of~\cref{Eq:BuresMin} (see also~\cref{Prop:GradientPolarForm} below)---and $\hat A$ is one such point; hence $B = \hat A$.
\end{proof}
\begin{remark}[Projection on the total manifold can be expansive] \label{Rem:NoFrobeniusNonExpansive}
    Unlike with BW projection in the base manifold, the Frobenius projection of~\cref{Cor:FrobeniusProjection} is \emph{not} non-expansive: already for $d = 1$ and $\sqrt\alpha = 1$, the points $\pm\eps$ project to $\pm 1$, an expansion by the factor $1/\eps$.
    The culprit is the floor---$\{B : \sigma_{\min}(B) \geq \sqrt\alpha\}$ excludes a neighborhood of the singular cone and is non-convex, whereas the ceiling alone is the operator-norm ball, whose projection (the upper clip) is non-expansive by convexity.
    This does not contradict~\cref{Lem:ProjectionLemma}: the expansion occurs along the fibre direction ($\pm\eps$ lie in the same fibre), which the quotient metric collapses. 
    Moving to the BW distance is precisely what restores non-expansiveness.
\end{remark}

\begin{tbox}
    \begin{theorem}[Projected BW-GD is projected Euclidean GD] \label{Thm:ProjectedGDEquivalence}
        Let $f : \pdd \to \bbr$ be differentiable, $g := f \circ \pi$, and $(\eta_t)_{t \geq 0}$ a sequence of step sizes.
        Let $(S_t)_{t \geq 0}$ be generated by Projected BW-GD from $S_0 \in \abi$,
        \begin{equation} \label{Eq:ProjectedBaseRecursion}
            S_{t+1} = \clab\Big[ \Lt[I - 2 \eta_t \nabla f(S_t)\Rt] \, S_t \, \Lt[I - 2 \eta_t \nabla f(S_t)\Rt] \Big],
        \end{equation}
        and $(A_t)_{t \geq 0}$ by singular-value clipped Euclidean GD from any $A_0 \in \pi \iv [S_0]$,
        \begin{equation} \label{Eq:ProjectedTotalRecursion}
            A_{t+1} = \mathrm{clip}^{\mathrm{sv}}_{\sqrt{\alpha}, \sqrt{\beta}}\Lt[A_t - \eta_t \nabla g(A_t)\Rt].
        \end{equation}
        Then $\pi[A_t] = S_t$ for every $t \geq 0$.
        Consequently one may propagate $(A_t)_{t \geq 0}$ alone, for $f \in \{f_1, f_2\}$, halting once $\|\hat\nabla g(A_t)\|_\rf^2 \leq 2 \kappa^{-3/2} \delta$ certifies $f(S_t) - f(S_\star) \leq \delta$ (\cref{Cor:Certificate}), and return $S_t = \pi(A_t)$ only upon termination.
    \end{theorem}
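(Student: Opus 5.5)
Induction on $t$. The base case is immediate: $A_0 \in \pi\iv[S_0]$ means $\pi(A_0) = S_0$. For the inductive step, assume $\pi(A_t) = S_t$ and set $M_t := I - 2\eta_t \nabla f(S_t)$.

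By the chain rule of \cref{Sec:AlgorithmSqrt}, $\nabla g(A_t) = 2\nabla f(\pi(A_t)) A_t = 2\nabla f(S_t) A_t$. Hence the unclipped upstairs step is
\[
A'_t := A_t - \eta_t \nabla g(A_t) = M_t A_t ,
\]
and applying $\pi$ gives $\pi(A'_t) = M_t A_t A_t\dg M_t = M_t S_t M_t$. This is exactly the argument of $\clab$ in \cref{Eq:ProjectedBaseRecursion}. Then the intertwining identity $\pi \circ \mathrm{clip}^{\mathrm{sv}}_{\sqrt\alpha,\sqrt\beta} = \clab \circ \pi$ of \cref{Cor:FrobeniusProjection} yields $\pi(A_{t+1}) = \clab[M_t S_t M_t] = S_{t+1}$, which closes the induction.

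The one point needing care is that \cref{Cor:FrobeniusProjection} is stated for $A \in \gld$, so $A'_t$ must be invertible. I will assume, as the recursion implicitly requires, that each $\eta_t$ satisfies the step-size condition \cref{Eq:StepSizeCondition} at $S_t$. Then $M_t > 0$, and $A_t$ is invertible by induction: it is $A_0 \in \gld$ at $t=0$, and afterwards its singular values lie in $[\sqrt\alpha,\sqrt\beta]$. So $A'_t \in \gld$. Even without this, the intertwining identity holds for every matrix: if $A = U\Sigma V\dg$, then $\pi(A) = U\Sigma^2 U\dg$, and clipping $\Sigma$ to $[\sqrt\alpha,\sqrt\beta]$ corresponds to clipping $\Sigma^2$ to $[\alpha,\beta]$, because $x \mapsto x^2$ is monotone on $[0,\infty)$. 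This can serve as a fallback.

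For the consequence, I specialize to $\eta_t = 1$ and $f \in \{f_1,f_2\}$, with $\nabla g$ replaced by $\hat\nabla g$ for $f_2$. Here $\Phi(\nabla g_2(A)) = 2\Phi(\nabla f_2(S))A$ for $A \in \mathcal{G}$, so the same argument goes through with $M_t = I - 2\Phi(\nabla f_2(S_t))$; invariance is preserved by \cref{Prop:ProjectedDescent}(i). The stopping rule then follows from \cref{Cor:Certificate} together with the identity $\mathrm{B}(S_t,\rmk(S_t)) = \|\hat\nabla g(A_t)\|_\rf^2$ from the proof of \cref{Prop:ProjectedDescent}. Since $\pi(A_t) = S_t$, computing $\pi(A_t)$ once at termination returns the correct $S_t$.

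I expect no real obstacle. The only subtlety is the domain issue, namely the invertibility of $A'_t$ and positivity of $M_t$, which the step-size hypothesis handles.
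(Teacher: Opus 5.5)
Your proof is correct and follows the paper's argument. Both proceed by induction on $t$, use the chain rule $\nabla g(A_t) = 2\nabla f(S_t)A_t$ (which the paper packages through \cref{Lem:StepLength}) to identify $\pi(A_t')$ with the argument of the eigenvalue clip, and close with the intertwining identity of \cref{Cor:FrobeniusProjection}. You also make two points explicit that the paper leaves implicit in the hypotheses of \cref{Lem:StepLength}: the step-size condition that guarantees $A_t' \in \gld$, and the fact that the intertwining identity holds for any matrix because $x \mapsto x^2$ is monotone on $[0,\infty)$.
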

\end{tbox}
\begin{proof}
    Induct on $t$; the case $t = 0$ holds by choice of $A_0$.
    Suppose $\pi[A_t] = S_t$.
    By~\cref{Lem:StepLength} the unprojected step $A_t' := A_t - \eta_t \nabla g(A_t)$ satisfies $\pi[A_t'] = [I - 2\eta_t \nabla f(S_t)] S_t [I - 2 \eta_t \nabla f(S_t)]$, the argument of the clip in~\cref{Eq:ProjectedBaseRecursion}.
    Singular-value clipping upstairs implements eigenvalue clipping downstairs by the intertwining identity $\pi \circ \mathrm{clip}^{\mathrm{sv}}_{\sqrt \alpha, \sqrt \beta} = \clab \circ \pi$ of~\cref{Cor:FrobeniusProjection}.
    Hence $\pi[A_{t+1}] = S_{t+1}$.
\end{proof}
For the invariant projection problem ($f = f_2$), both recursions are run with the twirled gradients---$\Phi(\nabla f_2)$ in place of $\nabla f$ in~\cref{Eq:ProjectedBaseRecursion} and $\hat\nabla g$ in place of $\nabla g$ in~\cref{Eq:ProjectedTotalRecursion}---and the theorem holds verbatim: the proof uses only that the step has the form covered by~\cref{Lem:StepLength}, which accepts any Hermitian $X$, and that clipping intertwines with $\pi$, neither of which is affected by the twirl.

This perspective does not lead to a computational advantage: evaluating $\nabla g(A_t) = 2 \nabla f(S_t) A_t$ still passes through $S_t = A_t A_t\dg$ and its square roots.
Its value is structural, in two respects.
First, \cref{Thm:ProjectedGDEquivalence} exhibits the algorithm as projected Euclidean gradient descent over the non-convex spectral set $\pi \iv [\abi]$, the flat-space description under which the PL machinery of~\cref{Sec:Convergence} applies.
Second, the gradient step acquires a closed form with a geometric meaning---it replaces the current square root by the weighted average of the optimally paired square roots of the references---which is exactly the form in which the unit-step descent lemma becomes the variance identity of~\cref{App:UsefulLemmas}; the following proposition formalizes it.
Write $\mathrm{Pol}(X)$ for the unitary polar factor of an invertible $X$, so $X = \mathrm{Pol}(X) \, |X|$ with $|X| \equiv (X^\dagger X)\hf$.
\begin{tbox}
    \begin{prop} \label{Prop:GradientPolarForm}
        Let $R \in \pdd$, $f_2 = \frac12 \rb(R, \cdot)$, and $g_2 = f_2 \circ \pi$. For every $A \in \gld$,
        \begin{equation}
            A - \nabla g_2(A) = R \hf U = \argmin_{B \in \pi \iv [R]} \|A - B\|^2_\mathrm{F}; \qquad  U \equiv \mathrm{Pol}(R \hf A).
        \end{equation}
    \end{prop}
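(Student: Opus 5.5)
The plan is to compute $\nabla g_2(A)$ explicitly and then identify $A - \nabla g_2(A)$ as the fibre point nearest to $A$.

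\textbf{Step 1: closed form for the step.} Set $S := AA\dg$. By the chain rule and \cref{Eq:GradBures},
\begin{equation}
    \nabla g_2(A) = 2\nabla f_2(S)A = (I - S\iv \# R)A ,
\end{equation}
so that
\begin{equation}
    A - \nabla g_2(A) = (S\iv \# R)\, A .
\end{equation}
It therefore suffices to prove
\begin{equation}
    (S\iv\# R)\, A = R\hf \mathrm{Pol}(R\hf A).
\end{equation}

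\textbf{Step 2: verify this identity via polar decomposition.} Write $M := R\hf A$, which is invertible, and $|M| = (A\dg R A)\hf$. Then
\begin{equation}
    R\hf \mathrm{Pol}(M) = R\hf M |M|\iv = R A (A\dg R A)^{-1/2}.
\end{equation}
The identity to prove is thus $(S\iv\# R)A = RA(A\dg RA)\ihf$. Right-multiplying by $A\iv$, it is equivalent to
\begin{equation}
    S\iv \# R = RA(A\dg RA)\ihf A\iv =: T .
\end{equation}

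\textbf{Step 3: check the Riccati characterization.} Rather than manipulating square roots directly, I use the fact that $S\iv\# R$ is the unique positive definite solution of $T S T = R$. The main obstacle is showing that $T$ is Hermitian positive definite; the Riccati equation itself is then routine.

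For positivity, observe that $T = A^{-\dagger}\,(A\dg R A)\hf\, A\iv$. This follows from
\begin{equation}
    A\dg R A = (A\dg RA)\hf (A\dg RA)\hf ,
\end{equation}
which gives $RA(A\dg RA)\ihf = A^{-\dagger}(A\dg RA)\hf$. The expression $A^{-\dagger}(A\dg R A)\hf A\iv$ is a congruence of a positive definite matrix, so $T > 0$.

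For the Riccati equation,
\begin{equation}
    TST = A^{-\dagger}(A\dg RA)\hf A\iv AA\dg A^{-\dagger}(A\dg RA)\hf A\iv = A^{-\dagger}(A\dg RA)A\iv = R .
\end{equation}
Uniqueness of the Riccati solution then gives $T = S\iv\# R$.

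\textbf{Step 4: the argmin characterization.} Every point of the fibre has the form $B = R\hf W$ with $W \in \mathrm{U}(d)$. Then
\begin{equation}
    \|A - B\|_\rf^2 = \tr[S + R] - 2\Re\tr[W\dg R\hf A],
\end{equation}
and $\Re\tr[W\dg M]$ is maximized exactly at $W = \mathrm{Pol}(M)$, with maximum value $\|M\|_1$. This matches the optimal unitary in \cref{Eq:BuresMin}. Uniqueness holds because $M$ is invertible: equality in $\Re\tr[W\dg U|M|] \le \tr|M|$ forces $W = U$, since $|M| > 0$.
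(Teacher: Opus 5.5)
Your proof is correct, and for the first equality it takes a genuinely different route from the paper's. The paper first works with the principal square root. It writes $R^{1/2}S^{1/2} = V(S^{1/2}RS^{1/2})^{1/2}$ with $V = \mathrm{Pol}(R^{1/2}S^{1/2})$, reads off $S^{-1}\#R = S^{-1/2}V^\dagger R^{1/2} = R^{1/2}VS^{-1/2}$ from the explicit formula for the geometric mean, and then passes to an arbitrary square root $A = S^{1/2}\tilde U$ via the equivariance $\mathrm{Pol}(X\tilde U) = \mathrm{Pol}(X)\tilde U$.

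You instead treat a general $A$ from the outset. You form the candidate $T = RA(A^\dagger RA)^{-1/2}A^{-1}$ and certify $T = S^{-1}\#R$ through uniqueness of the positive definite Riccati solution. The congruence form $T = A^{-\dagger}(A^\dagger RA)^{1/2}A^{-1}$ gives positivity at no cost. Your route needs no polar-factor equivariance, and as a by-product it gives $A^\dagger(S^{-1}\#R)A = (A^\dagger RA)^{1/2}$ for every square root $A$ of $S$. That identity is just the congruence invariance of the geometric mean, since $A^\dagger S^{-1}A = I$. The paper's route, in exchange, stays closer to the closed form of $\#$ and to the optimal unitary $\mathrm{Pol}(P^{1/2}Q^{1/2})$ recorded in the preliminaries.

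For the second equality the two arguments are essentially the same: both reduce to $\max_W \mathrm{Re}\,\mathrm{tr}[W^\dagger M] = \|M\|_1$, attained at the polar factor. Yours additionally proves that the maximizing unitary is unique when $M$ is invertible. The paper's proof only checks that $R^{1/2}U$ attains $\mathrm{F}(S,R)$, so your argument supplies the uniqueness that writing $\argmin$ as a single point tacitly requires, and that \cref{Cor:FrobeniusProjection} later relies on.
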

\end{tbox}
\begin{proof}
    We first establish the second equality.
    It suffices to show $(A, R \hf U)$ is an optimal pairing for $(S, R)$, $S \equiv A A \dg$, i.e.\ $\la R\hf U, A \ra  = \mathrm{F}(R, S)$.
    From $U = \mathrm{Pol}(R \hf A)$,
    \begin{equation}
        R \hf A = U |R \hf A|  \Rightarrow U \dg R \hf A  = \sqrt{A \dg R A} \Rightarrow \la R \hf U, A \ra = \tr [U \dg R \hf A ] = \tr\Lt[\sqrt{A \dg R A}\Rt] = \mathrm{F}(S, R).
    \end{equation}
    Now consider the first equality.
    We claim
    \begin{equation}
        \nabla g_2(A) = 2 \nabla f_2(S) A = [I - R \# S \iv] A = A - R \hf U,
    \end{equation}
    for which we verify the identity $R \# S \iv = R \hf V S \ihf$ with $V \equiv \mathrm{Pol}(R \hf S \hf)$.
    Since $(R \hf S \hf)\dg (R \hf S \hf) = S \hf R S \hf$, the polar decomposition reads $R \hf S \hf = V (S \hf R S \hf)\hf$, so $(S \hf R S \hf)\hf = V \dg R \hf S \hf$ and
    \begin{equation}
        S \iv \# R = S \ihf (S\hf R S \hf)\hf S \ihf = S \ihf V \dg R \hf ,
    \end{equation}
    which equals its adjoint $R \hf V S \ihf$ (as it is Hermitian). 
    Writing $A = S \hf \tilde U$ and using $\mathrm{Pol}(X \tilde U) = \mathrm{Pol}(X) \tilde U$,
    \begin{equation}
        (R \# S \iv) A = R \hf V S \ihf S \hf \tilde U = R \hf \, \mathrm{Pol}(R \hf S \hf \tilde U) = R \hf \, \mathrm{Pol}(R \hf A) = R \hf U. \qedhere
    \end{equation}
\end{proof}

The proposition makes the total-manifold iteration fully explicit, and with it the reading promised in~\cref{Sec:Algorithms}: the update averages optimally paired square roots of the ensemble elements.
Applying~\cref{Prop:GradientPolarForm} termwise to $f_1$, and composing with the twirl in the invariant case, the algorithms become
\begin{equation}
    \begin{aligned}
        A_{t+1} &= \mathrm{clip}^{\mathrm{sv}}_{\sqrt\alpha, \sqrt\beta}\Big[ \sumin w_i \, R_i \hf \, \mathrm{Pol}\Lt(R_i \hf A_t\Rt) \Big], \\
        A_{t+1} &= \mathrm{clip}^{\mathrm{sv}}_{\sqrt\alpha, \sqrt\beta}\Big[ \Phi\Big( R \hf \, \mathrm{Pol}\Lt(R \hf A_t\Rt) \Big) \Big],
    \end{aligned}
\end{equation}
for the barycenter and invariant problems respectively, the latter initialized at some $A_0 \in \mathcal{G}$.
The barycenter iteration thus averages the square roots of the ensemble elements that form an optimal pairing with the current iterate.
Writing $A_t' \equiv A_t - \hat\nabla g(A_t)$ for the argument of the clip---the unprojected step---the stopping rule of~\cref{Thm:ProjectedGDEquivalence} reads $\|A_t - A_t'\|_\rf^2 \leq 2 \delta \kappa^{-3/2}$, which is equivalent to requiring the step-length be short.

\section{The BRT Fixed-Point Algorithm as BW Gradient Descent on the Invariant Submanifold} \label{App:BRT_Algorithm}

In this appendix, we show that the fixed-point algorithm of~\textcite{brahmachari2025fixed} is Bures--Wasserstein gradient descent restricted to the invariant submanifold, contextualizing it within Riemannian optimization.
Beyond the contextualization, the payoff is in showing that the algorithm navigates the flat geometry of the invariant square-root subspace, which enables the seamless application of PL inequalities in~\cref{Sec:Convergence}.

\paragraph{The optimization problem and unconstrained gradient.}
Given a unitary group $\mathcal{U}$ and a non-invariant $R \in \pdd$, the objective is to Bures-project $R$ onto $\mathcal{P} \equiv \cmu \cap \pdd$:
\begin{equation}
    f_2(S) = \frac{1}{2} \rb(R, S) \qaq \nabla f_2(S) = \frac{1}{2}(I - S\iv \# R).
\end{equation}
In general the Riemannian gradient $\grad f_2(S) \notin \mathrm{T}_S \clp \cong \mathcal{H}$; to remain on the invariant submanifold, it must be orthogonally projected, with respect to the BW metric, onto $\mathcal{H}$.
The following lemma shows this projection is the twirl.

\begin{tbox}
    \begin{lemma}[Twirling is Projection] \label{Lem:OrthoProj}
        For all $P \in \clp$ and $X \in \tp \pdd \cong \hrd$, the twirl $\Phi(X)$ is the orthogonal projection of $X$ onto $\tp \clp$ with respect to $\langle \cdot, \cdot \rangle_P^\mathrm{BW}$:
        \begin{equation}
            \Phi(X) \in \tp \clp  \cong \mathcal{H} \qaq \la \Phi(X), X - \Phi(X) \ra_P^\mathrm{BW} = 0.
        \end{equation}
    \end{lemma}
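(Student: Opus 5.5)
We must show two things for $P \in \clp$ and $X \in \hrd$: that $\Phi(X)$ lies in $\mathcal{H}$, and that $X - \Phi(X)$ is BW-orthogonal to $\Phi(X)$ at $P$. Membership is immediate. $\Phi$ maps into $\cmu$ (preliminaries), and $\Phi$ preserves Hermiticity because $\Phi(X)\dg = \frac1n\sumin U_i X\dg U_i\dg = \Phi(X\dg)$. Hence $\Phi(X) \in \hrd \cap \cmu = \mathcal{H}$.

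For orthogonality I use the form $\la Y, Z\ra_P^\mathrm{BW} = \frac12 \Tr[\lp(Y) Z]$. It then suffices to show that $\lp$ maps $\mathcal{H}$ into $\mathcal{H}$ when $P \in \clp$. Let $Y \in \mathcal{H}$ and set $L := \lp(Y)$. Conjugating the Lyapunov equation $Y = LP + PL$ by $U_i$ gives
\begin{equation*}
Y = U_i Y U_i\dg = (U_i L U_i\dg) P + P (U_i L U_i\dg),
\end{equation*}
because $U_i$ commutes with both $Y$ and $P$. By uniqueness of the Hermitian solution of the Lyapunov equation ($P > 0$), $U_i L U_i\dg = L$ for every $i$. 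So $L \in \mathcal{H}$. For a projective representation the phases cancel in the conjugation, so this step is unaffected.

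Now take $Y = \Phi(X)$, which gives $\lp(\Phi(X)) \in \cmu$. Then
\begin{equation*}
\la \Phi(X), X - \Phi(X)\ra_P^\mathrm{BW} = \tfrac12 \Tr\big[\lp(\Phi(X))\,(X - \Phi(X))\big] = \tfrac12 \la \lp(\Phi(X)), X - \Phi(X)\ra.
\end{equation*}
The last expression is a Hilbert--Schmidt inner product (real-valued, since both factors are Hermitian). It vanishes because $\Phi$ is the HS-orthogonal projection onto $\cmu$, so $X - \Phi(X) \perp \cmu$ for every element of $\cmu$, not only for $\Phi(X)$. This follows from the self-adjointness and idempotence of $\Phi$: for $C \in \cmu$ we have $\la C, X - \Phi(X)\ra = \la \Phi(C), X\ra - \la C, \Phi(X)\ra = \la C,\Phi(X)\ra - \la C,\Phi(X)\ra = 0$.

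The one step that needs care is the invariance of $\lp$ under conjugation, that is, that $\lp$ commutes with the symmetry when $P$ is invariant. It rests entirely on uniqueness of the Lyapunov solution. Alternatively one can write $\lp(Y) = \int_0^\infty e^{-tP} Y e^{-tP}\,dt$, and invariance is then evident because $e^{-tP} \in \cmu$.
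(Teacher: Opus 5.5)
Your proof is correct and is essentially the paper's argument: both rest on the Lyapunov solution map $\mathrm{L}_P$ respecting the symmetry when $P \in \clp$ (by uniqueness of the Lyapunov solution), combined with the Hilbert--Schmidt self-adjointness and idempotence of $\Phi$. The difference is cosmetic. The paper applies $\Phi$ to the Lyapunov equation to get $\Phi \circ \mathrm{L}_P = \mathrm{L}_P \circ \Phi$, while you conjugate by each $U_i$ to show $\mathrm{L}_P(\mathcal{H}) \subseteq \mathcal{H}$; your version has the small bonus of showing that $X - \Phi(X)$ is BW-orthogonal to all of $\mathcal{H}$, which is what ``orthogonal projection'' genuinely requires, and not only to $\Phi(X)$ as in the displayed identity.
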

\end{tbox}
\begin{proof}
    $\Phi(X) \in \mathcal{H}$ by definition.
    Applying $\Phi$ to the Lyapunov equation $X = P \lp(X) + \lp(X) P$, and using that $P$ commutes with every element of $\mathcal{U}$, yields $\Phi(X) = P \Phi(\lp(X)) + \Phi(\lp(X)) P$, hence $\Phi(\lp(X)) = \lp(\Phi(X))$.
    Consequently
    \begin{equation}
        \langle \Phi(X), X - \Phi(X) \rangle_P^\mathrm{BW} = \frac12 \tr[\lp(\Phi(X)) (X - \Phi(X))] =  \frac{1}{2} \tr [\Phi(\lp(X)) (X - \Phi(X))] = 0,
    \end{equation}
    since $\Phi$ is idempotent and self-adjoint with respect to the HS inner product.
\end{proof}

\paragraph{The invariant submanifold is totally geodesic.}
\cref{Lem:OrthoProj} identifies the Riemannian gradient of $f_2$ restricted to $\clp$ as the twirled gradient, but RGD \emph{on} the submanifold requires the exponential map of $\clp$ with the induced metric.
It turns out one may simply use $\mathrm{Exp}^{\mathrm{BW}}_S$, because $\mathcal{P}$ is a \textit{totally geodesic} submanifold.
Note that it does not suffice to verify that the endpoint of the step lies in $\clp$: a curve may leave the submanifold and return to it, in which case it would not be a genuine geodesic of the submanifold.
Recall that a Riemannian submanifold $\mathcal{N} \subseteq \mathcal{M}$ is \textit{totally geodesic} if every geodesic of $\mathcal{M}$ starting tangent to $\mathcal{N}$ remains in $\mathcal{N}$ throughout its interval of existence.

\begin{tbox}
    \begin{lemma}[The invariant submanifold is totally geodesic] \label{Lem:TotallyGeodesic}
        $\clp$ is a totally geodesic submanifold of the BW manifold; consequently
        \begin{equation}
            \mathrm{Exp}^{\clp}_P = \mathrm{Exp}^{\mathrm{BW}}_P \big|_{\tp \clp}, \qquad \text{for all } P \in \clp,
        \end{equation}
        wherever either side is defined.
    \end{lemma}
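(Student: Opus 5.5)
The plan is to use the explicit form of BW geodesics given by the exponential map \cref{Eq:BWExpMap}. For $P \in \clp$ and $X \in \tp\clp \cong \mathcal{H}$, the BW geodesic with initial point $P$ and velocity $X$ is
\begin{equation*}
    \gamma(t) = \mathrm{Exp}^{\mathrm{BW}}_P[tX] = \Lt[I + t\,\lp(X)\Rt] P \Lt[I + t\,\lp(X)\Rt],
\end{equation*}
defined for all $t$ with $I + t\lp(X) > 0$. Its interval of existence is exactly this set, because beyond it the curve leaves $\pdd$ and ceases to be a geodesic; in the quotient picture it is the image of the straight line $t \mapsto (I + t\lp(X))P\hf$ in $\gld$, which is horizontal.

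The key step is to show that $\lp(X) \in \cmu$ whenever $P \in \clp$ and $X \in \mathcal{H}$. I would take this from the proof of \cref{Lem:OrthoProj}, which gives $\Phi(\lp(X)) = \lp(\Phi(X)) = \lp(X)$, so $\lp(X)$ lies in the range of $\Phi$, namely $\cmu$. A direct argument also works. Conjugating the Lyapunov equation $X = \lp(X)P + P\lp(X)$ by $U_i$, and using $[U_i, P] = [U_i, X] = 0$, shows that $U_i \lp(X) U_i\dg$ solves the same equation. By uniqueness of the Lyapunov solution for $P > 0$, we get $U_i \lp(X) U_i\dg = \lp(X)$.

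Given this, $M_t := I + t\lp(X)$ lies in $\cmu$, and $\cmu$ is an algebra, so $\gamma(t) = M_t P M_t \in \cmu$. Since $\gamma(t) > 0$ on the existence interval, $\gamma(t) \in \clp$ for all such $t$. Hence every geodesic of the ambient BW manifold that starts tangent to $\clp$ stays in $\clp$, which is the definition of totally geodesic.

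For the consequence, I would invoke the standard fact that on a totally geodesic submanifold with the induced metric, the ambient geodesics starting tangent to it are geodesics of the submanifold. The reason is that their acceleration vanishes, so its tangential part vanishes too, and the induced Levi-Civita connection is the tangential projection of the ambient one. Uniqueness of geodesics with given initial data then gives $\mathrm{Exp}^{\clp}_P = \mathrm{Exp}^{\mathrm{BW}}_P|_{\tp\clp}$ on the common domain. The one point needing care is the existence interval: a geodesic of $\clp$ cannot be extended past the point where the ambient one ceases to exist, because it is also an ambient geodesic. The main obstacle is therefore just the commutation $\lp(X) \in \cmu$, and that is handled by the Lyapunov uniqueness argument above.
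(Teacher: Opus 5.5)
Your proposal is correct and follows essentially the same route as the paper: both reduce the claim to $\lp(X) \in \cmu$ for $P \in \clp$, $X \in \mathcal{H}$ (the paper via $\Phi \circ \lp = \lp \circ \Phi$ from the proof of \cref{Lem:OrthoProj}, which you also cite, with your direct Lyapunov-uniqueness argument as an equivalent alternative), and then use that $\cmu$ is a unital algebra to get $[I + t\lp(X)]\,P\,[I + t\lp(X)] \in \clp$ on the whole existence interval. Your additional justification of $\mathrm{Exp}^{\clp}_P = \mathrm{Exp}^{\mathrm{BW}}_P|_{\tp\clp}$, via the induced connection and the matching of existence intervals, spells out what the paper compresses into the word ``consequently''.
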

\end{tbox}
\begin{proof}
    Let $P \in \clp$ and $X \in \mathcal{H} \cap \mathrm{dom}(\mathrm{Exp}_P^\mathrm{BW})$; recall $X \in \mathrm{dom}(\mathrm{Exp}_P^\mathrm{BW})$ if and only if $I + \lp(X) > 0$~(\cref{Eq:BWExpMap}).
    For any $t$ with $I + t \lp(X) > 0$ (in particular all $t \in [0,1]$, since there $I + t \lp(X) = (1-t) I + t [I + \lp(X)] > 0$) we have
    \begin{equation}
        \begin{aligned}
            \gamma(t) \equiv \mathrm{Exp}^\mathrm{BW}_P [t X]
            &= \pi\Lt[\Lt(I + t \lp(X)\Rt) P \hf\Rt]
            = \pi\Lt[\Lt(I + t \lp(\Phi(X))\Rt) P\hf\Rt]  \\
            &= \pi\Lt[\Lt(I + t \Phi(\lp(X))\Rt) P\hf\Rt]
            = \pi\Lt[\Lt[\Phi(I + t \lp(X))\Rt] P \hf\Rt]
        \end{aligned}
    \end{equation}
    where we used, in order: $\Phi(X) = X$ for invariant $X$; $\Phi \circ \lp = \lp \circ \Phi$ for invariant $P$ (proof of~\cref{Lem:OrthoProj}); unitality of $\Phi$; and closure of the subalgebra $\cmu$ under products.
    Combined with the domain characterization, $\gamma(t) \in \pdd \cap \cmu = \clp$ for every $t$ in the interval of existence, which is precisely the statement that $\clp$ is totally geodesic.
    Consequently the exponential maps agree on $\tp \clp$.
\end{proof}

\paragraph{Deriving the submanifold update rule.}
By~\cref{Lem:OrthoProj,Lem:TotallyGeodesic}, RGD on $\clp$ with step size $\eta$ is
\begin{equation} \label{Eq:SubmanifoldUpdateRule}
    S \mapsto  [I - 2 \eta \Phi(\nabla f_2(S))] S [I - 2 \eta \Phi(\nabla f_2(S))],
\end{equation}
with projected gradient $\Phi(\nabla f_2(S)) = \frac{1}{2}(I - \Phi(S\iv \# R))$.
Indeed, the step along the projected Riemannian gradient is $\mathrm{Exp}^\mathrm{BW}_S[-\eta\, \Phi(\grad f_2(S))] = \pi\Lt[[I - \eta \lp(\Phi(\grad f_2(S)))]\, S \hf\Rt]$, and since $\lp \circ \Phi = \Phi \circ \lp$ for invariant $S$ (proof of~\cref{Lem:OrthoProj}) while $\nabla f_2 = \frac12 \lp(\grad f_2)$, we have $\lp(\Phi(\grad f_2(S))) = \Phi(\lp(\grad f_2(S))) = 2 \Phi(\nabla f_2(S))$, which yields the displayed update.
At $\eta = 1$ the inner factor becomes $\Phi(S\iv \# R)$, recovering the fixed-point map of~\textcite{brahmachari2025fixed}:
\begin{equation} \label{Eq:BRTFixedPoint}
    S \mapsto \Phi(S\iv \# R)\, S\, \Phi(S\iv \# R) \equiv \rmk(S).
\end{equation}

\paragraph{Equivalence in the square-root manifold.}
The update gains further intuition upstairs: it is projected Euclidean GD on the invariant subspace $\mathcal{G} = \gld \cap \cmu$.
Initializing at an invariant square root $S\hf \in \mathcal{G}$, the twirled lifted gradient is $\Phi(\nabla g_2(S\hf)) = 2 \Phi(\nabla f_2(S)) S\hf$ (as $S\hf$ commutes with $\mathcal{U}$), and the projected Euclidean step
\begin{equation}
    A' = S\hf - \eta \Phi(\nabla g_2(S\hf)) = \Lt[ I - 2\eta \Phi(\nabla f_2(S)) \Rt] S\hf
\end{equation}
maps under $\pi$ exactly to~\cref{Eq:SubmanifoldUpdateRule}.

\end{document}